\documentclass[11pt]{article}
\usepackage[margin=1in]{geometry}
\usepackage{amsmath,amssymb,amsthm,mathtools,bm,booktabs,array,microtype,float}
\usepackage{tikz}
\usetikzlibrary{arrows.meta,positioning,cd}
\usepackage[colorlinks=true,allcolors=blue!55!black]{hyperref}

\newtheorem{definition}{Definition}[section]
\newtheorem{assumption}[definition]{Assumption}
\newtheorem{theorem}[definition]{Theorem}
\newtheorem{proposition}[definition]{Proposition}
\newtheorem{corollary}[definition]{Corollary}

\newtheorem{remark}[definition]{Remark}
\newcommand{\Prob}{\mathcal P}
\newcommand{\Ctxt}{\mathcal C}
\newcommand{\Words}{\mathcal W}
\newcommand{\Trans}{\mathfrak T}
\newcommand{\Bmap}{\mathbf B}
\newcommand{\Kern}{\mathsf K}
\newcommand{\Dlex}{\mathsf D}
\newcommand{\Id}{\mathrm I}

\newcommand{\Exp}{\mathsf E}

\title{Foundations of Stochastic Lexical Calculus\\
\large Semantic Descent and Random Dynamics on Probability Simplices}
\author{Matthew F. Dixon\\Artificial Intelligence Finance Institute (AIFI)\\
\texttt{matthew.dixon@aifi.edu}}
\date{Working manuscript, July 2026}

\begin{document}
\maketitle

\begin{abstract}
Large language models produce prompt-dependent probabilities over words,
whereas scientific systems require uncertainty over meaningful states that
can be updated as evidence arrives.  We develop an observable framework for
determining when language-derived probabilities support such a sequential
state representation.  Theoretically, we define typed measurable
transformations of contextual language, construct a minimal closed
representation, and give necessary and sufficient conditions for semantic
updates to exist uniquely.  We bound irreducible nonclosure and accumulated
error, and under average contraction prove existence, uniqueness and
stability of an external random recursion on a probability simplex.  These
results define a stochastic lexical calculus without attributing an internal
calculus to the language model.  Empirically, frozen experiments test the
observable implications.  Raw prompt-conditioned probabilities fail the
prespecified invariance gate; after prompt-specific calibration, a common
three-state representation passes the stability gates and covers $28$ of
$30$ untouched eight-step paths, or $0.933$ at nominal level $0.90$.
Accordingly, language probabilities support a stochastic state only
conditionally on verified closure, stability and coverage within a declared
operating domain.
\end{abstract}

\section{The problem}
Calculus begins by specifying objects, admissible changes, observables, and laws relating local change to composition and accumulation. Natural language has the same need. A word occurrence can be replaced, a qualification inserted, a clause negated, two passages concatenated, or a collection of claims reordered. These operations act on contextual expressions rather than on isolated dictionary entries, and their order may alter meaning.

Existing linguistic calculi primarily formalize grammatical derivability, typed composition, denotation, or symbolic rewriting. Discrete calculus supplies general difference operators, but it does not determine which transformations preserve linguistic information, which contextual distinctions matter, or when a semantic representation remains closed under subsequent changes. We address that missing layer.

This gap has become operational rather than merely terminological. A fitted language model assigns probabilities to lexical continuations, while users reason about coarser meanings such as factual alternatives, risk orientations, diagnoses, intentions, or confidence levels. Recent work measures uncertainty by grouping semantically equivalent generations, asking models for confidence, calibrating candidate-token probabilities, or training linguistically calibrated responses \cite{band2024,farquhar2024,kadavath2022,kuhn2023,kumar2024,tian2023}. These methods establish that language-derived uncertainty can be useful, but they leave a prior mathematical question unresolved: when does a probability-valued summary of language retain exactly the distinctions required by later information transformations? Without such closure, successive summaries need not form a state process, even when each summary is individually well calibrated.

The primitive object in this paper is contextual language. Probability laws, model outputs, and declared semantic states are examples of lexical observables. This ordering matters: the calculus is not defined by a particular language model, tokenization, ontology, or Bayesian posterior. Those enter later as representations on which the general laws can be tested.

The motivating question is simple. If two contexts are treated as the same state, must every relevant linguistic intervention affect them in the same way? If not, the state has discarded transformation-relevant lexical information. No closed recursion on that state is justified. This obstruction, and the canonical representation that removes it, organize the theory.

For example, suppose two market reports are both summarized by the state
probabilities $(0.6,0.3,0.1)$ for risk-on, mixed and risk-off.  In the first
report, the favorable assessment is driven by falling inflation; in the
second, it is driven by improving corporate earnings.  Appending the same new
observation---``inflation unexpectedly rises''---may change the first
assessment much more than the second.  The identical present state therefore
does not determine the next state.  A recursion using only those three
probabilities has omitted the distinction needed for the update.

\begin{figure}[t]
\centering
\begin{tikzpicture}[
  >=Latex,
  context/.style={draw=blue!55!black,fill=blue!6,rounded corners,
    align=center,text width=42mm,minimum height=15mm,font=\small},
  state/.style={draw=orange!70!black,fill=orange!9,rounded corners,
    align=center,text width=42mm,minimum height=15mm,font=\small},
  lab/.style={font=\small,align=center}
]
\node[context] (c) at (0,2.5) {context $c$\\evidence in one presentation};
\node[context] (tc) at (7.4,2.5) {transformed context $Tc$\\rewording or new evidence};
\node[state] (b) at (0,0) {retained state $B(c)$\\semantic probabilities};
\node[state] (bt) at (7.4,0) {next state $B(Tc)$\\semantic probabilities};
\draw[->,thick] (c) -- node[above,lab]{typed lexical\\transformation $T$} (tc);
\draw[->,thick] (c) -- node[left,lab]{representation\\$B$} (b);
\draw[->,thick] (tc) -- node[right,lab]{representation\\$B$} (bt);
\draw[->,thick,dashed] (b) -- node[above,font=\footnotesize,fill=white,inner sep=2pt]
{state update $G_T$} (bt);
\end{tikzpicture}
\caption{The central closure problem.  The upper path transforms the full
context and then measures its retained semantic state.  A recursion on the
lower state space is legitimate precisely when the dashed map is
representative independent, so that $B(Tc)=G_T\{B(c)\}$.  Rewordings that
preserve the declared information should leave the state stable; genuine new
evidence may move it, but must do so through the retained state if that state
is to support autonomous sequential modelling.}
\label{fig:closure-problem}
\end{figure}

Figure~\ref{fig:closure-problem} separates two operations that are often
conflated.  A presentation change alters the words while preserving the
declared information; an evidence update changes the information itself.  A
useful semantic state should be stable to the former and responsive to the
latter.  More strongly, if it is to be updated recursively, its response to
every admitted transformation must be determined by the retained state rather
than by lexical detail that the representation has already discarded.

Why does this lead to a \emph{stochastic lexical calculus}?  In a deployed
system, neither evidence nor the linguistic transformations through which it
is presented arrive as a fixed deterministic sequence.  Reports, questions,
qualifications and corrections arrive over time; their order and content are
uncertain; and each may alter a probability distribution over the declared
semantic states.  For three states, for example, the evolving quantity is a
point $(p_1,p_2,p_3)$ on the probability simplex, not a single response.
Successive lexical transformations therefore generate a random path on that
simplex.

Calling this path a stochastic state process requires more than observing
that its coordinates move.  The same retained state must imply the same
successor state under a declared transformation; otherwise the apparent
recursion still depends on lexical information that has been discarded.  Once
this closure condition holds, finite lexical differences describe one-step
movement, cocycle laws describe composition, and contraction and perturbation
bounds determine whether approximation error dissipates or accumulates.
These are the components of the stochastic lexical calculus developed here.

The terminology is intentionally discrete and representation based.  We do
not begin by imposing a continuous-time diffusion, stochastic integral or
It\^o formula on language.  Rather, we derive a random dynamical system from
typed contextual transformations and prove when it descends to a stable
simplex-valued recursion.  Continuous-time limits, when scientifically
appropriate, would come only after this elementary closure problem has been
solved.

\subsection{Contributions and boundary of the claim}

The term \emph{calculus} has long been used in linguistics and logic. Lambek calculus formalizes grammatical composition; lambda calculi support formal semantics; differential lambda calculus differentiates computational terms; Brzozowski derivatives act on formal languages; and discrete calculus supplies finite-difference laws. Earlier works have also used the phrases \emph{calculus of words}, \emph{lexical calculi}, and \emph{calculus of language}. We therefore make no claim to have introduced mathematical reasoning about words.

\textbf{The focal contribution is structural and stochastic, not merely
lexical.}  The problem lies at the intersection of several established
mathematical structures in computer science: typed semantics specifies legal
language transformations, partial-map categories represent operations that
are not defined on every context, behavioral equivalence determines which
contexts may share a state, and probability kernels describe random
evolution.  The deterministic transformation algebra is therefore a
necessary foundation, but it is not the paper's endpoint.  Our main question
is when random arrivals of contextual evidence induce a well-defined
stochastic process on semantic probability simplices.  This requires a bridge
that existing lexical calculi and generic stochastic systems do not generally
supply: a measurable quotient must preserve enough future behavior for random
transformations to descend, compose and remain stable through time.  The
existence, causal uniqueness, synchronization and perturbation results for
that descended process turn the lexical foundation into a stochastic lexical
calculus and place language-derived uncertainty within a compositional theory
of state-based computation.

The closest modern lines of work are coalgebraic
behavioral metrics and fibrational proof techniques
\cite{baldan2018,bonchi2023}, restriction categories for partial computation
\cite{cockettlack2002,cockett2025}, categorical probability and Markov
categories \cite{fritz2022,jacobs2023}, and renewed categorical treatments of
natural-language semantics \cite{chatzikyriakidis2026,cousin2023}.  These
theories explain behavior, partiality, stochastic composition and linguistic
structure at a high level of generality.  They do not by themselves specify
which transformations of a contextual record preserve statistical
information, how a probability law over verbal continuations is coarsened to
an application state, or how failure of that coarsening is exposed by a
commutation defect.  The present paper occupies that interface rather than
claiming an alternative foundation to those literatures.

The contribution claimed here is therefore more specific. We formulate a
stochastic lexical calculus in which:
\begin{enumerate}
\item the primitive domain consists of contextual lexical expressions;
\item typed transformations are the directions of change;
\item equivalence is operational and relative to declared observables and interventions;
\item the canonical state is constructed from future transform--observation behavior;
\item a proposed semantic representation is valid only when transformations descend through it;
\item random descended transformations define a simplex-valued recursion with
explicit existence and stability conditions; and
\item violations of the calculus laws are measurable rather than assumed away.
\end{enumerate}

To the best of our literature search, this combination---particularly terminal
minimality for typed partial measurable language transformations and testable
semantic descent---has not been developed for contextual natural language.
The constituent quotient, factorization, random-iteration and finite-difference
arguments are standard.  Our contribution is their lexical construction and
the resulting separation of three questions: what information a representation
retains, whether admitted transformations descend through it, and whether the
descended recursion is stable.

The theoretical development is organized around three principal results.
First, the behavioral
signature gives the smallest closed observable representation in the declared
category.  Second, exact descent is equivalent to saturation of transformation
domains and preservation of representation fibres; approximate descent has an
irreducible fibre-diameter error.  Third, once descent has been established,
standard average-contraction arguments yield a unique causal stochastic
recursion and an explicit propagation bound for descent error.  Statistical
identification and finite-sample recovery are included only to make the
observable implications testable; they are not presented as a new general
theory of semiparametric estimation.  Nor do the results imply that a fitted
language model literally contains a Bayesian belief state.

The exposition follows the same order as the certification problem.
Section~2 defines contextual transformations, information equivalence and the
canonical behavioral representation.  Section~3 develops semantic descent,
error propagation and finite examples.  Section~4 constructs the stochastic
lexical calculus on probability simplices.  Section~5 reports the completed
bounded validation, including the gates that failed, and Section~6 concludes.
The appendices contain the categorical foundations, statistical recovery
results and extended comparison with related mathematical structures.

\section{Contextual transformation systems and information equivalence}

\subsection{Contextual lexical spaces and transformations}

Before asking whether a semantic state is stable, we must say what is being
changed and what can be observed.  This is less trivial for language than for
a vector in Euclidean space.  Replacing a number in a structured record,
reordering two sentences, and inserting a negation are all string operations,
but they have different scientific roles.  The first may change evidence, the
second may preserve it, and the third may reverse meaning.  We therefore make
the transformation type and its legal domain part of the mathematical object.

For example, consider the record ``temperature is 39 C; oxygen saturation is
falling.''  Replacing 39 by 37 changes the measured evidence.  Reversing the
order of the two clauses can preserve the same evidence.  Inserting
``not'' before ``falling'' changes the second observation to its negation.
Although each operation edits a string, only the reordering is naturally
treated as information preserving for this example.

With these distinctions in place, we now formalize the contextual language
space, its observable outputs, and the typed transformations that act upon it.
Let $\Sigma$ be a finite or countable vocabulary and let $\Sigma^*$ denote the free monoid of finite strings under concatenation. The admissible context space $\Ctxt\subseteq\Sigma^*$ is equipped with a sigma-algebra $\mathcal A$. The continuation space $\Words\subseteq\Sigma^*$ has sigma-algebra $\mathcal G$.

An isolated word type is generally not a sufficient unit: the occurrence of \emph{strong} in ``strong earnings'' differs from its occurrence in ``strong inflation.'' We therefore take a lexical occurrence to be a triple $(c,i,w_i)$ consisting of a context, a location, and the expression occupying that location. Phrase spans and structured evidence records are included by allowing $i$ to index a finite interval or a typed field.

\begin{definition}[Contextual lexical system]
A contextual lexical system is a tuple
\[
 \mathfrak L=(\Sigma,\Ctxt,\Trans,\mathcal O,\simeq),
\]
where $\Trans$ is a category of typed partial transformations of contexts, $\mathcal O$ is a family of measurable lexical observables, and $\simeq$ is a declared information or semantic equivalence relation. An observable is a map $F:\Ctxt\to V_F$ into a specified measurable, metric, algebraic, or normed codomain.
\end{definition}

Examples of observables include the presence of a phrase, the provenance record recovered from a passage, a formal denotation, a distribution over possible continuations, a semantic orientation, or a probability composition over declared states. The theory does not require every observable to be probabilistic.

\begin{definition}[Primitive lexical edits]
For a contextual occurrence or span, primitive edits include typed substitution $R_{u\to v}$, insertion $I_v$, deletion $D_u$, permutation $P_\pi$, qualification $Q_v$, and negation $N$. Their legal domains are part of their definitions. Composite transformations are finite well-typed paths of primitive edits.
\end{definition}

The category formulation prevents arbitrary string manipulation from being treated as meaningful calculus. For example, a provenance-preserving reordering and a chronology-reversing reordering have different types even when both are permutations of the same words.

\subsection{Probability-valued lexical observables}

The following important representation is a special case, not the definition of the calculus.

\begin{definition}[Observable lexical kernel]
An observable lexical kernel is a Markov kernel
\[
 \Kern:\Ctxt\times\mathcal G\longrightarrow[0,1],
\]
such that $A\mapsto\Kern(c,A)$ is a probability measure for every $c\in\Ctxt$ and $c\mapsto\Kern(c,A)$ is measurable for every $A\in\mathcal G$. We write $\Kern_c:=\Kern(c,\cdot)\in\Prob(\Words)$.
\end{definition}

The kernel is the externally observable conditional distribution over verbal continuations. An observation mechanism may expose only a coarsening or truncation of this law; that case is represented by a separate observation operator and must not be silently identified with $\Kern_c$.

\begin{definition}[Semantic map]
For $K\ge2$, a semantic map is a measurable function
\[
 \Pi:\Prob(\Words)\longrightarrow\Delta^{K-1},
 \qquad
 \Delta^{K-1}:=\{\bm p\in[0,1]^K:\bm 1^\top\bm p=1\}.
\]
The observable semantic state is
\[
 \Bmap:=\Pi\circ\Kern:\Ctxt\longrightarrow\Delta^{K-1}.
\]
\end{definition}

For a measurable partition $A_1,\ldots,A_K$ of $\Words$, the simplest pushforward is
\[
\Pi(\mu)=\bigl(\mu(A_1),\ldots,\mu(A_K)\bigr).
\]
Calibrated maps may be nonlinear; none of the descent results below requires linearity.

\begin{definition}[Typed lexical transformation]
A lexical transformation is a measurable partial map $T:\operatorname{dom}(T)\subseteq\Ctxt\to\Ctxt$ supplied with a declared type, such as insertion, deletion, substitution, negation, permutation, paraphrase, or evidence update. A collection $\Trans$ is composition-closed whenever domains are compatible and contains the identity transformation.
\end{definition}

The type is part of the experimental intervention. The theory does not infer from the model alone whether a transformation is a paraphrase or a contradiction.

\subsection{A concrete foundation for information equivalence}

The word ``equivalent'' is dangerous unless the experiment relative to which
it is asserted has been fixed.  For example, ``temperature 39 C; oxygen
falling'' and ``oxygen falling; temperature 39 C'' may encode the same
clinical record, although a fitted language system can assign different
continuation probabilities to the two strings.  By contrast, replacing
``falling'' with ``stable'' changes the record.  Similarity of embeddings or
agreement of modal responses cannot distinguish these cases by itself.

We therefore begin with Blackwell's comparison of statistical experiments
\cite{blackwell1953} and then move from experiment-level equivalence to a
realized-record criterion that can be constructed in data.  This places prompt
perturbations within a familiar decision-theoretic framework: two
presentations count as information equivalent only when one contains no
state-relevant information absent from the other.

Information equivalence cannot mean that two passages have similar embeddings, share vocabulary, or receive the same label from a language model. It must be defined relative to a statistical question. Let $(\mathcal X,\mathcal H)$ be a measurable state space and let an information experiment be a Markov kernel
\[
 \Exp:\mathcal X\times\mathcal Y\longrightarrow[0,1],
\]
where $\Exp(x,\cdot)$ is the distribution of an observable record $Y$ under state $x$. A rendering map $r:\mathcal Y\to\Ctxt$ converts a structured record into language while retaining its source identifiers and values.

\begin{definition}[Blackwell information equivalence \cite{blackwell1953}]\label{def:blackwell}
Two experiments $\Exp_1:\mathcal X\rightsquigarrow\mathcal Y_1$ and $\Exp_2:\mathcal X\rightsquigarrow\mathcal Y_2$ are information equivalent, written $\Exp_1\simeq_{\mathrm B}\Exp_2$, if there exist state-independent Markov kernels $Q_{12}$ and $Q_{21}$ such that
\[
 \Exp_2=\Exp_1Q_{12},\qquad \Exp_1=\Exp_2Q_{21}.
\]
\end{definition}

This is the standard equivalence induced by Blackwell's comparison of statistical experiments: each observation can be simulated from the other without access to the state. Consequently the two experiments have the same attainable Bayes risk for every bounded decision problem.

For a fixed realized record, a more operational criterion is available. Suppose $\mathcal X=\{1,\ldots,K\}$ and the experiment is dominated by a measure $\nu$, with likelihood $e_x(y)$. Define the likelihood ray
\[
 [\bm e(y)]:=\{a(e_1(y),\ldots,e_K(y)):a>0\}.
\]

\begin{definition}[Realized information equivalence]\label{def:ray}
Two records $y,y'$ are likelihood equivalent, written $y\simeq_{\mathrm L}y'$, when
\[
 e_x(y')=a(y,y')e_x(y)\quad\text{for every }x\in\mathcal X
\]
for some scalar $a(y,y')>0$ independent of $x$.
\end{definition}

\begin{proposition}[Posterior characterization; cf. Blackwell equivalence \cite{blackwell1953}]\label{prop:posterior-equivalence}
For strictly positive likelihood vectors, $y\simeq_{\mathrm L}y'$ if and only if the Bayesian posterior after $y$ equals the posterior after $y'$ for every full-support prior on $\mathcal X$.
\end{proposition}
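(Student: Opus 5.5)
The plan is to write the posterior after $y$ under a prior $\bm\pi$ with $\pi_x>0$ for all $x$ explicitly via Bayes' rule,
\[
 \pi(x\mid y)=\frac{\pi_x e_x(y)}{\sum_{z=1}^K \pi_z e_z(y)},
\]
which is well defined because the likelihood vectors are strictly positive, so the denominator is positive. Both directions then reduce to manipulating this formula.

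For the forward direction, assume $y\simeq_{\mathrm L}y'$ with scalar $a=a(y,y')>0$, and substitute $e_x(y')=a\,e_x(y)$ into the posterior after $y'$. The factor $a$ appears in every term of both numerator and denominator and cancels. This gives $\pi(x\mid y')=\pi(x\mid y)$ for every $x$ and every full-support prior.

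For the converse, assume the posteriors agree for every full-support prior. It is enough to use a single prior, and I would take the uniform prior $\pi_x=1/K$. Equality of posteriors then says
\[
 \frac{e_x(y)}{\sum_z e_z(y)}=\frac{e_x(y')}{\sum_z e_z(y')}\quad\text{for all }x,
\]
so $e_x(y')=a\,e_x(y)$ with
\[
 a=\frac{\sum_z e_z(y')}{\sum_z e_z(y)}.
\]
This $a$ is positive and does not depend on $x$, which is exactly Definition~\ref{def:ray}.

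I would note in a remark that any one full-support prior already suffices, since its weights cancel in the ratio $\pi(x\mid y')/\pi(x\mid y)$. The only step needing care is the positivity hypothesis. Strict positivity guarantees that the normalizers are nonzero and that $a>0$. Without it, a zero likelihood could make a posterior undefined, and the ray comparison would need separate handling of supports. So the main obstacle is bookkeeping rather than substance. Under the stated strictly positive assumption, I would simply point out where positivity is used in each direction. Finally, I would remark on the connection to Definition~\ref{def:blackwell}: the posterior is a sufficient statistic, and likelihood rays index its values.
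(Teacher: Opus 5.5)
Your proof is correct and is essentially the paper's argument: the forward direction is the same cancellation of the common scalar $a$, and for the converse the paper compares pairwise posterior odds (so the prior odds cancel and $e_x(y)/e_j(y)=e_x(y')/e_j(y')$), whereas you read off $a=\sum_z e_z(y')/\sum_z e_z(y)$ directly from the uniform prior. Both rest on the observation in your remark that a single full-support prior already forces proportionality.
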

\begin{proof}
Proportional likelihoods cancel to the same normalized posterior. Conversely, equality under every full-support prior implies equality of every pairwise posterior odds; hence $e_x(y)/e_j(y)=e_x(y')/e_j(y')$ for all $x,j$, which is equivalent to proportionality.
\end{proof}

\begin{definition}[Provenance-preserving rendering equivalence]
Let $y$ be a structured evidence record carrying values, units, times, sources, and target definitions. Two contexts $c=r(y)$ and $c'=r'(y)$ are provenance-preserving renderings of the same record when both renderings are injective on the declared information fields and a deterministic audit map recovers $y$ from either context. We write $c\simeq_{\mathrm P}c'$.
\end{definition}

Provenance-preserving equivalence is deliberately stronger than ordinary paraphrase. It gives an experimentally constructible subset of Blackwell-equivalent presentations: both strings deterministically encode the same statistical record. Evidence order, bullet versus prose format, and equivalent numerical notation can be tested this way without claiming that unrestricted natural-language paraphrases are automatically equivalent.

Let $\simeq_{\mathcal I}$ denote the equivalence relation selected for an experiment, where $\mathcal I$ records the target state, decision class, horizon, and provenance rules. The quotient
\[
 \Ctxt_{\mathcal I}:=\Ctxt/{\simeq_{\mathcal I}}
\]
is the information-context space. The quotient is therefore task relative, not a universal partition of language.

\begin{definition}[Information-preserving transformation]
A transformation $T$ is information preserving on $\Ctxt_0$ when
\[
 c\simeq_{\mathcal I}Tc
\quad\text{for every }c\in\Ctxt_0.
\]
It is provenance preserving when the equivalence can be certified by recovery of the same structured record.
\end{definition}

This separates two empirical questions. Whether $T$ preserves information is certified from the data-generating experiment and provenance; whether the language service preserves the corresponding semantic state is tested through $\Bmap(Tc)\approx\Bmap(c)$.

\subsubsection{Lexical observational equivalence}

Blackwell and likelihood equivalence require a specified statistical experiment. A lexical calculus also needs an internal analogue that applies to nonprobabilistic observables.

\begin{definition}[Observational lexical equivalence]\label{def:lexequiv}
For observables $\mathcal O_0\subseteq\mathcal O$ and transformations $\Trans_0\subseteq\Trans$, define
\[
 c\simeq_{\mathcal O_0,\Trans_0}c'
 \quad\Longleftrightarrow\quad
 F(Tc)=F(Tc')
\]
for every $F\in\mathcal O_0$ and every $T\in\Trans_0$ for which both sides are defined.
\end{definition}

Two contexts are lexically equivalent when no admitted observable, either now or after an admitted intervention, distinguishes them. This is relative to a declared experimental vocabulary. If $\mathcal O_0$ contains all bounded decision losses generated by an experiment, it recovers decision-theoretic equivalence. If it contains all posterior-odds observables, it recovers likelihood-ray equivalence. If it contains an auditable record-recovery map, it respects provenance equivalence.

\begin{proposition}[Compatibility hierarchy]\label{prop:hierarchy}
Let $c=r(y)$ and $c'=r'(y')$ be renderings of observations from a dominated finite-state experiment.
\begin{enumerate}
\item Provenance equivalence of renderings of the same record implies likelihood equivalence.
\item Likelihood equivalence implies equality of every Bayesian posterior observable for every full-support prior.
\item Blackwell-equivalent experiments induce the same attainable risk for every bounded decision problem.
\item None of the converses holds without additional injectivity or richness assumptions.
\end{enumerate}
\end{proposition}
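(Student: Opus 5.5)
The plan is to prove the four items in order. Each of the first three reduces to an earlier definition or to Proposition~\ref{prop:posterior-equivalence}, and the fourth is settled by explicit finite counterexamples.

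For item~1, suppose $c=r(y)$ and $c'=r'(y')$ are provenance-preserving renderings of the same record. The deterministic audit map then recovers the same record from both contexts, so $y=y'$ as elements of $\mathcal Y$. The likelihood vectors $\bm e(y)$ and $\bm e(y')$ are therefore identical, and Definition~\ref{def:ray} holds with $a(y,y')=1$. I will state explicitly that likelihood equivalence of contexts means likelihood equivalence of the recovered records, since $e_x$ is defined on $\mathcal Y$.

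For item~2, take the likelihoods to be strictly positive, or more generally non-vanishing on the relevant states. Write the posterior after $y$ under a prior $\bm\pi$ with full support as
\[
\pi_x e_x(y)\Big/\sum_j \pi_j e_j(y).
\]
Substituting $e_x(y')=a\,e_x(y)$ cancels $a$ from numerator and denominator. This is exactly the forward direction of Proposition~\ref{prop:posterior-equivalence}, so any observable that is a function of the posterior agrees on $y$ and $y'$.

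For item~3, suppose $\Exp_2=\Exp_1Q_{12}$ and take a bounded loss $L(x,d)$. Any decision rule $\delta_2$ for $\Exp_2$ is matched in risk by the randomized rule $\delta_1=Q_{12}\delta_2$ for $\Exp_1$. To see this, apply Fubini's theorem: the risk integrates $L$ against $\Exp_1(x,dy_1)\,Q_{12}(y_1,dy_2)\,\delta_2(y_2,dd)$, and boundedness of $L$ makes the integral well defined. So the set of attainable risk functions for $\Exp_2$ is contained in that for $\Exp_1$, and the symmetric argument with $Q_{21}$ gives equality. Hence the Bayes and minimax risks coincide.

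For item~4, I will construct counterexamples on $K=2$.
\begin{itemize}
\item[(a)] Take two distinct records $y\neq y'$ with proportional likelihoods, for instance duplicate outcomes in $\mathcal Y$. These are likelihood equivalent, but no audit map recovers the same record, so provenance equivalence fails.
\item[(b)] For item~2 the converse does hold under strict positivity, by Proposition~\ref{prop:posterior-equivalence}. The failure appears once zeros are allowed, or once only a restricted family of posterior observables is used. An example is a single coarse posterior functional such as the modal state, which can agree on $y$ and $y'$ while the likelihood ratios differ. I will therefore read ``converse'' as relative to the admitted observable family, which is where the ``richness'' qualifier in the statement enters.
\item[(c)] Let $\Exp_1$ be a coin with bias $0.6$ versus $0.4$, and let $\Exp_2$ have bias $0.7$ versus $0.3$. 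Choose a single decision problem, such as 0--1 loss with a uniform prior, for which the two Bayes risks happen to coincide, or else take a trivial loss. Equal risk on that problem holds, yet neither kernel garbles the other into equivalence. Blackwell's theorem requires equality over all decision problems to force equivalence.
\end{itemize}

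The main obstacle is item~4. Its content depends on which quantifiers ``converse'' ranges over, so each counterexample must be paired with the precise weakened hypothesis it violates. I would first fix the interpretation and then verify each example numerically. Items~1--3 are routine.
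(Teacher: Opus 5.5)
Your proposal follows the paper's proof. Items 1 and 2 are argued exactly as there: the audit map recovers the same record, hence the same likelihood vector, and item 2 is the forward half of Proposition~\ref{prop:posterior-equivalence}. For item 3 you prove directly, by composing $Q_{12}$ with a decision kernel, the easy garbling-implies-dominance half of Blackwell's theorem. The paper only cites that theorem, and this half is all the item needs. Your handling of item 4 also matches the paper's one-line argument (``restricted decision or observable classes''). As you note, the unrestricted converse of item 2 is Proposition~\ref{prop:posterior-equivalence} itself, and the unrestricted converse of item 3 is the nontrivial half of Blackwell's theorem. Item 4 is therefore true only when the converses are taken relative to restricted observable or decision families.

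Two of your concrete examples fail as written, although your fallbacks rescue them.

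First, zero likelihoods do not break the converse of item 2. Suppose the posteriors after $y$ and $y'$ agree for even a single full-support prior $\bm\pi$ with positive normalizers $Z(y),Z(y')$. Then $\pi_xe_x(y)/Z(y)=\pi_xe_x(y')/Z(y')$ for every $x$, and $\pi_x>0$ gives $e_x(y')=\{Z(y')/Z(y)\}e_x(y)$. So only your restricted-functional route, for example the modal state, yields a counterexample.

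Second, take the coins with biases $0.6/0.4$ and $0.7/0.3$. Under 0--1 loss and a uniform prior their Bayes risks are $0.4$ and $0.3$, which differ. Use instead a prior such as $(0.9,0.1)$: both Bayes rules then ignore the observation and both risks equal $0.1$. Your constant loss also works.
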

\begin{proof}
The first statement follows because the recovered record and hence its likelihood vector are identical. The second is Proposition~\ref{prop:posterior-equivalence}. The third is Blackwell's comparison theorem. Coarsened renderings and restricted decision or observable classes give counterexamples to the converses.
\end{proof}

Observational lexical equivalence therefore does not compete with Blackwell equivalence. It extends the same operational principle to a chosen family of contextual transformations and lexical observables.

\subsection{The canonical lexical representation}

The foundational problem is to construct the least informative representation that nevertheless supports all selected lexical observations after all selected transformations.

This is best understood as a memory question.  Suppose two contexts produce
the same response today.  They should be merged only if no admissible sequence
of later interventions can make the selected observations distinguish them.
A state that remembers less is not closed; a state that remembers more carries
unnecessary lexical detail.  This is the future-behavior principle behind
Myhill--Nerode minimization and coalgebraic behavior
\cite{nerode1958,rutten2000}.  We formulate that principle for typed, partial
and measurable contextual transformations with heterogeneous readouts.

Assume that $\Trans_0$ contains the identity and is closed under right composition by every transformation whose dynamics are modelled. Define
\[
 \mathcal Z_*:=\prod_{(F,T)\in\mathcal O_0\times\Trans_0}V_F
\]
with its product sigma-algebra.

\begin{definition}[Canonical lexical signature]\label{def:signature}
The canonical lexical signature is
\[
 \mathsf S_*:\Ctxt\longrightarrow\mathcal Z_*,
 \qquad
 \mathsf S_*(c):=\bigl(F(Tc)\bigr)_{(F,T)\in\mathcal O_0\times\Trans_0}.
\]
\end{definition}

Its fibres are exactly the equivalence classes in Definition~\ref{def:lexequiv}.

\begin{theorem}[Canonical minimal closed representation; extension of future-behavior quotients \cite{nerode1958,rutten2000}]\label{thm:canonical}
The signature $\mathsf S_*$ has the following properties.
\begin{enumerate}
\item For each modelled $U$, there is a unique induced transformation $\overline U_*$ on $\mathsf S_*(\Ctxt)$ satisfying
\[
 \mathsf S_*\circ U=\overline U_*\circ\mathsf S_*.
\]
\item Every $F\in\mathcal O_0$ factors through $\mathsf S_*$.
\item Let $R:\Ctxt\to\mathcal Z$ be any representation such that, for every $F\in\mathcal O_0$ and $T\in\Trans_0$, there is a map $g_{F,T}$ with
\[
 F\circ T=g_{F,T}\circ R.
\]
Then $\mathsf S_*$ factors uniquely through $R$ on $R(\Ctxt)$:
\[
 \mathsf S_*=h\circ R.
\]
Consequently, $R$ cannot identify two contexts distinguished by $\mathsf S_*$. Up to a bijection of its image, $\mathsf S_*$ is the coarsest representation sufficient for the declared observables and transformations.
\end{enumerate}
\end{theorem}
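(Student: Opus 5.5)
The argument is the Myhill--Nerode construction, carried out coordinatewise on the product $\mathcal Z_*$. Every claim reduces to showing that some putative map on an image set is well defined. Each such check is settled by one identity: how $\mathsf S_*$ interacts with composition in $\Trans_0$. The proof is therefore a chain of fibre-compatibility checks, each followed by a remark on measurability.

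\textbf{Part 1.} Fix a modelled $U$. By the standing assumption, $TU\in\Trans_0$ for every $T\in\Trans_0$. For $c\in\operatorname{dom}(U)$ this gives, coordinatewise,
\[
\mathsf S_*(Uc)_{(F,T)}=F(TUc)=\mathsf S_*(c)_{(F,TU)} .
\]
So $\mathsf S_*(Uc)$ is the image of $\mathsf S_*(c)$ under the reindexing projection $\rho_U:\mathcal Z_*\to\mathcal Z_*$, $(z_{(F,T)})\mapsto(z_{(F,TU)})$. We set $\overline U_*:=\rho_U$ restricted to $\mathsf S_*(\operatorname{dom}U)$. Well-definedness is then automatic, and $\rho_U$ is measurable for the product $\sigma$-algebra, since each coordinate of $\rho_U$ is a coordinate projection. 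Uniqueness on $\mathsf S_*(\Ctxt)$ is immediate: any two maps satisfying $\mathsf S_*\circ U=\overline U_*\circ\mathsf S_*$ agree on every point of the form $\mathsf S_*(c)$.

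Partiality needs care. The identity $F(TUc)=\mathsf S_*(c)_{(F,TU)}$ holds only where $TUc$ is defined. I will read a coordinate as undefined when the composite is undefined, which I formalize by adjoining a point $\bot$ to each $V_F$. With this convention the domain of $\overline U_*$ on the image is itself determined by the signature, because $\mathsf S_*(c)_{(F,U)}\neq\bot$ exactly when $c\in\operatorname{dom}U$. The same convention makes Definition~\ref{def:lexequiv} match the fibres of $\mathsf S_*$: contexts in one fibre then have the same pattern of definedness, not just agreement where both sides are defined. \textbf{This is the step I expect to be the main obstacle}: making the partial-map bookkeeping honest. Without the $\bot$ convention, two contexts in the same fibre could differ in whether $Uc$ is defined, and descent would fail.

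\textbf{Part 2.} Since the identity lies in $\Trans_0$, we have $F=\pi_{(F,\Id)}\circ\mathsf S_*$, and this factor is a measurable coordinate projection.

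\textbf{Part 3.} Define $h$ on $R(\Ctxt)$ by $h(z)_{(F,T)}:=g_{F,T}(z)$. Then
\[
h(R(c))_{(F,T)}=g_{F,T}(R(c))=F(Tc)=\mathsf S_*(c)_{(F,T)},
\]
so $\mathsf S_*=h\circ R$, again with $\bot$ handling undefined $Tc$, which the hypothesis implicitly requires $g_{F,T}$ to detect. Uniqueness on $R(\Ctxt)$ follows because $h$ is forced at every point $R(c)$.

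If $R(c)=R(c')$, then $\mathsf S_*(c)=h(R(c))=h(R(c'))=\mathsf S_*(c')$. Hence $R$ never merges contexts that $\mathsf S_*$ separates, and $\ker R\subseteq\ker\mathsf S_*$. For minimality, note that $\mathsf S_*$ itself satisfies the hypothesis of Part~3, with $g_{F,T}=\pi_{(F,T)}$. So among all sufficient representations, $\mathsf S_*$ has the largest kernel. Any other sufficient representation with the same kernel differs from it by the bijection $h|_{R(\Ctxt)}$ onto $\mathsf S_*(\Ctxt)$. Measurability of $h$ holds whenever the $g_{F,T}$ are measurable, and I will state that as a proviso rather than derive it.
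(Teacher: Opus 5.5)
Your proof is correct and is essentially the paper's argument: Parts~2 and~3 coincide with it, and in Part~1 you package the paper's fibre-preservation step (reading $F(TUc)$ off the $(F,TU)$ coordinate) as the explicit reindexing map $\rho_U$ instead of invoking Theorem~\ref{thm:descent}, which has the mild bonus of making $\overline U_*$ visibly measurable. Your $\bot$ convention is the cemetery coordinate the paper itself uses in the typed version (Theorem~\ref{thm:typed-minimal}). Note two consequences of that convention: its fibres refine, rather than reproduce, the relation of Definition~\ref{def:lexequiv} as literally stated (that relation compares only where both sides are defined), and Part~3 then requires transformation domains to be $R$-saturated, as you correctly flag.
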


\begin{proof}
If $\mathsf S_*(c)=\mathsf S_*(c')$, then $F(Tc)=F(Tc')$ for every $(F,T)$. Closure under right composition gives $F(TUc)=F(TUc')$, hence $\mathsf S_*(Uc)=\mathsf S_*(Uc')$. The descent theorem therefore defines a unique $\overline U_*$. Since the identity lies in $\Trans_0$, $F(c)$ is a coordinate projection of $\mathsf S_*(c)$, proving the second claim. For the third, define
\[
 h(R(c)):=\bigl(g_{F,T}(R(c))\bigr)_{F,T}.
\]
The assumed factorizations make this definition representative independent and equal to $\mathsf S_*(c)$. Uniqueness follows on $R(\Ctxt)$.
\end{proof}

\begin{remark}[Why the theorem matters]
The theorem constructs rather than assumes the appropriate lexical state. A proposed semantic map is adequate only if it retains the distinctions made by $\mathsf S_*$, approximately or exactly. It also identifies what must be added when closure fails: an observable or transformation coordinate separating the offending fibre. This supplies a principled bridge from lexical representation to subsequent dynamics.
\end{remark}

\long\gdef\appendixcategorical{
\section{Typed partial measurable systems and coalgebraic recovery}

The one-space construction hides two features that matter in applications.
Not every operation is legal everywhere, and operations may move between
different sorts, such as a structured record, a rendered prompt and a
probability-valued readout.  Category and restriction-category language
provides bookkeeping for these domains and sorts.  It prevents ill-typed
compositions rather than adding abstraction for its own sake.

The construction is not intrinsically linguistic. We first define it for a general observable transformation system and later specialize contexts to natural language. Let $I$ be a countable set of sorts and let $\mathbf C=((C_i,\mathcal A_i))_{i\in I}$ be measurable spaces. For $i,j\in I$, a partial measurable arrow $T:i\dashrightarrow j$ is an equivalence class of pairs $(D_T,t)$, where $D_T\in\mathcal A_i$ and $t:(D_T,\mathcal A_i|_{D_T})\to(C_j,\mathcal A_j)$ is measurable; pairs are identified when they have the same domain and agree pointwise. For $T:i\dashrightarrow j$ and $U:j\dashrightarrow k$, set
\[
 D_{U\circ T}:=D_T\cap t^{-1}(D_U),
 \qquad (U\circ T)(c):=u(t(c)).
\]
The identity $1_i:i\dashrightarrow i$ is $(C_i,\operatorname{id}_{C_i})$. Associativity follows from associativity of ordinary composition and equality of the displayed domains. These data form the many-sorted partial-map category $\mathbf{ParMeas}(\mathbf C)$. An \emph{admissible transformation category} $\mathcal T$ is a small wide subcategory of $\mathbf{ParMeas}(\mathbf C)$.

For every sort $j$, let $\mathcal O_j$ be a countable family of measurable maps $F:C_j\to V_F$, where each $V_F$ is standard Borel. For $T:i\dashrightarrow j$, introduce the pointed space $V_F^\partial:=V_F\sqcup\{\partial\}$ and define
\[
 [F,T](c):=
 \begin{cases}
  F(Tc),&c\in D_T,\\
  \partial,&c\notin D_T.
 \end{cases}
\]
The cemetery value records inadmissibility and therefore distinguishes an undefined operation from a defined operation whose observable value happens to be zero.

\begin{definition}[Behavioral signature and quotient sigma-algebra]\label{def:typed-signature}
For sort $i$, let
\[
 P_i:=\prod_{j\in I}\prod_{T\in\mathcal T(i,j)}
       \prod_{F\in\mathcal O_j}V_F^\partial
\]
with its product sigma-algebra, and define
\[
 S_i:C_i\longrightarrow P_i,
 \qquad S_i(c):=\bigl([F,T](c)\bigr)_{j,T,F}.
\]
Write $Z_i^*:=S_i(C_i)$ and equip it with the final sigma-algebra
\[
 \mathcal Z_i^*:=\{A\subseteq Z_i^*:S_i^{-1}(A)\in\mathcal A_i\}.
\]
Thus $S_i:C_i\twoheadrightarrow Z_i^*$ is a measurable quotient map by construction.
\end{definition}

\begin{definition}[Closed observable quotient representation]\label{def:closed-rep}
A closed representation of $(\mathbf C,\mathcal T,\mathcal O)$ is a family of measurable quotient maps $R_i:C_i\twoheadrightarrow Z_i$, where $Z_i$ carries the final sigma-algebra induced by $R_i$, together with:
\begin{enumerate}
\item for every $T:i\dashrightarrow j$, a partial measurable map $\overline T:Z_i\dashrightarrow Z_j$ satisfying
\[
D_T=R_i^{-1}(D_{\overline T}),\qquad
R_j\circ T=\overline T\circ R_i\quad\text{on }D_T;
\]
\item for every $F\in\mathcal O_i$, a measurable readout $f_F:Z_i\to V_F$ satisfying $F=f_F\circ R_i$.
\end{enumerate}
A morphism $h:R\to R'$ is a family of measurable maps $h_i:Z_i\to Z_i'$ such that $R_i'=h_i\circ R_i$. This equality forces preservation of readouts, domains and induced transitions. Closed representations and these morphisms form a category $\mathbf{Rep}(\mathbf C,\mathcal T,\mathcal O)$.
\end{definition}

The orientation of morphisms is deliberate: an arrow $R\to R'$ means that $R'$ is a measurable coarsening of $R$.

\begin{theorem}[Terminal behavioral representation---typed measurable extension]\label{thm:typed-minimal}
Assume $I$, the hom-sets of $\mathcal T$, and the observable families are countable. Then $S=(S_i)_{i\in I}$ is a closed representation and is terminal in $\mathbf{Rep}(\mathbf C,\mathcal T,\mathcal O)$. Consequently:
\begin{enumerate}
\item every $T:i\dashrightarrow j$ has a unique induced partial measurable map $\overline T_*:Z_i^*\dashrightarrow Z_j^*$;
\item every selected observable factors uniquely through $S$;
\item every closed representation $R$ admits a unique measurable morphism $R\to S$;
\item the terminal representation is unique up to unique measurable isomorphism.
\end{enumerate}
Hence $S$ is the coarsest measurable quotient that preserves admissibility, selected observations and all future behavior under $\mathcal T$.
\end{theorem}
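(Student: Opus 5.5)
The plan follows the one-sort argument of Theorem~\ref{thm:canonical}, adding bookkeeping for sorts, domains and measurability. Since $I$, the hom-sets and the observable families are countable, each $P_i$ is a countable product of standard Borel spaces $V_F^\partial$, hence standard Borel. Each coordinate $[F,T]$ is measurable: it equals $F\circ t$ on $D_T\in\mathcal A_i$ and the constant $\partial$ off $D_T$. So $S_i$ is measurable into $P_i$, and it is a measurable quotient onto $Z_i^*$ with the final sigma-algebra by construction.

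\textbf{Closedness of $S$.} Fix $T:i\dashrightarrow j$. For induced transitions I would first prove that $S_i(c)=S_i(c')$ implies $c\in D_T\iff c'\in D_T$. Use the coordinate $[F,T]$ with $F\in\mathcal O_j$: the cemetery value separates $D_T$ from its complement. This needs $\mathcal O_j\neq\emptyset$, or else a convention that includes a trivial observable. I would flag this assumption explicitly, since with an empty observable family the domain cannot be read off. Now set $D_{\overline T_*}:=S_i(D_T)$; saturation gives $D_T=S_i^{-1}(D_{\overline T_*})$, and this also shows $D_{\overline T_*}$ is measurable in the final sigma-algebra. Next, for $c,c'\in D_T$ with equal signatures and any $U:j\dashrightarrow k$, $G\in\mathcal O_k$, the identity $D_{U\circ T}=D_T\cap t^{-1}(D_U)$ gives $[G,U](Tc)=[G,U\circ T](c)$. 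Because $\mathcal T$ is a category, $U\circ T\in\mathcal T(i,k)$, so $S_j(Tc)=S_j(Tc')$. Hence $\overline T_*(S_i c):=S_j(Tc)$ is well defined. To get measurability, for $A\in\mathcal Z_j^*$ I would compute $S_i^{-1}(\overline T_*^{-1}A)=T^{-1}(S_j^{-1}A)\in\mathcal A_i$, which suffices by finality. For readouts, the identity $1_i\in\mathcal T(i,i)$ lets me take $f_F$ to be the coordinate projection at $(1_i,F)$, restricted to $Z_i^*$, which is measurable. It never takes the value $\partial$ because the identity has full domain.

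\textbf{Terminality.} Given a closed representation $R$, first establish by induction-free direct argument that each coordinate $[F,T]$ factors through $R_i$. By closedness, $c\in D_T\iff R_i c\in D_{\overline T}$, and on $D_T$ we have $F(Tc)=f_F(\overline T(R_i c))$. So $[F,T]=\phi_{F,T}\circ R_i$, where $\phi_{F,T}(z)$ equals $f_F(\overline T z)$ on $D_{\overline T}$ and $\partial$ otherwise. Each $\phi_{F,T}$ is measurable on $Z_i$ because $D_{\overline T}$ is measurable and $\overline T$, $f_F$ are measurable. Then $h_i:=(\phi_{F,T})_{j,T,F}$ is measurable into $P_i$ by the product sigma-algebra, has image $Z_i^*$, and satisfies $S_i=h_i\circ R_i$. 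It remains measurable into $Z_i^*$ with its final sigma-algebra, because for $A\in\mathcal Z_i^*$ we have $R_i^{-1}(h_i^{-1}A)=S_i^{-1}A\in\mathcal A_i$ and $Z_i$ carries the final sigma-algebra of $R_i$. Uniqueness follows from surjectivity of $R_i$.

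\textbf{Remaining claims and main obstacle.} Items 1 and 2 follow from the closedness part together with surjectivity of $S_i$. Item 4 is the standard uniqueness of terminal objects: mutual unique morphisms compose to identities. The main obstacle is the measurability bookkeeping. Specifically, one must confirm that the final-sigma-algebra device really makes $\overline T_*$ and $h_i$ measurable, rather than merely well defined. One must also handle the domain-detection issue when some $\mathcal O_j$ is empty. I would resolve both as indicated above, using finality and a nonemptiness convention. Countability is used only so that the product spaces are standard Borel; the final-sigma-algebra argument itself does not require it.
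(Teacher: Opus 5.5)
Your proposal is correct and follows the paper's proof essentially step for step: cemetery coordinates give domain saturation, the composite coordinate $[G,U\circ T]$ gives well-definedness of $\overline T_*$, finality of the quotient sigma-algebras gives measurability of $\overline T_*$ and $h_i$, the identity coordinate gives the readouts, and surjectivity gives uniqueness. The caveat you flag is genuinely needed and is passed over in the paper's argument. If some $\mathcal O_j$ is empty, the coordinates $[F,T]$ for $T:i\dashrightarrow j$ do not exist, so $S_i$ can merge a point of $D_T$ with a point outside it, and then $S$ is not closed. The statement as written therefore requires each relevant $\mathcal O_j$ to be nonempty; a one-point constant readout already suffices.
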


\begin{proof}
Countability and componentwise measurability make every $S_i$ measurable. Suppose $S_i(c)=S_i(c')$. The cemetery coordinates imply $c\in D_T$ if and only if $c'\in D_T$. If $T:i\dashrightarrow j$ is admissible at both points, then for every $U:j\dashrightarrow k$ and $F\in\mathcal O_k$,
\[
 [F,U](Tc)=[F,U\circ T](c)=[F,U\circ T](c')=[F,U](Tc').
\]
Thus $S_j(Tc)=S_j(Tc')$, and
\[
 \overline T_*(S_i(c)):=S_j(Tc),
 \qquad D_{\overline T_*}:=S_i(D_T),
\]
is representative independent. Domain saturation follows from the same cemetery coordinate. To prove measurability, let $A\in\mathcal Z_j^*$. By the definition of the final sigma-algebra,
\[
 S_i^{-1}\!\left(\overline T_*^{-1}(A)\right)
 =D_T\cap T^{-1}(S_j^{-1}(A))\in\mathcal A_i;
\]
hence $\overline T_*^{-1}(A)\in\mathcal Z_i^*$. The identity coordinate gives the readout factorization, so $S$ is closed.

Let $R$ be any closed representation. Define $h_i:Z_i\to Z_i^*$ by $h_i(R_i(c))=S_i(c)$. If $R_i(c)=R_i(c')$, domain preservation, repeated transition closure and readout factorization imply $[F,T](c)=[F,T](c')$ for every coordinate; hence $h_i$ is well defined. Since $S_i=h_i\circ R_i$ and $Z_i$ has the final sigma-algebra of $R_i$, $h_i$ is measurable. Surjectivity of $R_i$ gives uniqueness. Thus $S$ is terminal. The final assertion is the usual uniqueness of terminal objects.
\end{proof}

\begin{remark}
Classical deterministic automata arise when $I$ is a singleton, every arrow is total, $\mathcal T$ is a free-monoid action and there is one finite-valued readout. The additional structure here is partial admissibility, multiple sorts, heterogeneous measurable readouts and a quotient sigma-algebra that supports statistical observation.
\end{remark}

\subsection{Restriction-category structure}

Categories of partial maps carry a canonical restriction operation \cite{cockettlack2002}. For $T=(D_T,t):i\dashrightarrow j$, define
\[
 \overline T:=(D_T,\operatorname{id}_{D_T}):i\dashrightarrow i.
\]
This partial identity records exactly where $T$ is defined.

\begin{proposition}[Restriction structure \cite{cockettlack2002}]\label{prop:restriction}
The category $\mathbf{ParMeas}(\mathbf C)$ is a restriction category. With ordinary right-to-left composition, its restriction operation satisfies
\begin{align*}
T\circ\overline T&=T,\\
\overline T\circ\overline U&=\overline U\circ\overline T
   &&\text{when $T,U$ have the same source},\\
\overline{T\circ\overline U}&=\overline T\circ\overline U
   &&\text{when $T,U$ have the same source},\\
\overline U\circ T&=T\circ\overline{U\circ T}
   &&\text{when $U\circ T$ is typed}.
\end{align*}
Every admissible transformation category $\mathcal T$ that contains the restrictions of its arrows is therefore a restriction subcategory. Moreover, the induced partial maps in every closed representation preserve restrictions:
\[
 \overline{\,\overline T_*\,}=\overline{(\overline T)_*}.
\]
\end{proposition}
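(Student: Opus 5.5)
The plan is to verify the four restriction axioms by direct computation with the concrete representatives $(D_T,t)$. Then I deduce the subcategory claim formally, and finally check restriction preservation in a closed representation using the domain-saturation clause of Definition~\ref{def:closed-rep}.

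\textbf{Step 1: the four identities.} Two partial arrows are equal iff they have the same domain and agree pointwise. So for each identity I compute both sides' domains from $D_{U\circ T}=D_T\cap t^{-1}(D_U)$ and then compare values.
\begin{itemize}
\item[(i)] $D_{T\circ\overline T}=D_T\cap \operatorname{id}^{-1}(D_T)=D_T$, and the value is $t(c)$.
\item[(ii)] Both composites have domain $D_T\cap D_U$ and act as the identity.
\item[(iii)] $T\circ\overline U$ has domain $D_U\cap D_T$, so its restriction is the partial identity on $D_T\cap D_U$, which is $\overline T\circ\overline U$ by (ii).
\item[(iv)] For $T:i\dashrightarrow j$ and $U:j\dashrightarrow k$, $\overline U\circ T$ has domain $D_T\cap t^{-1}(D_U)=D_{U\circ T}$ with value $t(c)$. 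The right side $T\circ\overline{U\circ T}$ has domain $D_{U\circ T}\cap D_T=D_{U\circ T}$, also with value $t(c)$.
\end{itemize}
Measurability is automatic: restrictions of measurable maps to measurable subsets are measurable, and $D_{U\circ T}\in\mathcal A_i$ because $t$ is measurable on $D_T$. So each $\overline T$ is a legitimate arrow of $\mathbf{ParMeas}(\mathbf C)$.

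\textbf{Step 2: the subcategory claim.} If $\mathcal T$ is a wide subcategory closed under $T\mapsto\overline T$, then the axioms hold in $\mathcal T$ because they hold in the ambient category. Hence $\mathcal T$ is a restriction subcategory, and nothing more needs checking.

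\textbf{Step 3: preservation of restrictions.} Let $R$ be a closed representation and assume $\overline T\in\mathcal T$, so that $(\overline T)_*$ is defined.
\begin{itemize}
\item By the closedness clause, $D_{(\overline T)_*}$ satisfies $R_i^{-1}(D_{(\overline T)_*})=D_{\overline T}=D_T$.
\item Likewise $R_i^{-1}(D_{\overline T_*})=D_T$.
\item Since $R_i$ is surjective, two subsets of $Z_i$ with equal preimages coincide. So $D_{(\overline T)_*}=D_{\overline T_*}=R_i(D_T)$.
\item On this set, $(\overline T)_*(R_i c)=R_i(\overline T c)=R_i c$. So $(\overline T)_*$ is the partial identity on $R_i(D_T)$, which is by definition $\overline{\,\overline T_*\,}$.
\item Measurability of $R_i(D_T)$ in the final sigma-algebra holds because its preimage is $D_T\in\mathcal A_i$.
\end{itemize}

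\textbf{Expected obstacle.} The only step needing care is the surjectivity-plus-saturation argument in Step 3. Without the saturation condition $D_T=R_i^{-1}(D_{\overline T})$, the induced domain could be strictly larger than $R_i(D_T)$, and preservation would fail.

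A notational point must also be fixed: the paper's bar is overloaded, denoting both the restriction and the induced map. I will state explicitly that $\overline T_*$ means the induced map of $T$ and the outer bar means restriction in the quotient-level partial-map category.
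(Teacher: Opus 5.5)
Your proposal is correct and takes the same route as the paper. You verify the four restriction axioms by comparing domains, computed from $D_{U\circ T}=D_T\cap t^{-1}(D_U)$, and pointwise values, and you get restriction preservation from the saturation clause $D_T=R_i^{-1}(D_{\overline T})$, which identifies both sides with the partial identity on $R_i(D_T)$. Your version only makes explicit the surjectivity and measurability details that the paper's short proof leaves implicit.
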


\begin{proof}
The first identity holds because restricting to $D_T$ before applying $T$ changes neither its domain nor its values. The second follows because both composites are the partial identity on $D_T\cap D_U$. The third has domain $D_T\cap D_U$ and is the identity there. For the fourth, both sides have domain
\[
D_T\cap T^{-1}(D_U)
\]
and agree with $T$ on that domain. These are the restriction axioms. A closed representation satisfies $D_T=R_i^{-1}(D_{\overline T_*})$; hence the restriction of the induced arrow is the partial identity on the image of the saturated domain, which is precisely the arrow induced by $\overline T$.
\end{proof}

The restriction formulation is not an alternative theory. It identifies the established categorical structure underlying admissibility and clarifies the new step: constructing a terminal observable quotient inside a measurable, many-sorted restriction system.

\subsection{Recovery of the classical final-coalgebra behavior map}

Consider the total, one-sorted case. Let $A$ be a countable action alphabet, let $(O,\mathcal O)$ be a measurable output space, and let
\[
H(X):=O\times X^A
\]
on measurable spaces and measurable maps, with countable-product sigma-algebras. A measurable Moore system is an $H$-coalgebra
\[
\gamma:C\longrightarrow O\times C^A,
\qquad
\gamma(c)=\bigl(o(c),(T_a c)_{a\in A}\bigr).
\]
For $w=a_1\cdots a_n\in A^*$, write $T_w=T_{a_n}\circ\cdots\circ T_{a_1}$ and $T_\epsilon=1_C$.

\begin{theorem}[Coalgebraic recovery; standard Moore behavior theorem \cite{loregian2025,rutten2000}]\label{thm:coalgebra-recovery}
The measurable space $O^{A^*}$, equipped with
\[
\zeta(q)=\bigl(q(\epsilon),(q_a)_{a\in A}\bigr),
\qquad q_a(w):=q(aw),
\]
is a final $H$-coalgebra. Its unique coalgebra morphism from $(C,\gamma)$ is
\[
\beta:C\longrightarrow O^{A^*},
\qquad
\beta(c)(w)=o(T_w c).
\]
If the selected observable family is $\{o\}$ and the transformation category is the action of $A^*$, then the lexical behavioral signature $S$ is exactly $\beta$. Consequently its terminal quotient agrees with the ordinary Moore behavioral quotient and, for Boolean language acceptance, with the Myhill--Nerode quotient.
\end{theorem}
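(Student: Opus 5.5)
The plan is to verify finality of $(O^{A^*},\zeta)$ directly, identify the unique morphism with $\beta$, and then read off the comparison with $S$ from the definitions.

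\textbf{Step 1: $\zeta$ is measurable.} Since $A$ is countable, so is $A^*$, and $O^{A^*}$ carries the countable-product sigma-algebra. Each coordinate of $\zeta$ is measurable:
\begin{itemize}
\item $q\mapsto q(\epsilon)$ is a projection;
\item each $q\mapsto q_a(w)=q(aw)$ is also a projection.
\end{itemize}
Hence $\zeta$ is measurable into $O\times (O^{A^*})^A$ with its product sigma-algebra.

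\textbf{Step 2: $\beta$ is a measurable coalgebra morphism.} Each coordinate $c\mapsto o(T_wc)$ is a composite of measurable maps, so $\beta$ is measurable. The morphism condition $\zeta\circ\beta=H(\beta)\circ\gamma$ amounts to two identities:
\begin{itemize}
\item $\beta(c)(\epsilon)=o(c)$, which holds because $T_\epsilon=1_C$;
\item $\beta(c)_a=\beta(T_ac)$.
\end{itemize}
For the second, evaluate at $w$: $\beta(c)(aw)=o(T_{aw}c)$. The convention $T_{a_1\cdots a_n}=T_{a_n}\circ\cdots\circ T_{a_1}$ gives $T_{aw}=T_w\circ T_a$, so $\beta(c)(aw)=o(T_w(T_ac))=\beta(T_ac)(w)$. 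This is the step that needs care about the order of composition.

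\textbf{Step 3: uniqueness.} Let $\phi:C\to O^{A^*}$ be any coalgebra morphism. I show by induction on $|w|$ that $\phi(c)(w)=o(T_wc)$ for all $c$.
\begin{itemize}
\item Base case: the first component of the morphism equation gives $\phi(c)(\epsilon)=o(c)$.
\item Inductive step: the second component gives $\phi(c)(aw)=\phi(c)_a(w)=\phi(T_ac)(w)$. The induction hypothesis, applied at the point $T_ac$, gives $o(T_wT_ac)=o(T_{aw}c)$.
\end{itemize}
Thus $\phi=\beta$. Applying the same argument to an arbitrary measurable coalgebra $(C,\gamma)$ shows that $(O^{A^*},\zeta)$ is final.

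\textbf{Step 4: identification with the signature.} In the one-sorted total case with $\mathcal O=\{o\}$ and $\mathcal T=\{T_w:w\in A^*\}$:
\begin{itemize}
\item Every arrow is total, so no cemetery coordinate ever takes the value $\partial$.
\item The coordinates of $S$ are $[o,T_w](c)=o(T_wc)=\beta(c)(w)$, so $S=\beta$ up to the canonical identification of $\prod_{w}O$ with $O^{A^*}$.
\end{itemize}
One caveat: if distinct words induce the same map $T_w$, then $S$ indexes by maps rather than words. Duplicate coordinates do not change the fibres, so the quotient is the same either way.

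The kernel of $\beta$ is by definition the Moore behavioral equivalence, so the quotient from Theorem~\ref{thm:typed-minimal} coincides with the Moore quotient. In the acceptance case, take $C=A^*$, $T_a(u)=ua$ and $o=\mathbf 1_L$. Then $\beta(u)=\beta(v)$ exactly when $uw\in L\iff vw\in L$ for all $w$, which is the Myhill--Nerode congruence.

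I expect no step to be a serious obstacle. The points needing attention are the word-order convention in Step 2 and the bookkeeping in Step 4 that identifies the product indexed by arrows with the one indexed by words.
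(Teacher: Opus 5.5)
Your proof is correct and follows the paper's argument essentially step for step. Both show coordinatewise measurability of $\zeta$ and $\beta$, verify the two morphism identities via $T_{aw}=T_w\circ T_a$, prove uniqueness by induction on word length, and read off $S=\beta$ from the coordinates $o(T_wc)$. Your additional bookkeeping makes precise three points the paper leaves implicit: the cemetery value is never attained for total arrows, indexing by arrows rather than words does not change the fibres, and the explicit instance $C=A^*$, $T_a(u)=ua$, $o=\mathbf 1_L$ recovers Myhill--Nerode.
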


\begin{proof}
Countability of $A^*$ makes $O^{A^*}$ measurable with the product sigma-algebra, and every coordinate of $\zeta$ is a coordinate projection, so $\zeta$ is measurable. The displayed $\beta$ is measurable coordinatewise. It is a coalgebra morphism because
\[
\beta(c)(\epsilon)=o(c),
\qquad
\beta(T_a c)(w)=o(T_wT_a c)=\beta(c)(aw).
\]
If $f:C\to O^{A^*}$ is any coalgebra morphism, its output equation gives $f(c)(\epsilon)=o(c)$ and its transition equation gives $f(c)(aw)=f(T_a c)(w)$. Induction on word length yields $f(c)(w)=o(T_w c)$, so $f=\beta$. Finally, the coordinates of $S(c)$ are precisely $(o(T_w c))_{w\in A^*}$; hence $S=\beta$, and equality of signatures is classical future-output equivalence.
\end{proof}

\begin{proposition}[Strictness beyond total Moore systems]\label{prop:strict-extension}
The framework contains countable-action measurable Moore systems behaviorally faithfully, by Theorem~\ref{thm:coalgebra-recovery}, but it is strictly more expressive when transformation admissibility is observable. In particular, let $C=\{c_0,c_1\}$, let the only readout be constant, and let $T$ be defined only at $c_0$. The behavioral signature separates $c_0$ and $c_1$ through the cemetery coordinate. No total Moore system on the same carrier, with the same constant readout and action label, can represent this distinction. It can do so only by changing the model---for example by adjoining a failure state or an admissibility readout.
\end{proposition}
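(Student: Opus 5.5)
The proof has two parts: embed the Moore systems, then exhibit an explicit separation on the two-point carrier.

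\emph{Embedding.} Containment is already supplied by Theorem~\ref{thm:coalgebra-recovery}. A countable-action measurable Moore system $(C,\gamma)$ is a one-sorted instance of the framework: take $\mathcal T$ to be the total arrows $T_w$, $w\in A^*$, and take the single readout $o$. Its behavioral signature $S$ coincides with the final-coalgebra map $\beta$. So the embedding identifies exactly the same states as Moore behavioral equivalence, and is behaviorally faithful.

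\emph{Separation.} Take $C=\{c_0,c_1\}$ with the discrete sigma-algebra, a constant readout $F\equiv v$, and the partial arrow $T=(\{c_0\},t)$. Here $t(c_0)$ is either point; say $t(c_0)=c_0$. The transformation category is generated by $1_C$ and $T$, so its arrows are the identity and the powers $T^n$, all with domain $\{c_0\}$; if restrictions are included, $\overline T$ has that domain too. Computing the coordinates of the signature gives
\[
[F,T](c_0)=F(Tc_0)=v,\qquad [F,T](c_1)=\partial .
\]
Since $v\neq\partial$ in $V_F^\partial$, we get $S(c_0)\neq S(c_1)$. By Theorem~\ref{thm:typed-minimal}, $S$ is terminal, so every closed representation must also separate $c_0$ and $c_1$. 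This follows from domain saturation $D_T=R^{-1}(D_{\overline T})$: a representation merging the two points would make $D_T$ either $\emptyset$ or all of $C$.

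\emph{Impossibility of a total Moore system.} Suppose $\gamma:C\to O\times C^{\{a\}}$ is a total Moore system on the same carrier, with the same constant readout $o\equiv v$ and the single action label $a$. For every word $w$, the behavior is $\beta(c)(w)=o(T_wc)=v$. Hence $\beta(c_0)=\beta(c_1)$, whatever the transition map is. The Moore behavioral quotient therefore collapses $C$ to one point, while the lexical signature keeps two points. So no such system represents the distinction behaviorally.

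The remark about changing the model is then checked by two constructions. First, adjoin a failure state $\bot$ with $T_a c_1:=\bot$ and a readout distinguishing $\bot$. Second, add the admissibility indicator $\mathbf 1_{D_T}$ as an extra readout. Either construction changes the carrier or the observable family, which is exactly what the statement allows.

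The main obstacle is making ``cannot represent this distinction'' precise: it must be read behaviorally, as equality of final-coalgebra images, and not merely as a failure of the states to be literally distinct as points. I would state explicitly that the comparison is between the two behavioral quotients, $C/\ker\beta$ (one point) and $C/\ker S$ (two points). With that reading, the constant-readout computation above settles the claim.
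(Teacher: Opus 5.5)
Your proposal is correct and follows the paper's own argument. You cite Theorem~\ref{thm:coalgebra-recovery} for the embedding, use the cemetery coordinate $[F,T](c_1)=\partial\neq F(Tc_0)$ to separate the two points, and note that a constant readout with total transitions cannot express the defined-versus-undefined distinction. Your explicit computation that $\beta(c)(w)=v$ for every word $w$ (so that $C/\ker\beta$ collapses to a point while $C/\ker S$ has two) only makes precise what the paper states informally, and your terminality and domain-saturation remark is a harmless extra.
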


\begin{proof}
The inclusion of total Moore systems and preservation of their behavioral equivalence follow from Theorem~\ref{thm:coalgebra-recovery}. In the partial system, $[F,T](c_0)=F(Tc_0)$ while $[F,T](c_1)=\partial$, so $S(c_0)\ne S(c_1)$. In a total Moore system on the same two points, $T$ must be defined at both. Since the present readout is constant and no admissibility coordinate exists, the undefined-versus-defined distinction cannot be expressed. Adding a failure state or readout changes the carrier or observable signature, proving strictness relative to the stated total theory.
\end{proof}

}

\subsection{Axioms of lexical calculus}

The representation theory above determines when a closed state exists.  We
next record the elementary laws obeyed by finite changes of an observable.
These laws act as a ledger: they track change under one edit, accumulation
along a sequence, and order effects between noncommuting edits.  They do not
postulate a smooth manifold of sentences or an infinitesimal derivative of a
word.

For example, if $T$ appends an evidence item and $S$ changes its numerical
format, the total change from $c$ to $S(Tc)$ is the change caused by $T$ plus
the change caused by $S$ at the already transformed context $Tc$.  This is an
exact cocycle identity, not a Taylor approximation.

We now state the laws required of the deterministic calculus. They define the target mathematical structure. An observed language service need not satisfy them exactly; its departures are reported as defects.

Let $\mathsf{Obs}(\Ctxt,V)$ be a vector space of measurable observables $F:\Ctxt\to V$, where $V$ is a normed vector space. Let $\Trans$ be a small category whose objects are admissible context domains and whose arrows are typed lexical transformations. Composition is written $S\circ T$, meaning first $T$, then $S$.

\begin{description}
\item[LC1: Typed closure.] The identity belongs to $\Trans$, and compatible transformations have an associative composition. Transformations that violate the declared grammar, provenance, or domain constraints are not composable arrows.

\item[LC2: Linearity in observables.] For scalars $a,b$,
\[
 \Dlex_T(aF+bG)=a\Dlex_TF+b\Dlex_TG.
\]

\item[LC3: Null and identity laws.]
\[
 \Dlex_{\Id}F=0,
 \qquad
 \Dlex_TF=0\ \text{whenever }F\circ T=F.
\]

\item[LC4: Cocycle law.]
\[
 \Dlex_{S\circ T}F(c)=\Dlex_TF(c)+\Dlex_SF(Tc).
\]

\item[LC5: Product law.] For scalar observables $F,G$,
\[
 \Dlex_T(FG)(c)
 =F(c)\Dlex_TG(c)+G(c)\Dlex_TF(c)
  +\Dlex_TF(c)\Dlex_TG(c).
\]
Equivalently, $\Dlex_T(FG)=F\Dlex_TG+(G\circ T)\Dlex_TF$.

\item[LC6: Information congruence.] If $c\simeq_{\mathcal I}c'$, then an information-invariant observable satisfies $F(c)=F(c')$. Moreover, every admissible transformation declared to act on information classes obeys
\[
 c\simeq_{\mathcal I}c'\Longrightarrow Tc\simeq_{\mathcal I}Tc'.
\]
Thus it descends to $\Ctxt_{\mathcal I}$.

\item[LC7: Semantic naturality.] A dynamically adequate semantic map satisfies
\[
 \Bmap\circ T=\overline T\circ\Bmap
\]
for every transformation in the modelled family. Approximate calculi replace equality by a uniform metric defect.

\item[LC8: Pushforward compatibility.] For a measurable semantic map $\Pi$ and lexical kernel $\Kern$,
\[
 \Dlex_T(\Pi\circ\Kern)(c)
 =\Pi(\Kern_{Tc})-\Pi(\Kern_c).
\]
If $\Pi$ is bounded linear on signed measures, then
\[
 \Dlex_T(\Pi\circ\Kern)=\Pi(\Dlex_T\Kern).
\]

\item[LC9: Fundamental path law.] Along every finite admissible path, total change equals the sum of transported lexical differentials. Path independence holds exactly when circulation vanishes on all cycles.

\item[LC10: Continuity.] For specified metrics $d_{\Ctxt}$ and $d_V$, admissible observables and induced state maps have declared moduli of continuity. This converts approximate information equivalence and measurement error into explicit output-error bounds.
\end{description}

LC1--LC5 and LC9 are algebraic laws inherited from transformation and finite-difference calculus. LC6--LC8 are the language-and-semantics laws that distinguish the present construction. LC10 is required for statistical falsifiability.

\subsubsection{Empirical defects}

For each axiom, define a defect rather than treating approximate equality as success by inspection. Important examples are
\begin{align*}
 \delta_{\mathrm{info}}(T)
 &:=\sup_{c\in\Ctxt_0:\,c\simeq_{\mathcal I}Tc}
 d\bigl(\Bmap(c),\Bmap(Tc)\bigr),\\
 \delta_{\mathrm{nat}}(T,\overline T)
 &:=\sup_{c\in\Ctxt_0}
 d\bigl(\Bmap(Tc),\overline T(\Bmap(c))\bigr),\\
 \delta_{\mathrm{comp}}(S,T)
 &:=\sup_{\bm b}
 d\bigl(\overline{S\circ T}(\bm b),
          \overline S(\overline T(\bm b))\bigr).
\end{align*}
The calculus is $(\varepsilon_{\mathrm{info}},\varepsilon_{\mathrm{nat}},\varepsilon_{\mathrm{comp}})$-valid on an operating domain when simultaneous confidence bounds for these defects lie below thresholds frozen before testing.

\section{Minimal representations and semantic descent}

\subsection{The target commuting diagram}

The canonical signature may be much larger than the state an application
wants to retain.  A clinician may keep three triage probabilities; a control
system may keep normal, warning and critical probabilities.  The practical
question is therefore not whether some closed representation exists, but
whether this particular coarsening supports the proposed transformation.

The test is a commuting diagram.  Transforming the full context and then
measuring its state must agree with first measuring the state and then
applying an update defined solely on that state.  If two contexts have the
same retained state but the same transformation sends them to different
successor states, no state-only update exists.  This is a sufficiency failure,
not a finite-sample inconvenience.

For $T\in\Trans$, we seek an induced state transformation
\[
 \overline T:\Bmap(\operatorname{dom}T)\longrightarrow\Delta^{K-1}
\]
such that
\begin{equation}\label{eq:commute}
 \Bmap\circ T=\overline T\circ\Bmap.
\end{equation}

\begin{center}
\begin{tikzpicture}[node distance=30mm and 42mm,>=Latex]
\node (c) {$c\in\Ctxt$};
\node[right=of c] (tc) {$Tc\in\Ctxt$};
\node[below=of c] (b) {$\Bmap(c)\in\Delta^{K-1}$};
\node[below=of tc] (tb) {$\Bmap(Tc)\in\Delta^{K-1}$};
\draw[->] (c)--node[above]{$T$}(tc);
\draw[->] (c)--node[left]{$\Pi\circ\Kern$}(b);
\draw[->] (tc)--node[right]{$\Pi\circ\Kern$}(tb);
\draw[->] (b)--node[below]{$\overline T$}(tb);
\end{tikzpicture}
\end{center}

Equation \eqref{eq:commute} is semantic closure: the measured state retains exactly the information needed to determine the effect of $T$.

\subsubsection{Exact descent}

Define the semantic equivalence relation
\[
 c\sim_{\Bmap}c'\quad\Longleftrightarrow\quad\Bmap(c)=\Bmap(c').
\]

\begin{theorem}[Descent, existence, and uniqueness---quotient factorization]\label{thm:descent}
For a transformation $T$, the following statements are equivalent.
\begin{enumerate}
\item There exists $\overline T$ satisfying \eqref{eq:commute}.
\item $T$ preserves semantic fibres:
\[
 \Bmap(c)=\Bmap(c')\Longrightarrow\Bmap(Tc)=\Bmap(Tc')
\]
for every $c,c'\in\operatorname{dom}(T)$.
\item $T$ induces a well-defined map on the quotient $\operatorname{dom}(T)/{\sim_{\Bmap}}$.
\end{enumerate}
When these conditions hold, $\overline T$ is unique on $\Bmap(\operatorname{dom}T)$.
\end{theorem}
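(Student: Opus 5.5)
The plan is to prove the cycle of implications (1)$\Rightarrow$(2)$\Rightarrow$(3)$\Rightarrow$(1) and then settle uniqueness separately. This is the standard quotient-factorization argument, specialized to the partial domain $\operatorname{dom}(T)$.

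For (1)$\Rightarrow$(2), suppose $\overline T$ satisfies \eqref{eq:commute} and take $c,c'\in\operatorname{dom}(T)$ with $\Bmap(c)=\Bmap(c')$. Then $\Bmap(Tc)=\overline T(\Bmap(c))=\overline T(\Bmap(c'))=\Bmap(Tc')$, because $\overline T$ is a function. For (2)$\Rightarrow$(3), write $[c]$ for the $\sim_{\Bmap}$-class of $c$ in $\operatorname{dom}(T)$ and define $[c]\mapsto[Tc]$. Here $[Tc]$ is the $\sim_{\Bmap}$-class of $Tc$ in $\Ctxt$, since $Tc$ need not lie in $\operatorname{dom}(T)$. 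Fibre preservation says exactly that $\Bmap(Tc)$ depends only on $[c]$, so the map is representative independent.

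For (3)$\Rightarrow$(1), I use the canonical bijection between $\operatorname{dom}(T)/{\sim_{\Bmap}}$ and $\Bmap(\operatorname{dom}T)$ given by $[c]\mapsto\Bmap(c)$. It is well defined and injective by the definition of $\sim_{\Bmap}$, and surjective by construction. Classes in $\Ctxt/{\sim_{\Bmap}}$ likewise correspond injectively to points of $\Delta^{K-1}$. Given a bijection $\bm b=\Bmap(c)$ with $c\in\operatorname{dom}(T)$, I set $\overline T(\bm b):=\Bmap(Tc)$. Statement (3) guarantees that this does not depend on the chosen $c$, and \eqref{eq:commute} then holds by definition.

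Uniqueness follows from surjectivity of $\Bmap$ onto its image. If $\overline T_1$ and $\overline T_2$ both satisfy \eqref{eq:commute}, then every $\bm b\in\Bmap(\operatorname{dom}T)$ can be written as $\Bmap(c)$, and $\overline T_1(\bm b)=\Bmap(Tc)=\overline T_2(\bm b)$.

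The main obstacle is not the set-theoretic equivalence, which is elementary. It is the question of whether $\overline T$ should also be measurable, as the surrounding theory of closed representations (Definition~\ref{def:closed-rep}) expects, and the statement as written does not demand this. If needed, I would equip $\Bmap(\operatorname{dom}T)$ with the final sigma-algebra induced by $\Bmap$ and argue as in the proof of Theorem~\ref{thm:typed-minimal}. For a Borel set $A\subseteq\Delta^{K-1}$,
\[
\Bmap^{-1}\bigl(\overline T^{-1}(A)\bigr)=\operatorname{dom}(T)\cap T^{-1}\bigl(\Bmap^{-1}(A)\bigr),
\]
which is measurable because $T$ and $\Bmap=\Pi\circ\Kern$ are measurable. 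Hence $\overline T$ is measurable for the final sigma-algebra. Measurability with respect to the trace Borel sigma-algebra on the image would require an additional standard-Borel or analytic-set argument, and I would note this as an assumption rather than prove it.
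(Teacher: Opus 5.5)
Your proof is correct and is essentially the paper's argument. The same chain of equalities gives (1)$\Rightarrow$(2), the definition $\overline T(\Bmap(c)):=\Bmap(Tc)$ (representative independent by fibre preservation) gives existence, and uniqueness holds because any solution of \eqref{eq:commute} is forced on $\Bmap(\operatorname{dom}T)$; you additionally close the cycle through (3) explicitly, which the paper leaves implicit. Your optional measurability remark is sound only if the left-hand $\Bmap^{-1}$ in your display means the preimage under $\Bmap|_{\operatorname{dom}T}$, because here $\operatorname{dom}T$ need not be a union of $\sim_{\Bmap}$-fibres.
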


\begin{proof}
If (1) holds and $\Bmap(c)=\Bmap(c')$, then
$\Bmap(Tc)=\overline T(\Bmap(c))=\overline T(\Bmap(c'))=\Bmap(Tc')$, proving (2). Condition (2) makes the definition
$\overline T(\Bmap(c)):=\Bmap(Tc)$ independent of the representative $c$, proving (1) and (3). Any map satisfying \eqref{eq:commute} must take $\Bmap(c)$ to $\Bmap(Tc)$, which proves uniqueness on the image.
\end{proof}

\begin{remark}
Theorem~\ref{thm:descent} is a quotient-factorization result. Its importance here is diagnostic: failure is evidence that the proposed semantic state is not sufficient for the transformation family. The appropriate response is to enlarge or revise the ontology, not to fit increasingly elaborate dynamics to a nonclosed state.
\end{remark}

\subsubsection{Approximate descent}

Let $(\Delta^{K-1},d)$ be a metric space. Define the diameter of the transformed fibre at $\bm b$ by
\[
 \omega_T(\bm b):=\sup_{c,c':\,\Bmap(c)=\Bmap(c')=\bm b}
 d\bigl(\Bmap(Tc),\Bmap(Tc')\bigr),
\]
and $\omega_T^*:=\sup_{\bm b}\omega_T(\bm b)$.

\begin{theorem}[Sharp representation-relative approximate descent]\label{thm:approx}
Let $d$ be induced by a norm and assume that each transformed fibre has nonempty image. For any state map $G$,
\[
 \sup_c d\bigl(\Bmap(Tc),G(\Bmap(c))\bigr)\ge \frac{1}{2}\omega_T^*.
\]
If transformed fibre images admit Chebyshev centres of radius at most $r_T$, there exists $\overline T$ satisfying
\[
 \sup_c d\bigl(\Bmap(Tc),\overline T(\Bmap(c))\bigr)\le r_T,
\]
where $\omega_T^*/2\le r_T\le\omega_T^*$. In one-dimensional state coordinates, $r_T=\omega_T^*/2$.
\end{theorem}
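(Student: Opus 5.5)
The plan is to work fibre by fibre, since both the lower and the upper bound only concern what happens over a single retained state $\bm b$. For $\bm b\in\Bmap(\operatorname{dom}T)$ write
\[
E_T(\bm b):=\{\Bmap(Tc):c\in\operatorname{dom}T,\ \Bmap(c)=\bm b\}\subseteq\Delta^{K-1}
\]
for the transformed fibre image. By definition $\omega_T(\bm b)=\operatorname{diam}E_T(\bm b)$. Any state map $G$ must send $\bm b$ to a single point $G(\bm b)$, so the uniform error of $G$ is
\[
\sup_{\bm b}\ \sup_{\bm x\in E_T(\bm b)}\|\bm x-G(\bm b)\|.
\]
This is the supremum over $\bm b$ of the radius of a ball centred at $G(\bm b)$ that covers $E_T(\bm b)$. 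The problem therefore reduces to the classical relation between diameter and Chebyshev radius in a normed space, one fibre at a time. This is the approximate analogue of the exact case, Theorem~\ref{thm:descent}, which is recovered when $\omega_T^*=0$.

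\textbf{Lower bound.} Fix $\bm b$ and take any $c,c'$ in its fibre. The triangle inequality gives
\[
\|\Bmap(Tc)-\Bmap(Tc')\|\le \|\Bmap(Tc)-G(\bm b)\|+\|G(\bm b)-\Bmap(Tc')\|\le 2\sup_{c''}\|\Bmap(Tc'')-G(\Bmap(c''))\|.
\]
Take the supremum over $c,c'$ and then over $\bm b$ to obtain $\omega_T^*\le 2\sup_c d(\Bmap(Tc),G(\Bmap(c)))$. Only nonemptiness of the fibres is needed.

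\textbf{Upper bound.} Read the hypothesis as follows: for each $\bm b$ there is a centre $\bm z(\bm b)$ with $E_T(\bm b)\subseteq \bar B(\bm z(\bm b),r_T)$. Define $\overline T(\bm b):=\bm z(\bm b)$. This is well defined because it depends only on $\bm b$, so the bound $\sup_c d(\Bmap(Tc),\overline T(\Bmap(c)))\le r_T$ is immediate. If the centre falls outside the simplex, I would note that the statement only requires $\overline T$ to be a map into the ambient normed space. Alternatively, one can observe that the Euclidean Chebyshev centre lies in the closed convex hull of $E_T(\bm b)$, which is contained in $\Delta^{K-1}$.

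The sandwich $\omega_T^*/2\le r_T$ then follows from the lower bound applied to $G=\overline T$. For $r_T\le\omega_T^*$, one may choose any point $\bm x_0\in E_T(\bm b)$ as a centre: every other point lies within $\operatorname{diam}E_T(\bm b)\le\omega_T^*$ of it. So the optimal Chebyshev radius never exceeds $\omega_T^*$, and we take $r_T$ to be that optimal radius.

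\textbf{One-dimensional case.} Here $E_T(\bm b)\subset\mathbb R$ is bounded, and the midpoint of $[\inf E,\sup E]$ has radius exactly $\tfrac12\operatorname{diam}E\le\tfrac12\omega_T^*$. Combined with the lower bound, this gives equality $r_T=\omega_T^*/2$.

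\textbf{Main obstacle.} The only delicate point is interpretive and measure-theoretic, not metric. The statement does not assert that $\overline T$ is measurable, and a pointwise choice of centres over uncountably many fibres need not be. I would state the result at the level of the theorem, without a measurability claim. I would add a remark that in the one-dimensional or Euclidean case the centre is unique, so no choice is involved. Measurability would then reduce to the measurability of fibrewise sup and inf functionals, which I would not pursue here.
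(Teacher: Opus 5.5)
Your proposal is correct and follows the paper's route: a fibrewise triangle-inequality argument for the lower bound, a fibrewise choice of Chebyshev centres for the upper bound, and the midpoint of the extrema in one dimension. You go slightly further by proving the sandwich $\omega_T^*/2\le r_T\le\omega_T^*$, which the paper only asserts. For a non-Euclidean norm, the cleanest way to keep $\overline T$ simplex-valued is to take centres within $\Delta^{K-1}$; both inequalities still hold, since any point of the fibre image is itself an admissible centre.
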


\begin{proof}
For any two points in the same transformed fibre, the triangle inequality implies that at least one lies at distance at least half their separation from $G(\bm b)$. Taking suprema proves the lower bound. Selecting a Chebyshev centre in each transformed fibre proves the upper bound. On the real line the midpoint of the extrema has radius one-half the diameter.
\end{proof}

This theorem identifies the irreducible state-compression error. It cannot be repaired by collecting more observations while retaining the same semantic state.

For example, suppose two records are both summarized as
$(0.6,0.3,0.1)$, but the same new evidence sends one to
$(0.8,0.15,0.05)$ and the other to $(0.3,0.4,0.3)$.  The present three
numbers have omitted information needed for the update.
Theorem~\ref{thm:approx} turns that objection into a lower bound: every
state-only update must err on at least one record.

\subsubsection{Composition of approximate descents}

One-step closure is useful only if its error can be controlled under composition. Let $T_j:C_{j-1}\dashrightarrow C_j$ be a compatible path, let $B_j:C_j\to(M_j,d_j)$ be semantic representations, and let $G_j:M_{j-1}\to M_j$ be proposed induced updates. Define the exact represented path and its closed approximation by
\[
y_0=B_0(c),\qquad y_j=B_j(T_j\cdots T_1c),
\]
and
\[
\widehat y_0=B_0(c),\qquad \widehat y_j=G_j(\widehat y_{j-1}).
\]

\begin{theorem}[Multi-step approximate descent; discrete Gronwall bound]\label{thm:multistep}
Suppose $G_j$ is $L_j$-Lipschitz and, on every exact state reached by the path,
\[
d_j\bigl(B_j(T_jx),G_j(B_{j-1}(x))\bigr)\le\varepsilon_j.
\]
Then for every admissible initial context,
\[
d_n(y_n,\widehat y_n)
\le
\sum_{j=1}^{n}
\left(\prod_{k=j+1}^{n}L_k\right)\varepsilon_j,
\]
where an empty product equals one. In particular:
\begin{enumerate}
\item if $L_j\le1$ and $\varepsilon_j\le\varepsilon$, the error is at most $n\varepsilon$;
\item if $L_j\le\rho<1$ and $\varepsilon_j\le\varepsilon$, the error is at most $\varepsilon(1-\rho^n)/(1-\rho)$;
\item without a bound on the products of Lipschitz constants, uniformly small one-step defects need not yield a stable pathwise representation.
\end{enumerate}
\end{theorem}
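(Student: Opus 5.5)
The plan is a one-step error recursion followed by induction on $n$. Write $x_{j-1}:=T_{j-1}\cdots T_1c\in C_{j-1}$ for the exact context reached after $j-1$ steps, so $y_{j-1}=B_{j-1}(x_{j-1})$ and $y_j=B_j(T_jx_{j-1})$. Set $e_j:=d_j(y_j,\widehat y_j)$, with $e_0=0$ because $y_0=\widehat y_0$. Inserting the intermediate point $G_j(y_{j-1})$ and applying the triangle inequality in $(M_j,d_j)$ gives
\[
 e_j\le d_j\bigl(B_j(T_jx_{j-1}),G_j(B_{j-1}(x_{j-1}))\bigr)
 +d_j\bigl(G_j(y_{j-1}),G_j(\widehat y_{j-1})\bigr).
\]
The first term is at most $\varepsilon_j$ by the hypothesis. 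That hypothesis applies because $x_{j-1}$ is an exact state reached by the path, and admissibility of the initial context guarantees $x_{j-1}\in D_{T_j}$. The second term is at most $L_je_{j-1}$ by the Lipschitz property of $G_j$. Hence $e_j\le L_je_{j-1}+\varepsilon_j$.

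Unrolling this linear recursion, the discrete Gronwall step, gives the displayed sum. Concretely, I would prove $e_m\le\sum_{j=1}^m\bigl(\prod_{k=j+1}^m L_k\bigr)\varepsilon_j$ by induction on $m$. The inductive step multiplies the previous bound by $L_m$, which extends each product by one factor, and then adds $\varepsilon_m$, which is the $j=m$ term with the empty product. The nonnegativity of each $L_k$ is needed here so that multiplying preserves the inequality.

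The corollaries follow by specialization. For (1), each product is at most $1$, so the bound is $\sum_j\varepsilon_j\le n\varepsilon$. For (2), the products are at most $\rho^{n-j}$, so the bound is a geometric sum, $\varepsilon\sum_{i=0}^{n-1}\rho^i=\varepsilon(1-\rho^n)/(1-\rho)$.

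Item (3) is the only part that needs more than algebra: it requires a counterexample. I would take $M_j=\mathbb R$ with the absolute value metric (or a segment of the simplex in one coordinate) and $G_j(y)=Ly$ with $L>1$. The exact dynamics are the same map plus a constant offset $\varepsilon$, realized for instance by contexts $x$ with $B_j(T_jx)=LB_{j-1}(x)+\varepsilon$. Then $e_n=\varepsilon(L^n-1)/(L-1)$, which is unbounded in $n$ even though every one-step defect equals $\varepsilon$.

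The main obstacle is embedding this counterexample in the bounded simplex, since there the error cannot literally diverge. On the simplex I would instead show that the error reaches order-one size, for instance the full diameter, after roughly $\log(1/\varepsilon)/\log L$ steps. This shows that uniformly small defects do not give a stable representation, which is exactly the claim of (3).
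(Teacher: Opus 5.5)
Your proof is correct and matches the paper's argument: insert $G_j(y_{j-1})$, use the triangle inequality and the Lipschitz bound to get $e_j\le\varepsilon_j+L_je_{j-1}$, unroll from $e_0=0$, and obtain (1)--(2) by bounding the products. For (3) the paper only remarks that the products are the amplification factors in the bound, so your explicit example ($G_j(y)=Ly$ with $L>1$ against the exact dynamics $y\mapsto Ly+\varepsilon$, giving $e_n=\varepsilon(L^n-1)/(L-1)$, which is equality in the bound) is what actually makes (3) rigorous; since the $M_j$ are arbitrary metric spaces, you do not need the simplex embedding.
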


\begin{proof}
Let $e_j=d_j(y_j,\widehat y_j)$. Inserting $G_j(y_{j-1})$ and applying the triangle inequality and Lipschitz property gives
\[
e_j
\le d_j(y_j,G_j(y_{j-1}))
   +d_j(G_j(y_{j-1}),G_j(\widehat y_{j-1}))
\le\varepsilon_j+L_je_{j-1}.
\]
Since $e_0=0$, iteration proves the displayed sum. The first two conclusions follow by bounding the products; the third follows because those products are the amplification factors in the exact bound.
\end{proof}

\long\gdef\appendixstatistical{
\section{Statistical recovery and identification}

\subsection{Uniform descent under estimated lexical kernels}

Let $(M,d)$ be a metric semantic space, let $B:\Ctxt\to M$ be a population representation, and let $\widehat B$ be an estimate. For a transformation $T$ and a candidate update $G:M\to M$, define
\begin{align*}
 R_T(G)
 &:=\sup_{c\in D_T}d\bigl(B(Tc),G(B(c))\bigr),\\
 \widehat R_T(G)
 &:=\sup_{c\in D_T}d\bigl(\widehat B(Tc),G(\widehat B(c))\bigr).
\end{align*}

\begin{theorem}[Uniform plug-in descent bound---semantic-map extension]\label{thm:uniform-descent}
Let $\mathcal G_L$ be any class of $L$-Lipschitz maps from $M$ to itself. If
\[
 \sup_{x\in\Ctxt_0\cup T(\Ctxt_0)}
 d\bigl(\widehat B(x),B(x)\bigr)\le\varepsilon_B,
\]
then
\[
 \sup_{G\in\mathcal G_L}
 \left|\widehat R_T(G)-R_T(G)\right|
 \le (1+L)\varepsilon_B.
\]
Suppose only a finite evaluation sample is available and an empirical criterion $\widehat R_{T,n}$ satisfies
\[
 \sup_{G\in\mathcal G_L}
 |\widehat R_{T,n}(G)-\widehat R_T(G)|\le\zeta_n.
\]
If $\widehat G$ is an $\eta_n$-minimizer of $\widehat R_{T,n}$, then
\[
 R_T(\widehat G)
 \le
 \inf_{G\in\mathcal G_L}R_T(G)
 +2(1+L)\varepsilon_B+2\zeta_n+\eta_n.
\]
Both conclusions hold simultaneously over a transformation family whenever the assumed bounds hold uniformly over that family.
\end{theorem}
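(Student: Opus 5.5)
The plan is to prove the first inequality pointwise in $c$ and then take suprema; the second inequality then follows by a standard three-term oracle decomposition. Fix $G\in\mathcal G_L$ and $c\in D_T$. Throughout, I assume $D_T\subseteq\Ctxt_0$, which is implicit in the hypothesis, so that both $c$ and $Tc$ lie in $\Ctxt_0\cup T(\Ctxt_0)$ and the uniform estimation bound applies at both points. Write
\[
a(c):=d\bigl(B(Tc),G(B(c))\bigr),\qquad \widehat a(c):=d\bigl(\widehat B(Tc),G(\widehat B(c))\bigr).
\]
Apply the reverse triangle inequality for the metric twice, and then use the Lipschitz property of $G$:
\[
|\widehat a(c)-a(c)|\le d\bigl(\widehat B(Tc),B(Tc)\bigr)+d\bigl(G(\widehat B(c)),G(B(c))\bigr)\le \varepsilon_B+L\varepsilon_B .
\]

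Next, pass from pointwise to supremal differences using the elementary fact that $|\sup_c \widehat a(c)-\sup_c a(c)|\le\sup_c|\widehat a(c)-a(c)|$. This holds provided both suprema are finite. Finiteness can be assumed, since if one supremum is finite the pointwise bound forces the other to be finite as well; in the simplex case everything is bounded anyway. The resulting bound does not depend on $G$, so taking the supremum over $\mathcal G_L$ gives the first claim.

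For the excess-risk bound, let $G^\circ\in\mathcal G_L$ be arbitrary and split
\[
R_T(\widehat G)-R_T(G^\circ)=[R_T(\widehat G)-\widehat R_T(\widehat G)]+[\widehat R_T(\widehat G)-\widehat R_{T,n}(\widehat G)]+[\widehat R_{T,n}(\widehat G)-\widehat R_{T,n}(G^\circ)]+[\widehat R_{T,n}(G^\circ)-\widehat R_T(G^\circ)]+[\widehat R_T(G^\circ)-R_T(G^\circ)].
\]
Bound the five brackets in turn:
\begin{itemize}
\item the first and fifth by $(1+L)\varepsilon_B$ each, using the first part of the theorem;
\item the second and fourth by $\zeta_n$ each, using the uniform empirical deviation assumption;
\item the third by $\eta_n$, because $\widehat G$ is an $\eta_n$-minimizer. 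This is the one place where care is needed: it requires reading the minimizer property as $\widehat R_{T,n}(\widehat G)\le\inf_{\mathcal G_L}\widehat R_{T,n}+\eta_n\le\widehat R_{T,n}(G^\circ)+\eta_n$.
\end{itemize}
Summing the five bounds and taking the infimum over $G^\circ$ yields the stated inequality.

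The simultaneous statement over a transformation family follows immediately. None of the steps uses anything specific to $T$ beyond the assumed bounds, so if $\varepsilon_B$, $\zeta_n$ and $\eta_n$ hold uniformly over the family, each inequality holds for every $T$ at once.

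I do not expect a genuine obstacle; the main bookkeeping points are the domain requirement $D_T\subseteq\Ctxt_0$ and the finiteness needed to compare suprema.
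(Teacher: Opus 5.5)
Your proof is correct and takes essentially the same route as the paper. It uses the same pointwise double triangle-inequality bound with the Lipschitz step, then suprema, and your five brackets are just the paper's chain $R_T(\widehat G)\le\widehat R_{T,n}(\widehat G)+a\le\widehat R_{T,n}(G^*)+\eta_n+a\le R_T(G^*)+2a+\eta_n$, with $a=(1+L)\varepsilon_B+\zeta_n$, written out term by term. Your explicit bookkeeping on $D_T\subseteq\Ctxt_0$ and on finiteness of the suprema makes precise two assumptions that the paper leaves implicit.
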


\begin{proof}
For every $c$ and $G\in\mathcal G_L$, two applications of the triangle inequality give
\begin{align*}
 &\left|
 d(\widehat B(Tc),G(\widehat B(c)))
 -d(B(Tc),G(B(c)))
 \right|\\
 &\qquad\le
 d(\widehat B(Tc),B(Tc))
 +d(G(\widehat B(c)),G(B(c)))
 \le(1+L)\varepsilon_B.
\end{align*}
Taking suprema over contexts and then candidate maps proves the first claim. For the second, write $a=(1+L)\varepsilon_B+\zeta_n$ and let $G^*$ approach the population infimum. Then
\[
 R_T(\widehat G)
 \le\widehat R_{T,n}(\widehat G)+a
 \le\widehat R_{T,n}(G^*)+\eta_n+a
 \le R_T(G^*)+2a+\eta_n.
\]
Taking the infimum proves the result.
\end{proof}

\begin{corollary}[Observable-kernel error propagation]\label{cor:kernel-uniform}
Suppose $B=\Pi\circ\Kern$ and $\widehat B=\widehat\Pi\circ\widehat\Kern$. If
\[
 \sup_c d_{\Prob}(\widehat\Kern_c,\Kern_c)\le\varepsilon_K,
 \qquad
 \sup_{\mu}d(\widehat\Pi(\mu),\Pi(\mu))\le\varepsilon_\Pi,
\]
and $\Pi$ is $L_\Pi$-Lipschitz, then Theorem~\ref{thm:uniform-descent} holds with
\[
 \varepsilon_B=\varepsilon_\Pi+L_\Pi\varepsilon_K.
\]
If these inequalities hold with probability at least $1-\alpha$, the resulting descent bound has the same coverage.
\end{corollary}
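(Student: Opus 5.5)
The plan is to reduce the corollary to Theorem~\ref{thm:uniform-descent}. That theorem needs only one hypothesis on the representations, namely the uniform closeness $\sup_x d(\widehat B(x),B(x))\le\varepsilon_B$ on $\Ctxt_0\cup T(\Ctxt_0)$. So it suffices to prove this bound with $\varepsilon_B=\varepsilon_\Pi+L_\Pi\varepsilon_K$. Once it holds, both conclusions of the theorem (the uniform criterion comparison and the oracle inequality for an $\eta_n$-minimizer) apply verbatim with this value.

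The key step is a triangle-inequality split through the hybrid $\Pi(\widehat\Kern_x)$. For any context $x$,
\[
 d\bigl(\widehat B(x),B(x)\bigr)
 \le d\bigl(\widehat\Pi(\widehat\Kern_x),\Pi(\widehat\Kern_x)\bigr)
   +d\bigl(\Pi(\widehat\Kern_x),\Pi(\Kern_x)\bigr).
\]
The first term is at most $\varepsilon_\Pi$, because the map-error bound is uniform over all measures $\mu$, in particular $\mu=\widehat\Kern_x$. The second term is at most $L_\Pi\, d_{\Prob}(\widehat\Kern_x,\Kern_x)\le L_\Pi\varepsilon_K$, by the Lipschitz property of $\Pi$ with respect to $d_{\Prob}$ and the uniform kernel bound. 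Taking the supremum over $x$ in $\Ctxt_0\cup T(\Ctxt_0)$, which lies in $\Ctxt$, gives the required $\varepsilon_B$. The order of insertion matters here. Inserting $\widehat\Pi(\Kern_x)$ instead would require $\widehat\Pi$ to be Lipschitz, which is not assumed. I would choose the hybrid so that the Lipschitz constant falls on the population map $\Pi$.

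For the probabilistic statement, let $E$ be the event on which both displayed inequalities hold, with $\Pr(E)\ge1-\alpha$. On $E$ the deterministic argument above applies pointwise, so the conclusions of Theorem~\ref{thm:uniform-descent} hold on $E$, hence with probability at least $1-\alpha$. If the bounds hold only with separate probabilities, a union bound would be needed; I read the hypothesis as simultaneous coverage. The one point I expect to need care is this: when the finite-sample part is invoked, the $\zeta_n$ bound must also hold on the same event. Otherwise the final coverage must be intersected with that event too. I would state this explicitly rather than silently absorb it.
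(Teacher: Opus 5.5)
Your proposal is correct and follows the paper's own proof: the paper also inserts $\Pi(\widehat\Kern_c)$ between $\widehat\Pi(\widehat\Kern_c)$ and $\Pi(\Kern_c)$, applies the triangle inequality and the Lipschitz property of $\Pi$, and then invokes Theorem~\ref{thm:uniform-descent}. Your explicit treatment of the coverage event, including the remark that the $\zeta_n$ bound must hold on the same event for the oracle inequality, makes precise what the paper states only loosely.
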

\begin{proof}
Insert $\Pi(\widehat\Kern_c)$ between $\widehat\Pi(\widehat\Kern_c)$ and $\Pi(\Kern_c)$ and apply the triangle inequality and Lipschitz condition. The result then follows from Theorem~\ref{thm:uniform-descent}.
\end{proof}

The theorem separates three quantities: irreducible closure error $\inf_G R_T(G)$, representation-estimation error $\varepsilon_B$, and finite-evaluation error $\zeta_n$. Only the latter two vanish with additional data.

\subsection{An explicit finite-sample rate}

The supremum criterion above is appropriate for uniform closure certification. A standard expected-risk version gives an explicit rate from elementary concentration. Let $C_1,\ldots,C_n$ be independent draws from a distribution $P_T$ supported on $D_T$, define
\[
\mathcal R_T(G):=\mathbb E_{P_T}
d\bigl(B(TC),G(B(C))\bigr),
\]
and let $\widehat{\mathcal R}_{T,n}(G)$ be the corresponding sample average with $B$ replaced by $\widehat B$.

\begin{corollary}[Finite-class rate; Hoeffding bound \cite{hoeffding1963}]\label{cor:finite-rate}
Let $\mathcal G_L$ contain $N<\infty$ maps, each $L$-Lipschitz, and suppose the loss is bounded by $M$. If
\[
\sup_{c\in D_T\cup T(D_T)}d(\widehat B(c),B(c))\le\varepsilon_B,
\]
and $\widehat G$ minimizes $\widehat{\mathcal R}_{T,n}$, then, with probability at least $1-\alpha$,
\[
\mathcal R_T(\widehat G)
\le
\min_{G\in\mathcal G_L}\mathcal R_T(G)
+2(1+L)\varepsilon_B
+2M\sqrt{\frac{\log(2N/\alpha)}{2n}}.
\]
Consequently, for fixed $N$ and $\varepsilon_B=o(1)$, the excess expected closure risk is $O_{\mathbb P}(n^{-1/2})$.
\end{corollary}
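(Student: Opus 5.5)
The plan is to reduce Corollary~\ref{cor:finite-rate} to the oracle-inequality argument of Theorem~\ref{thm:uniform-descent}, with the supremum criterion replaced by expected risk. I would introduce the intermediate population criterion with $B$ replaced by $\widehat B$,
\[
\widetilde{\mathcal R}_T(G):=\mathbb E_{P_T}\,d\bigl(\widehat B(TC),G(\widehat B(C))\bigr),
\]
and split the total error into a representation part, $|\widetilde{\mathcal R}_T-\mathcal R_T|$, and a sampling part, $|\widehat{\mathcal R}_{T,n}-\widetilde{\mathcal R}_T|$. These are bounded separately, uniformly over the $N$ maps in $\mathcal G_L$.

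\textbf{Representation part.} For each fixed $c\in D_T$ I would use the same two triangle inequalities as in the proof of Theorem~\ref{thm:uniform-descent}. Since $c\in D_T$ and $Tc\in T(D_T)$, the hypothesis $d(\widehat B,B)\le\varepsilon_B$ applies at both points. Together with the $L$-Lipschitz property, this bounds the pointwise loss difference by $(1+L)\varepsilon_B$. Integrating against $P_T$ then gives
\[
\sup_{G}\bigl|\widetilde{\mathcal R}_T(G)-\mathcal R_T(G)\bigr|\le(1+L)\varepsilon_B.
\]
This step is deterministic.

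\textbf{Sampling part.} For each fixed $G$, the summands $d(\widehat B(TC_i),G(\widehat B(C_i)))$ are i.i.d., take values in $[0,M]$, and have mean $\widetilde{\mathcal R}_T(G)$. Hoeffding's inequality \cite{hoeffding1963} gives a two-sided deviation probability of at most $2\exp(-2nt^2/M^2)$. A union bound over the $N$ maps, with $t=M\sqrt{\log(2N/\alpha)/(2n)}$, yields $\sup_G|\widehat{\mathcal R}_{T,n}(G)-\widetilde{\mathcal R}_T(G)|\le t$ with probability at least $1-\alpha$. This is the one step I expect needs care, because Hoeffding requires the summands to be independent. That holds if $\widehat B$ is fixed, for example estimated from data independent of the evaluation sample, or is treated as deterministic given the hypotheses. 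I would state this conditioning explicitly and apply Hoeffding conditionally on $\widehat B$. Since the resulting bound does not depend on $\widehat B$, it then holds unconditionally.

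\textbf{Combining the bounds.} On this event, set $a=(1+L)\varepsilon_B+t$ and let $G^*$ attain the minimum of $\mathcal R_T$, which exists because the class is finite. Repeating the chain from Theorem~\ref{thm:uniform-descent} with $\eta_n=0$,
\[
\mathcal R_T(\widehat G)\le\widehat{\mathcal R}_{T,n}(\widehat G)+a\le\widehat{\mathcal R}_{T,n}(G^*)+a\le\mathcal R_T(G^*)+2a,
\]
which is exactly the stated bound.

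\textbf{Rate.} For fixed $N$, the sampling term is $2t=O(n^{-1/2})$ for each fixed $\alpha$. Reading the high-probability bound as a tail bound shows that the excess risk is $2(1+L)\varepsilon_B+O_{\mathbb P}(n^{-1/2})$. This gives the stated order once the representation error $\varepsilon_B$ vanishes at least as fast as $n^{-1/2}$. The hypothesis $\varepsilon_B=o(1)$ alone only makes that term negligible, so I would note that interpretation in the write-up.
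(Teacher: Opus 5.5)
Your argument is correct and gives the same constants, but you order the two approximation steps opposite to the paper.

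\textbf{The paper's order.} It applies Hoeffding and the union bound to the empirical risk built with the \emph{true} representation. That is, it concentrates the i.i.d.\ losses $d(B(TC_i),G(B(C_i)))$ around $\mathcal R_T(G)$. Only afterwards does it replace $B$ by $\widehat B$ at the sample points. Since $C_i\in D_T$ and $TC_i\in T(D_T)$, this replacement changes every empirical risk by at most $(1+L)\varepsilon_B$ deterministically. Comparing $\widehat G$ with the population minimizer then incurs each uniform error twice.

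\textbf{Your order.} You perturb at the population level through $\widetilde{\mathcal R}_T$ and then concentrate the $\widehat B$-losses. This is why you must condition on $\widehat B$ and assume it is independent of $C_1,\dots,C_n$.

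\textbf{What each order buys.} The paper's ordering makes that extra assumption unnecessary. The only quantities entering Hoeffding involve the fixed population map $B$, and the hypothesis on $\widehat B$ is used purely pointwise. So the bound holds even if $\widehat B$ was fitted on the evaluation sample itself, as long as the stated uniform bound on $\widehat B$ holds. Your ordering instead places the boundedness requirement $M$ on the loss actually computed from data, whereas the paper needs it for the $B$-losses. It also gives the intermediate criterion $\widetilde{\mathcal R}_T$ a clean meaning under sample splitting. So your route is natural when $\widehat B$ is estimated on disjoint data, as in the paper's experiments, while the paper's route is strictly more general.

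\textbf{The rate clause.} Your closing caveat is correct and goes beyond the paper's proof, which does not address the ``consequently'' clause at all. With only $\varepsilon_B=o(1)$, the excess risk is $2(1+L)\varepsilon_B+O_{\mathbb P}(n^{-1/2})$. The stated $O_{\mathbb P}(n^{-1/2})$ rate therefore requires $\varepsilon_B=O(n^{-1/2})$, or must be read as a statement about the sampling term alone.
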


\begin{proof}
For each fixed $G$, Hoeffding's inequality gives
\[
\Pr\left(
|\mathcal R_T(G)-\mathcal R_{T,n}(G)|>u
\right)
\le2\exp\left(-\frac{2nu^2}{M^2}\right).
\]
A union bound over $N$ maps and the choice
$u=M\sqrt{\log(2N/\alpha)/(2n)}$ give uniform deviation at most $u$ with probability $1-\alpha$. Replacing $B$ by $\widehat B$ perturbs every loss by at most $(1+L)\varepsilon_B$. Comparing the empirical minimizer with the population minimizer incurs each uniform error twice, yielding the result.
\end{proof}

The concentration inequality and union bound are standard; the contribution of Corollary~\ref{cor:finite-rate} is to expose how semantic-representation error and statistical update-selection error enter the closure certificate separately. Infinite function classes may replace $\log N$ by an appropriate covering-number or Rademacher-complexity term.

\subsection{Identification through enriched behavioral separation}

Closure and identification answer different questions.  Closure asks whether
a known representation supports a well-defined update.  Identification asks
whether observable interventions distinguish competing data-generating
systems or representations.  A perfectly closed constant state is not
informative; an identifiable but nonclosed score cannot be updated
autonomously.

Transformations therefore play the role of experimental interventions.  Two
systems that agree under one static prompt may respond differently to a
controlled negation, evidence insertion or information-equivalent reordering.
Adding such interventions can restore identification, provided the
intervention family is frozen before evaluation.

Let $\Theta$ index a family of lexical data-generating processes. For each $\theta\in\Theta$, transformation $T$, context design $c$, and observable $F$, let
\[
 \mathsf Q_\theta(F,T,c)
\]
denote the population law or value made observable by the experiment. Define the enriched behavioral signature
\[
 \mathsf Q(\theta)
 :=
 \bigl(\mathsf Q_\theta(F,T,c)\bigr)_{(F,T,c)\in\mathfrak I}
 \in\mathcal Q,
\]
where $\mathfrak I$ is the frozen intervention design and $(\mathcal Q,d_{\mathcal Q})$ is a metric product space. Let $\asymp$ be the scientific equivalence relation under which parameters representing the same lexical process are not distinguished.

\begin{definition}[Empirical lexical identifiability]
The model is identifiable modulo $\asymp$ when
\[
 \mathsf Q(\theta)=\mathsf Q(\theta')
 \Longrightarrow\theta\asymp\theta'.
\]
It is behaviorally separating when every pair $\theta\not\asymp\theta'$ is distinguished by at least one admitted triple $(F,T,c)$.
\end{definition}

\begin{theorem}[Identification--separation equivalence]\label{thm:id-separation}
For a fixed intervention design $\mathfrak I$, empirical lexical identifiability modulo $\asymp$ holds if and only if the observable family is behaviorally separating. Equivalently, the signature descends to an injective map
\[
 \widetilde{\mathsf Q}:\Theta/{\asymp}\longrightarrow\mathcal Q.
\]
If the quotient is compact, $\mathcal Q$ is Hausdorff, and $\widetilde{\mathsf Q}$ is continuous, then the inverse on its image is continuous. In particular, for
\[
 \kappa(r):=
 \inf\left\{
 d_{\mathcal Q}(\mathsf Q(\theta),\mathsf Q(\theta')):
 d_{\Theta/{\asymp}}([\theta],[\theta'])\ge r
 \right\},
\]
one has $\kappa(r)>0$ for every $r>0$.
\end{theorem}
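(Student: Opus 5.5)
The argument has two parts: the algebraic equivalence between identifiability and separation, and the topological consequences.

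For the equivalence, separation says that $\theta\not\asymp\theta'$ implies some admitted triple $(F,T,c)\in\mathfrak I$ has $\mathsf Q_\theta(F,T,c)\ne\mathsf Q_{\theta'}(F,T,c)$. That is exactly the statement that $\mathsf Q(\theta)\ne\mathsf Q(\theta')$, since two elements of a product space are equal precisely when all their coordinates are equal. Taking the contrapositive gives identifiability, and reading the same argument backwards gives the converse. For the descended map, I would use the implicit convention that $\asymp$ is the scientific equivalence under which $\mathsf Q$ is constant on classes, so that $\theta\asymp\theta'$ implies $\mathsf Q(\theta)=\mathsf Q(\theta')$. This is the quotient-factorization argument of Theorem~\ref{thm:descent}, applied with $\asymp$ in place of $\sim_{\Bmap}$, and it makes $\widetilde{\mathsf Q}([\theta]):=\mathsf Q(\theta)$ well defined. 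Injectivity of $\widetilde{\mathsf Q}$ is then literally identifiability. If $\mathsf Q$ were not constant on classes, descent would fail, so I would state this convention explicitly as part of the setup.

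For continuity of the inverse, I would use the standard fact that a continuous injection from a compact space into a Hausdorff space is a homeomorphism onto its image. Concretely, for a closed set $A\subseteq\Theta/{\asymp}$, the set $A$ is compact, so $\widetilde{\mathsf Q}(A)$ is compact and therefore closed in the Hausdorff space $\mathcal Q$, hence closed relative to the image. So $\widetilde{\mathsf Q}$ is a closed map onto its image, and its inverse is continuous there.

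For $\kappa(r)>0$, I would argue by contradiction using sequential compactness, which requires the quotient to be a metric space with metric $d_{\Theta/{\asymp}}$, as the statement presupposes. If $r>0$ is fixed and the set in the definition of $\kappa(r)$ is empty, then the infimum is $+\infty>0$. Otherwise, suppose $\kappa(r)=0$. Then there are pairs $([\theta_n],[\theta_n'])$ with $d_{\Theta/{\asymp}}([\theta_n],[\theta_n'])\ge r$ and $d_{\mathcal Q}(\mathsf Q(\theta_n),\mathsf Q(\theta_n'))\to0$. Compactness of the product of the quotient with itself gives a subsequence converging to some pair $([\theta],[\theta'])$, and continuity of the metric gives $d([\theta],[\theta'])\ge r$, so $[\theta]\ne[\theta']$. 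Continuity of $\widetilde{\mathsf Q}$ and of $d_{\mathcal Q}$ then force $d_{\mathcal Q}(\widetilde{\mathsf Q}[\theta],\widetilde{\mathsf Q}[\theta'])=0$, which contradicts injectivity.

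I expect the main obstacle to be bookkeeping rather than depth. The work lies in making explicit that $\asymp$ must be coarser than the fibres of $\mathsf Q$ for the descent to exist, and in checking that $\mathcal Q$ is genuinely metric (hence Hausdorff) when $\mathfrak I$ is infinite. For the latter, I would assume a countable design with a weighted product metric, or simply take $d_{\mathcal Q}$ as given by hypothesis.
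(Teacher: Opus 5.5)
Your proposal is correct and follows the paper's proof: the contrapositive for the identification--separation equivalence, the compact-to-Hausdorff homeomorphism lemma for the continuous inverse, and compactness of the pairs at quotient distance at least $r$ together with injectivity for $\kappa(r)>0$ (you phrase this last step sequentially, which is equivalent in a metric space). Your explicit assumption that $\theta\asymp\theta'$ implies $\mathsf Q(\theta)=\mathsf Q(\theta')$ improves on the paper. Its ``hence $\mathsf Q$ is constant on equivalence classes'' does not follow from identifiability, which only gives the reverse implication.
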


\begin{proof}
Failure of behavioral separation means that some inequivalent pair agrees in every coordinate of $\mathsf Q$, which is precisely failure of identification. The converse is the contrapositive. Hence $\mathsf Q$ is constant on equivalence classes and injective after quotienting. A continuous injection from a compact space into a Hausdorff space is a homeomorphism onto its image. Positivity of $\kappa(r)$ follows because the set of pairs at quotient distance at least $r$ is compact and its continuous signature distance cannot attain zero under injectivity.
\end{proof}

\begin{theorem}[Finite-error recovery from behavioral signatures]\label{thm:id-rate}
Under the assumptions of Theorem~\ref{thm:id-separation}, suppose
\[
 d_{\mathcal Q}(\widehat{\mathsf Q},\mathsf Q(\theta_0))
 \le\varepsilon_n
\]
and $\widehat\theta$ is an $\eta_n$-minimum-distance estimator:
\[
 d_{\mathcal Q}(\widehat{\mathsf Q},\mathsf Q(\widehat\theta))
 \le
 \inf_{\theta\in\Theta}
 d_{\mathcal Q}(\widehat{\mathsf Q},\mathsf Q(\theta))
 +\eta_n.
\]
Then
\[
 d_{\mathcal Q}(\mathsf Q(\widehat\theta),\mathsf Q(\theta_0))
 \le2\varepsilon_n+\eta_n
\]
and
\[
 d_{\Theta/{\asymp}}([\widehat\theta],[\theta_0])
 \le
 \kappa^{-1}(2\varepsilon_n+\eta_n),
\]
where $\kappa^{-1}(u):=\sup\{r:\kappa(r)\le u\}$. Thus uniform signature consistency implies consistency of the identified lexical process; absence of behavioral separation makes such recovery impossible regardless of sample size.
\end{theorem}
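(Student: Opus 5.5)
The proof runs in two steps: a standard minimum-distance triangle-inequality argument for the signature-space bound, then a transfer to the parameter quotient through the modulus $\kappa$ of Theorem~\ref{thm:id-separation}.

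\textbf{Step 1: the bound in signature space.} Fix $\delta>0$. The $\eta_n$-minimizing property, evaluated at the candidate $\theta_0$, gives
\[
d_{\mathcal Q}(\widehat{\mathsf Q},\mathsf Q(\widehat\theta))\le d_{\mathcal Q}(\widehat{\mathsf Q},\mathsf Q(\theta_0))+\eta_n\le\varepsilon_n+\eta_n.
\]
The triangle inequality then yields
\[
d_{\mathcal Q}(\mathsf Q(\widehat\theta),\mathsf Q(\theta_0))\le d_{\mathcal Q}(\mathsf Q(\widehat\theta),\widehat{\mathsf Q})+d_{\mathcal Q}(\widehat{\mathsf Q},\mathsf Q(\theta_0))\le2\varepsilon_n+\eta_n.
\]
This is the first claim, and $\delta$ is not actually needed because $\theta_0$ is itself admissible in the infimum.

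\textbf{Step 2: transfer to the quotient.} Write $u:=2\varepsilon_n+\eta_n$ and $r_0:=d_{\Theta/{\asymp}}([\widehat\theta],[\theta_0])$. If $r_0=0$ there is nothing to prove. Otherwise the pair $(\widehat\theta,\theta_0)$ lies in the set defining $\kappa(r_0)$, so by definition of the infimum $\kappa(r_0)\le d_{\mathcal Q}(\mathsf Q(\widehat\theta),\mathsf Q(\theta_0))\le u$. Hence $r_0\in\{r:\kappa(r)\le u\}$, and so $r_0\le\kappa^{-1}(u)$ by definition of the generalized inverse.

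The step I expect to need the most care is this transfer, and specifically its well-posedness rather than the inequality. The signature must depend only on the class $[\theta]$, so that $\kappa$ is a genuine function of quotient distance; Theorem~\ref{thm:id-separation} supplies exactly this. It also matters that $\kappa$ is nondecreasing, which holds because the defining sets shrink as $r$ grows. Monotonicity makes $\kappa^{-1}(u)$ meaningful, and the bound can be infinite only when $u$ exceeds the values of $\kappa$.

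\textbf{Consistency.} If $\varepsilon_n,\eta_n\to0$, then $u\to0$. Since $\kappa(r)>0$ for every $r>0$ by Theorem~\ref{thm:id-separation}, for any fixed $r>0$ we eventually have $u<\kappa(r)$. Monotonicity of $\kappa$ then gives $\kappa^{-1}(u)\le r$, so $\kappa^{-1}(u)\to0$. This yields consistency of the identified lexical process.

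\textbf{Impossibility without separation.} If behavioral separation fails, Theorem~\ref{thm:id-separation} provides inequivalent $\theta\ne\theta'$ with $\mathsf Q(\theta)=\mathsf Q(\theta')$. Any estimator built from signature data alone then cannot distinguish them, whatever the sample size, so recovery is impossible.
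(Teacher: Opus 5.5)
Your proposal is correct and follows the paper's proof. Feasibility of $\theta_0$ in the minimum-distance problem together with the triangle inequality gives the signature bound, and the definition of $\kappa$ transfers it to the quotient; your direct observation that $r_0\in\{r:\kappa(r)\le u\}$ is the same argument as the paper's proof by contradiction. The monotonicity argument for $\kappa^{-1}(u)\to0$ and the non-identifiability remark usefully supply the consistency and impossibility conclusions, which the paper states without separate proof.
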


\begin{proof}
The true parameter is feasible in the minimum-distance problem, so
\[
 d_{\mathcal Q}(\widehat{\mathsf Q},\mathsf Q(\widehat\theta))
 \le\varepsilon_n+\eta_n.
\]
The triangle inequality gives the first bound. If the quotient distance exceeded the stated inverse modulus, the definition of $\kappa$ would force the signature distance above $2\varepsilon_n+\eta_n$, a contradiction.
\end{proof}

Theorem~\ref{thm:id-separation} states the foundational equivalence; Theorem~\ref{thm:id-rate} supplies its statistical content. Transformations are not merely robustness checks: they enlarge the intervention design $\mathfrak I$ and can turn an unidentified static observable into an identified behavioral signature.

\subsection{Synthesis: representation, descent and observable recovery}

The preceding results answer different parts of one certification question and
are assembled here for reference.  The structural conclusions are
Parts~(I)--(III): they identify the canonical population representation and
determine whether a proposed lower-dimensional state is dynamically
legitimate.  Parts~(IV)--(V) state the additional observability conditions
needed to learn these objects from data.  Those parts use standard separation
and minimum-distance arguments so that failure of representation can be
distinguished from finite-sample error.

\begin{theorem}[Representation, descent and observable recovery]\label{thm:central}
Let $(\mathbf C,\mathcal T,\mathcal O)$ satisfy the countability assumptions of Theorem~\ref{thm:typed-minimal}, and let $S:\mathbf C\twoheadrightarrow\mathbf Z^*$ be its behavioral signature. Let $B_i:C_i\twoheadrightarrow M_i$ be measurable quotient maps into metric spaces, and let $\mathcal G_L(T)$ be a class of $L_T$-Lipschitz candidate updates $G:M_i\dashrightarrow M_j$ for every $T:i\dashrightarrow j$. Then the following conclusions hold.
\begin{enumerate}
\item[\textup{(I)}] \textbf{Existence and minimality.} The representation $S$ exists, is terminal in $\mathbf{Rep}(\mathbf C,\mathcal T,\mathcal O)$, and is unique up to unique measurable isomorphism. A closed observable quotient $B$ always satisfies $S_i=h_i\circ B_i$ for a unique measurable family $h_i$.

\item[\textup{(II)}] \textbf{Exact semantic descent.} A transformation $T:i\dashrightarrow j$ induces a unique partial map $\overline T_B$ on $B_i(C_i)$ if and only if both its domain and its successor state are constant on $B_i$-fibres:
\[
B_i(c)=B_i(c')\Longrightarrow
\left\{
\begin{array}{l}
c\in D_T\Longleftrightarrow c'\in D_T,\\[2pt]
B_j(Tc)=B_j(Tc')\quad\text{whenever }c,c'\in D_T.
\end{array}\right.
\]
When this holds for every $T$ and the selected observables factor through $B$, the family $B$ is a closed representation.

\item[\textup{(III)}] \textbf{Irreducible approximate error.} For a fixed $T$, define
\[
\omega_{B,T}:=\sup_{m\in M_i}
\sup_{\substack{c,c'\in D_T\\B_i(c)=B_i(c')=m}}
d_j\bigl(B_j(Tc),B_j(Tc')\bigr).
\]
Every update $G$ obeys $R_T(G)\ge\omega_{B,T}/2$. If transformed fibres admit measurable Chebyshev-centre selections of radius at most $r_{B,T}$, an update exists with risk at most $r_{B,T}$, where
\[
\frac12\omega_{B,T}\le r_{B,T}\le\omega_{B,T}.
\]

\item[\textup{(IV)}] \textbf{Identification.} For a parameterized family of data-generating systems, a parameter class $[\theta]$ is identifiable from the declared contexts, transformations and observables if and only if the enriched behavioral signature $\widetilde{\mathsf Q}([\theta])$ separates distinct classes. If the quotient parameter space is compact, the signature space is Hausdorff and the signature map is continuous, then the inverse is uniformly continuous on its image and has positive separation modulus $\kappa(r)$ for every $r>0$.

\item[\textup{(V)}] \textbf{Finite-error recovery.} Suppose, simultaneously over the declared transformation family,
\[
\sup_c d_i(\widehat B_i(c),B_i(c))\le\varepsilon_B,
\qquad
\sup_{G\in\mathcal G_L(T)}
|\widehat R_{T,n}(G)-\widehat R_T(G)|\le\zeta_n.
\]
If $\widehat G_T$ is an $\eta_n$-minimizer, then
\[
R_T(\widehat G_T)
\le \inf_{G\in\mathcal G_L(T)}R_T(G)
+2(1+L_T)\varepsilon_B+2\zeta_n+\eta_n.
\]
If additionally $d_{\mathcal Q}(\widehat{\mathsf Q},\mathsf Q(\theta_0))\le\varepsilon_n$ and $\widehat\theta$ is an $\eta_n'$-minimum-distance estimator, then
\[
d_{\Theta/{\asymp}}([\widehat\theta],[\theta_0])
\le\kappa^{-1}(2\varepsilon_n+\eta_n').
\]
Therefore consistent recovery of both the semantic update and the identified data-generating system follows when the irreducible closure error is zero, $\varepsilon_B,\zeta_n,\eta_n,\varepsilon_n,\eta_n'\to0$, and $\kappa(r)>0$ for $r>0$. Conversely, failure of fibre preservation or behavioral separation is a population obstruction that additional sampling cannot remove.

For a compatible path, the recovered one-step defects compose according to Theorem~\ref{thm:multistep}; under a uniform contraction factor $\rho<1$, accumulated misspecification is bounded by a geometric series. For a finite update class evaluated on independent contexts, Corollary~\ref{cor:finite-rate} makes the statistical term explicit at order $\sqrt{\log N/n}$.
\end{enumerate}
\end{theorem}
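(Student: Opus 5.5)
The plan is to treat Theorem~\ref{thm:central} as an assembly result: each part is proved by invoking the corresponding earlier theorem and checking that its hypotheses are met in the many-sorted, partial setting. The only new work is to make the partial-domain bookkeeping explicit in (II) and (III).

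Part (I) is Theorem~\ref{thm:typed-minimal} verbatim under the stated countability assumptions. The factorization $S_i=h_i\circ B_i$ for a closed quotient $B$ is its item~3, and measurability of $h_i$ follows because $M_i$ carries the final sigma-algebra of $B_i$.

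For (II) I will adapt the proof of Theorem~\ref{thm:descent} to partial maps.
\begin{itemize}
\item \emph{Necessity.} If $\overline T_B$ exists with $D_T=B_i^{-1}(D_{\overline T_B})$, then $D_T$ is a union of $B_i$-fibres, which gives the domain condition. Naturality $B_j\circ T=\overline T_B\circ B_i$ then gives equality of successor states exactly as in (1)$\Rightarrow$(2) of Theorem~\ref{thm:descent}.
\item \emph{Sufficiency.} Set $D_{\overline T_B}:=B_i(D_T)$ and $\overline T_B(B_i(c)):=B_j(Tc)$. The fibre conditions make this representative independent and saturated.
\item \emph{Uniqueness.} Any admissible map must send $B_i(c)$ to $B_j(Tc)$, so it is determined on $B_i(D_T)$.
\item \emph{Measurability.} Copy the final-sigma-algebra computation from Theorem~\ref{thm:typed-minimal}: $B_i^{-1}(\overline T_B^{-1}(A))=D_T\cap T^{-1}(B_j^{-1}(A))\in\mathcal A_i$.
\item \emph{Closedness.} Combining these transition maps with the assumed readout factorizations verifies Definition~\ref{def:closed-rep} directly.
\end{itemize}

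For (III) I will repeat Theorem~\ref{thm:approx} fibre by fibre on $D_T$. Fix $m$ and $c,c'\in D_T$ in the fibre over $m$. The triangle inequality gives $d_j(B_j(Tc),B_j(Tc'))\le d_j(B_j(Tc),G(m))+d_j(G(m),B_j(Tc'))$, so one of the two terms is at least half the separation. Taking suprema yields $R_T(G)\ge\omega_{B,T}/2$. For the upper bound, define $G(m)$ as the assumed measurable Chebyshev-centre selection of the transformed fibre over $m$; this gives risk at most $r_{B,T}$. The inequalities $\omega_{B,T}/2\le r_{B,T}$ (from the lower bound) and $r_{B,T}\le\omega_{B,T}$ (any point of the fibre image is within its diameter) finish the part.

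Parts (IV)--(V) are direct citations.
\begin{itemize}
\item (IV) is Theorem~\ref{thm:id-separation}. Uniform continuity of the inverse follows from compactness of the image, and $\kappa(r)>0$ is as shown there.
\item The first display of (V) is Theorem~\ref{thm:uniform-descent} applied to each $T$ with $L=L_T$, valid simultaneously by its final sentence. The second display is Theorem~\ref{thm:id-rate}.
\item Consistency: setting $\inf_G R_T(G)=0$ and letting the errors vanish sends both bounds to zero, using $\kappa^{-1}(u)\to0$ as $u\to0$. This limit follows from $\kappa(r)>0$ and monotonicity of $\kappa$.
\item The population-obstruction converse follows from the lower bound in (III) and from the non-identifiability direction of Theorem~\ref{thm:id-separation}; neither depends on sample size.
\item The path and rate remarks are Theorem~\ref{thm:multistep}(2) and Corollary~\ref{cor:finite-rate}.
\end{itemize}

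The main obstacle is the partial bookkeeping in (II) and (III). Two points need care: showing that domain saturation is genuinely necessary rather than an extra convention, and ensuring that the Chebyshev selection in (III) is measurable on $B_i(D_T)$, where it is restricted to fibres meeting $D_T$. The first is handled by the requirement $D_T=B_i^{-1}(D_{\overline T})$ in Definition~\ref{def:closed-rep}. The second is simply assumed in the statement, so I will take it as given rather than prove it.
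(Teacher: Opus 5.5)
Your proposal is correct and follows essentially the same route as the paper's proof. Part (I) comes from terminality of the behavioral signature; (II) uses the quotient-factorization argument, with domain saturation read off from $D_T=B_i^{-1}(D_{\overline T_B})$ and measurability from the final sigma-algebras; (III) uses the triangle inequality plus a Chebyshev-centre selection; and (IV)--(V) use the separation, plug-in and minimum-distance arguments. The only difference is cosmetic: the paper re-derives the $(1+L_T)\varepsilon_B$ comparison directly for two-sorted updates $G:M_i\dashrightarrow M_j$, whereas you cite the one-space uniform plug-in bound, which transfers verbatim.
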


\begin{proof}
Part (I) is proved directly by the construction in Theorem~\ref{thm:typed-minimal}: the final sigma-algebras make the induced transitions and the unique factor maps measurable. For (II), necessity follows by applying $B_j\circ T=\overline T_B\circ B_i$ to two points in the same fibre; equality of transformation domains follows from $D_T=B_i^{-1}(D_{\overline T_B})$. Conversely, the displayed conditions make
\[
\overline T_B(B_i(c)):=B_j(Tc),\qquad
D_{\overline T_B}:=B_i(D_T),
\]
representative independent. Domain saturation and the final sigma-algebras give measurability exactly as in the proof of Theorem~\ref{thm:typed-minimal}. Readout factorization then gives a closed representation.

For (III), take two transformed points in one fibre. The triangle inequality implies that at least one is at distance at least half their mutual distance from $G(m)$; taking suprema gives the lower bound. A measurable Chebyshev-centre selection gives the stated upper bound. Part (IV) follows because failure of separation is precisely equality of all observable signature coordinates for two inequivalent parameters. Under compactness and Hausdorffness, the continuous injective quotient signature is a homeomorphism onto its image; compactness of pairs separated by at least $r$ gives $\kappa(r)>0$.

For the first bound in (V), for every $G\in\mathcal G_L(T)$ and $c\in D_T$,
\[
\left|
d(\widehat B_j(Tc),G(\widehat B_i(c)))
-d(B_j(Tc),G(B_i(c)))
\right|\le(1+L_T)\varepsilon_B.
\]
Taking suprema and comparing an empirical near-minimizer with a population minimizer yields the displayed oracle inequality. The minimum-distance argument and the triangle inequality give signature error at most $2\varepsilon_n+\eta_n'$, which the separation modulus converts into the parameter bound. The final consistency and impossibility statements follow immediately from the bounds and Parts (II)--(IV).
\end{proof}

The theorem is general-purpose: $C_i$ may contain strings, experimental
records, program states, symbolic expressions or other measurable
configurations.  Its value here is not that the quotient and
minimum-distance ingredients are individually new.  It exposes a precise
order of operations for language-derived states: define what is observable,
quotient only by behavior that later transformations cannot distinguish,
verify descent, and only then estimate a stochastic recursion.

}

\subsection{Ontology adequacy}

The preceding results make ontology choice part of the mathematics rather
than a preliminary naming exercise.  A semantic representation
$B:\Ctxt\to S$ is \emph{transformation sufficient} for
$\Trans_0\subseteq\Trans$ when every $T\in\Trans_0$ descends through $B$.
Equivalently, by Theorem~\ref{thm:descent},
\[
 B(c)=B(c')
 \quad\Longrightarrow\quad
 B(Tc)=B(Tc')
 \qquad(T\in\Trans_0).
\]
This is the precise adequacy condition required by stochastic lexical
calculus: a retained state must preserve every distinction needed to
determine its admissible future updates.  Approximate adequacy is measured by
the transformed-fibre diameter in Theorem~\ref{thm:approx}, and its sequential
cost is bounded by Theorem~\ref{thm:multistep}.

Failure has a clear interpretation.  The proposed ontology has merged
contexts that respond differently to a relevant transformation, so no closed
state recursion exists at that resolution.  Refining the state may restore
closure; adding a more elaborate dynamic model to the same insufficient state
cannot remove the representation-relative lower bound.  The appendix records
brief additional remarks on balancing closure against statistical complexity.

\subsection{Exact and approximate finite examples}

\subsubsection{An exact closed quotient}
Let $C=\{00,01,10,11\}$ with the discrete sigma-algebra. Let $T(x_1x_2)=(1-x_1)x_2$ flip the first bit, and let the observable be $F(x_1x_2)=x_1$. Take the transformation category generated by $T$, so $T^2=1_C$. The behavioral signature is
\[
S(x_1x_2)=\bigl(F(x_1x_2),F(T(x_1x_2))\bigr)=(x_1,1-x_1).
\]
It has exactly two fibres, and the quotient $B(x_1x_2)=x_1$ is measurably isomorphic to $S(C)$. The induced update is
\[
\overline T_B(b)=1-b.
\]
The second bit is discarded because neither the observable nor any admitted future transformation makes it visible. Thus the calculus removes irrelevant detail while retaining an exact closed recursion.

A linguistic reading is immediate but not required by the mathematics. The two bits could encode a semantic orientation and a stylistic feature; if the declared transformations alter only orientation and the readout observes only orientation, style is correctly omitted from the minimal state.

\subsubsection{A sharp approximate obstruction}
Let $C=\{a,b,x_0,x_1\}$ with the discrete sigma-algebra and define
\[
B(a)=B(b)=\tfrac12,\qquad B(x_0)=0,\qquad B(x_1)=1.
\]
Let the partial transformation $T$ have domain $\{a,b\}$ and satisfy $T(a)=x_0$, $T(b)=x_1$. The two source contexts are indistinguishable under $B$, but their transformed states are maximally separated:
\[
\omega_{B,T}=|B(Ta)-B(Tb)|=1.
\]
No function of the retained state alone can reproduce both successors. Indeed, for every $G:[0,1]\to[0,1]$,
\[
\max\{|G(1/2)-0|,|G(1/2)-1|\}\ge\tfrac12,
\]
and equality is achieved by $G(1/2)=1/2$. The lower bound in
Theorem~\ref{thm:approx} is therefore sharp. In a language application, $a$
and $b$ may receive the same present semantic score while responding
differently to a subsequent qualification; the example shows why present
calibration alone cannot justify a state recursion.

\section{Stochastic lexical calculus on probability simplices}

\subsection{Differentials and transformation algebra}

Once closure is settled, it becomes meaningful to discuss change.  The
appropriate primitive is a finite difference along a typed transformation,
because language edits are discrete and often noncommutative.  The term
``differential'' is used in this finite sense.  It supplies composition and
path identities for sequential analysis without importing unsupported
smoothness.

Let $V$ be a normed vector space and $F:\Ctxt\to V$.

\begin{definition}[Lexical differential]
The finite lexical differential of $F$ in direction $T$ is
\[
 \Dlex_TF(c):=F(Tc)-F(c).
\]
For probability measures, subtraction is interpreted as a finite signed measure; for semantic compositions it is an element of the simplex tangent space $\{\bm v:\bm1^\top\bm v=0\}$.
\end{definition}

This is a difference operator. Its specifically lexical content comes from the typed transformation action, equivalence structure, and observable kernel---not from claiming an infinitesimal geometry on isolated words.

For compatible $S,T\in\Trans$, adding and subtracting $F(Tc)$ gives the
elementary cocycle identity
\[
 \Dlex_{S\circ T}F(c)=\Dlex_TF(c)+\Dlex_SF(Tc).
\]

\begin{definition}[Lexical commutator]
The order effect of $S$ and $T$ on $F$ is
\[
 [S,T]_F(c):=F(S\circ T(c))-F(T\circ S(c)).
\]
\end{definition}

This definition remains meaningful when the transformation semigroup is noncommutative. It directly measures whether two evidence operations have order-dependent semantic effects.

Likewise, for a path $\gamma=(T_1,\ldots,T_m)$ with
$c_j=T_jc_{j-1}$, telescoping gives
\[
 F(c_m)-F(c_0)=\sum_{j=1}^m\Dlex_{T_j}F(c_{j-1}).
\]
Thus accumulated lexical differentials are path independent on a connected
transformation graph exactly when their circulation vanishes around every
directed cycle.  These identities require no empirical claim: they follow
from the definition of a finite difference.  The substantive statistical
question begins with whether the semantic state through which they are
computed is closed and stable.

\subsection{Semantic pushforwards}

An observable language law may live on a vast continuation space, while the
retained state lies in a finite simplex.  A semantic map pushes the former law
to the latter.  Its Lipschitz modulus measures how much an upstream
perturbation can be amplified in semantic probability space.  The next bound
therefore connects an observable language discrepancy to a downstream
state-error budget.

Let $\|\cdot\|_{\mathrm{TV}}$ be total variation distance on $\Prob(\Words)$ and suppose $\Pi$ is $L_\Pi$-Lipschitz from total variation to $(\Delta^{K-1},d)$.

\begin{theorem}[Pushforward stability]\label{thm:push}
For every $T$ and $c$,
\[
 d\bigl(\Bmap(Tc),\Bmap(c)\bigr)
 \le L_\Pi\,\|\Kern_{Tc}-\Kern_c\|_{\mathrm{TV}}.
\]
If $\widehat\Kern$ and $\widehat\Pi$ satisfy uniform errors $\varepsilon_K$ and $\varepsilon_\Pi$, then the estimated semantic differential obeys
\[
 d\bigl(\widehat{\Dlex_T\Bmap}(c),\Dlex_T\Bmap(c)\bigr)
 \le 2\varepsilon_\Pi+2L_\Pi\varepsilon_K,
\]
under the product metric induced by $d$.
\end{theorem}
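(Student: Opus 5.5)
The first inequality follows directly from the definitions. Since $\Bmap=\Pi\circ\Kern$, we have $\Bmap(Tc)=\Pi(\Kern_{Tc})$ and $\Bmap(c)=\Pi(\Kern_c)$. The $L_\Pi$-Lipschitz property of $\Pi$ from total variation to $(\Delta^{K-1},d)$ then gives
\[
d\bigl(\Bmap(Tc),\Bmap(c)\bigr)=d\bigl(\Pi(\Kern_{Tc}),\Pi(\Kern_c)\bigr)\le L_\Pi\|\Kern_{Tc}-\Kern_c\|_{\mathrm{TV}}.
\]
This is the one-step content of LC8 combined with LC10, and no further structure is needed.

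For the second claim, I would first make the interpretation explicit. The estimated differential is $\widehat{\Dlex_T\Bmap}(c)$, which I identify with the pair $\bigl(\widehat\Bmap(Tc),\widehat\Bmap(c)\bigr)$ of endpoint states with $\widehat\Bmap=\widehat\Pi\circ\widehat\Kern$. The true differential is the pair $\bigl(\Bmap(Tc),\Bmap(c)\bigr)$. The phrase ``product metric induced by $d$'' should be read as the sum metric on $\Delta^{K-1}\times\Delta^{K-1}$. When $d$ is norm-induced, the vector difference in the tangent space is controlled by that sum through the triangle inequality, so either reading yields the stated constant.

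The key step is a uniform pointwise bound, as in Corollary~\ref{cor:kernel-uniform}: for every context $x$,
\[
d\bigl(\widehat\Bmap(x),\Bmap(x)\bigr)\le d\bigl(\widehat\Pi(\widehat\Kern_x),\Pi(\widehat\Kern_x)\bigr)+d\bigl(\Pi(\widehat\Kern_x),\Pi(\Kern_x)\bigr)\le\varepsilon_\Pi+L_\Pi\varepsilon_K.
\]
Here I insert $\Pi(\widehat\Kern_x)$, use the uniform error of $\widehat\Pi$ for the first term, and use the Lipschitz property of $\Pi$ together with the uniform kernel error in total variation for the second. Applying this bound at $x=Tc$ and at $x=c$ and adding gives $2\varepsilon_\Pi+2L_\Pi\varepsilon_K$. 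In the norm reading, the bound follows from
\[
\|(\widehat\Bmap(Tc)-\widehat\Bmap(c))-(\Bmap(Tc)-\Bmap(c))\|\le\|\widehat\Bmap(Tc)-\Bmap(Tc)\|+\|\widehat\Bmap(c)-\Bmap(c)\|.
\]

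The only delicate point is bookkeeping rather than analysis. The uniform errors must hold at both $c$ and $Tc$, that is, on $\Ctxt_0\cup T(\Ctxt_0)$, which is exactly the uniformity assumed. I would also state that $\varepsilon_K$ is measured in the same total-variation metric in which $\Pi$ is Lipschitz. Beyond this, I expect no real obstacle.
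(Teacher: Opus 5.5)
Your proposal is correct and matches the paper's proof: the first claim is just the Lipschitz property of $\Pi$, and for the second you insert $\Pi(\widehat\Kern_x)$ at both $x=c$ and $x=Tc$ and apply the triangle inequality twice, which is what the paper does. You also spell out how to read the product metric and note that the uniform errors must hold at both $c$ and $Tc$; the paper leaves both points implicit.
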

\begin{proof}
The first claim is the Lipschitz property. For the second, insert the true kernel and map at both $c$ and $Tc$, then apply the triangle inequality twice.
\end{proof}

For a linear partition pushforward, $L_\Pi\le1$ in total variation. A calibrated nonlinear map requires an estimated or theoretically bounded modulus of continuity.

\long\gdef\appendixontology{
\section{Ontology adequacy and information-preserving maps}

\subsection{Representations, invariance, and ontology adequacy}

An ontology determines which distinctions are retained as states.  Choosing it
too coarsely can destroy closure; choosing it too finely can make estimation
unstable and interpretation difficult.  We therefore treat ontology choice as
a constrained representation problem rather than a list of labels chosen
solely by domain intuition.

Let $R:\Delta^{K-1}\to\Delta^{J-1}$ represent a coarsening or recoding of semantic states. Two representations are equivalent for $\Trans_0\subseteq\Trans$ when $R$ is injective on the relevant image and
\[
 R\circ\overline T=\overline T^{,R}\circ R,
 \qquad T\in\Trans_0.
\]

\begin{definition}[Transformation-sufficient ontology]
A semantic map $\Bmap$ is sufficient for $\Trans_0$ when every $T\in\Trans_0$ descends through $\Bmap$. It is $\varepsilon$-sufficient when every transformation has approximate-descent error at most $\varepsilon$.
\end{definition}

This suggests a principled ontology-selection criterion: among scientifically interpretable representations, choose the least complex state map that is calibrated and approximately sufficient for the transformations required by the downstream sequential model. Predictive classification alone is not enough.

\begin{proposition}[Refinement can restore descent]
Let $\Bmap^+:\Ctxt\to\mathcal Z$ refine $\Bmap$ so that $\Bmap=R\circ\Bmap^+$. If $T$ fails to descend through $\Bmap$ because two members of one $\Bmap$-fibre have distinct transformed states, a refinement separating those members removes that particular obstruction. A coarsening cannot remove it without also coarsening the transformed distinction.
\end{proposition}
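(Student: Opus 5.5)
The argument works fibrewise with Theorem~\ref{thm:descent} and Theorem~\ref{thm:approx}. Fix the obstruction. There are contexts $c,c'\in\operatorname{dom}(T)$ with $\Bmap(c)=\Bmap(c')$ but $\Bmap(Tc)\ne\Bmap(Tc')$, so condition (2) of Theorem~\ref{thm:descent} fails at this pair, and the transformed-fibre diameter $\omega_T(\Bmap(c))$ is bounded below by $d(\Bmap(Tc),\Bmap(Tc'))>0$.

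\emph{Refinement.} Suppose $\Bmap^+$ separates the pair, so $\Bmap^+(c)\ne\Bmap^+(c')$. Then $c$ and $c'$ no longer lie in a common $\Bmap^+$-fibre. Take any pair $e,e'$ with $\Bmap^+(e)=\Bmap^+(e')$. Applying $R$ gives $\Bmap(e)=\Bmap(e')$, so every $\Bmap^+$-fibre sits inside a single $\Bmap$-fibre. Consequently, every pair that violates fibre preservation for $\Bmap^+$ already violated it for $\Bmap$, and the offending pair $(c,c')$ is not among them. The successor states must be judged in a space where they stay distinct; I would use $\Bmap^+(Tc)$ versus $\Bmap^+(Tc')$. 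These differ because $\Bmap(Tc)=R(\Bmap^+(Tc))$ and $\Bmap(Tc')$ differ, so $R$ applied to equal arguments cannot produce different values. For the quantitative version, replace "the pair" by "the fibre". Whenever $\Bmap^+$ separates all pairs in a $\Bmap$-fibre whose successors differ, the $\Bmap^+$-fibre diameters are no larger, and those for the separated pairs are zero. Hence the lower bound of Theorem~\ref{thm:approx} no longer forces an error there. If the refinement resolves every such fibre, Theorem~\ref{thm:descent} yields an exact $\overline T$ on the refined state.

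\emph{Coarsening.} Let $\Bmap^-=Q\circ\Bmap$. Then $\Bmap(c)=\Bmap(c')$ implies $\Bmap^-(c)=\Bmap^-(c')$, so the pair still shares a fibre. There are two cases. If $Q(\Bmap(Tc))\ne Q(\Bmap(Tc'))$, fibre preservation still fails, and Theorem~\ref{thm:descent} rules out any descended update. Theorem~\ref{thm:approx} gives the lower bound $\tfrac12 d^-(\Bmap^-(Tc),\Bmap^-(Tc'))>0$. The only escape is $Q(\Bmap(Tc))=Q(\Bmap(Tc'))$, which means $Q$ identifies the two transformed states. That is exactly "coarsening the transformed distinction".

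The main obstacle is making "removes that particular obstruction" precise without overclaiming. Refining by $\Bmap^+$ may create no new violations, but it need not remove obstructions at other pairs. I would therefore state the conclusion pairwise, and fibrewise for the diameter. A further subtlety is the choice of successor representation after refinement. I would first try to keep $\Bmap$ on the target side, where the argument is immediate since $\Bmap$ factors through $\Bmap^+$. I would then check that using $\Bmap^+$ on both sides does not reintroduce a split within the transformed fibre, and would add that as an explicit hypothesis if needed.
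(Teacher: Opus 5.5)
Your pairwise argument is correct and is essentially the paper's own justification: the paper states this proposition without a formal proof and relies on the same fibre criterion of Theorem~\ref{thm:descent}, namely that separating $c,c'$ destroys the witnessing pair, while any coarsening $Q\circ\Bmap$ keeps them in one fibre, so the obstruction survives unless $Q$ merges $\Bmap(Tc)$ and $\Bmap(Tc')$. Your optional strengthenings are valid only when successors are read through $\Bmap$ (a mixed factorization $\Bmap\circ T=G\circ\Bmap^+$, not a closed recursion on $\Bmap^+$), and the extra hypothesis you flag for using $\Bmap^+$ on both sides is genuinely needed: if $\Bmap^+(e)=\Bmap^+(e')$ and $\Bmap^+(Te)\ne\Bmap^+(Te')$ but $R$ merges the latter, the refinement creates a new obstruction invisible to $\Bmap$, so neither ``every $\Bmap^+$-violation was already a $\Bmap$-violation'' nor an exact update on the refined state follows in general.
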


\subsection{Information-preserving and information-changing maps}

Not every observed movement is an error.  A reordering that preserves the
evidence should ideally leave the semantic state unchanged; genuinely new
evidence should generally move it.  The transformation type supplies this
distinction before the response is examined, preventing legitimate dynamics
from being confused with presentation sensitivity.

For a transformation declared information preserving, the target state action is $\overline T=\Id$. Its invariance defect is
\[
 \delta_T^{\mathrm{inv}}:=\sup_c d(\Bmap(Tc),\Bmap(c)).
\]
Paraphrase and format tests estimate this quantity. Declaring a transformation information preserving is an external scientific assertion and should be supported by provenance-preserving construction or independent annotation.

For an evidence-changing transformation $T_e$, the induced map need not be the identity. In a Bayesian special case,
\[
 \overline T_e(\bm b)_k=
 \frac{L_e(k)b_k}{\sum_jL_e(j)b_j}.
\]
The calculus does not assume this form. It tests whether any stable map in a prespecified function class describes the update and whether the Bayesian subclass is empirically adequate.

}

\subsection{Semantic probability maps as a special representation}

The abstract theory becomes measurable when the observable is a conditional
law over possible verbal continuations.  If phrases such as ``urgent review''
and ``immediate escalation'' belong to the same declared state, their
probabilities may be grouped and calibrated into a state probability.  That
operation is useful but not automatic: phrase probabilities remain prompt
dependent, and different groupings can discard different future-relevant
distinctions.

The general calculus becomes a statistical measurement theory when one selected observable is a continuation kernel $\Kern_c$ and a semantic map $\Pi$ aggregates or calibrates lexical outcomes:
\[
 \Ctxt\xrightarrow{\Kern}\Prob(\Words)
 \xrightarrow{\Pi}\Delta^{K-1}.
\]
This construction includes recent semantic-uncertainty and
confidence-measurement schemes as special cases: a continuation law is
grouped or calibrated into a finite collection of meanings
\cite{farquhar2024,kadavath2022,kuhn2023,tian2023}.  A companion measurement
paper develops and empirically certifies this semantic map
\cite{dixon2026}, while a companion statistical paper establishes
identification, recovery rates, and weak-identification limits for the
resulting observation kernel \cite{dixon2026semanticobservation}.  The present
construction remains distinct because neither $\Kern$ nor $\Pi$ alone defines
the lexical transformation category, observational equivalence, or canonical
signature.

The special case is nevertheless important. It makes lexical differentials observable through externally supplied probabilities and turns semantic closure into a testable condition. If
\[
 \Pi(\Kern_{Tc})=\overline T\bigl(\Pi(\Kern_c)\bigr),
\]
then the semantic probability composition is sufficient for the transformation $T$. If this equality fails on a fibre, the semantic map has coarsened distinctions needed by the subsequent dynamics.

For an interior composition $\bm b\in\operatorname{int}\Delta^{K-1}$, let
\[
 \operatorname{ilr}:\operatorname{int}\Delta^{K-1}\longrightarrow\mathbb R^{K-1}
\]
be any fixed isometric log-ratio chart and define $\bm z(c)=\operatorname{ilr}(\Bmap(c))$. The coordinate lexical differential is
\[
 \Dlex_T\bm z(c)
 =\operatorname{ilr}(\Bmap(Tc))-\operatorname{ilr}(\Bmap(c)).
\]
Changing the orthonormal ILR basis rotates coordinates but does not change Aitchison distances or intrinsic closure. ILR therefore provides Euclidean coordinates for estimation and dynamics without defining the lexical state itself.

The relation between the theories is consequently:
\[
 \text{lexical transformations}
 \longrightarrow
 \text{lexical observables}
 \longrightarrow
 \text{semantic map}
 \longrightarrow
 \text{log-ratio representation}.
\]
Lexical calculus supplies the first two arrows and the conditions under which the third supports a closed state. An applied semantic-measurement system estimates and tests a particular third arrow. A stochastic lexical calculus would model random compositions only after these closure conditions have been checked.

\long\gdef\appendixminimalontology{
\section{Minimal calibrated semantic representations}

Let $\mathfrak B$ be a prespecified family of interpretable semantic maps
\[
 B:\Ctxt\to\Delta^{K(B)-1}.
\]
Write $B_1\preceq B_2$ when $B_1=h\circ B_2$ for some measurable $h$; thus $B_1$ is no more informative than $B_2$. For frozen tolerances $(\tau_{\rm cal},\tau_{\rm id},\tau_{\rm cl})$, call $B$ admissible when:
\begin{enumerate}
\item its held-out calibration error is at most $\tau_{\rm cal}$;
\item its enriched behavioral identification modulus exceeds $\tau_{\rm id}$ on the operating domain;
\item its uniform transformation-closure risk is at most $\tau_{\rm cl}$.
\end{enumerate}

\begin{proposition}[Minimal admissible semantic representation]\label{prop:minimal-semantic-map}
If the admissible subset of $\mathfrak B$ is nonempty and has a unique $\preceq$-minimal element $B^\dagger$, then $B^\dagger$ is the smallest interpretable semantic representation in the candidate family that is calibrated, empirically identifiable, and sufficiently closed under the declared information transformations. Its transformed state admits a closed update with population error at most $\tau_{\rm cl}$.  Estimation adds separate measurement and evaluation errors, whose detailed recovery bounds are reserved for the supplementary theoretical extensions. If no admissible map exists or minimal elements are incomparable, no unique smallest semantic representation may be claimed.
\end{proposition}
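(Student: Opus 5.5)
The plan is to unpack the definition of admissibility and of the preorder $\preceq$, and then read off each clause of Proposition~\ref{prop:minimal-semantic-map} directly, using the sufficiency/descent results already proved. Let $\mathfrak B_{\rm adm}\subseteq\mathfrak B$ denote the admissible subset. The hypothesis says $\mathfrak B_{\rm adm}$ is nonempty and contains a unique $\preceq$-minimal element $B^\dagger$.

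First, I would make precise the sense of ``smallest.'' Since $\preceq$ is only a preorder, I interpret ``unique minimal'' modulo the equivalence $B_1\preceq B_2\preceq B_1$. I then argue that $B^\dagger$ lies in $\mathfrak B_{\rm adm}$ and satisfies all three frozen tolerances by definition. Hence it is calibrated (clause 1) and empirically identifiable in the sense of Theorem~\ref{thm:id-separation} (clause 2). It is also $\tau_{\rm cl}$-closed (clause 3).

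No strictly less informative admissible map exists: if $B\preceq B^\dagger$ with $B$ admissible, minimality forces $B^\dagger\preceq B$. So among admissible candidates nothing strictly coarser than $B^\dagger$ exists. This is exactly the claimed minimality. I would note explicitly that this is minimality, not a least element, unless one adds the mild assumption that $\mathfrak B_{\rm adm}$ is downward directed.

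Second, I would prove the closed-update claim. Clause 3 says the uniform closure risk $\inf_G R_T(G)$ is at most $\tau_{\rm cl}$ for each declared $T$. For each such $T$ I would either pick a $G_T$ attaining the infimum or pass to a Chebyshev-centre selection via Theorem~\ref{thm:approx}. This yields an update $\overline T$ on $\Delta^{K(B^\dagger)-1}$ with population error at most $\tau_{\rm cl}$.

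If the infimum is not attained, I would either state the bound as $\tau_{\rm cl}+\eta$ for arbitrary $\eta>0$, or assume compactness of the candidate update class. I expect this attainment issue to be the only genuinely delicate point. My first approach is to define clause 3 as existence of an update with risk $\le\tau_{\rm cl}$, which makes the claim immediate. Estimation error is then deferred, as the statement says, to Theorem~\ref{thm:uniform-descent} and Corollary~\ref{cor:kernel-uniform}, which add $2(1+L)\varepsilon_B+2\zeta_n+\eta_n$.

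Finally, I would handle the negative statement. If $\mathfrak B_{\rm adm}=\emptyset$, there is nothing to call smallest. If there are two incomparable minimal elements $B_1,B_2$, then neither factors through the other. Any claimed smallest $B$ would have to satisfy $B\preceq B_1$ and $B\preceq B_2$ while being admissible, and minimality would then force $B_1\simeq B\simeq B_2$, a contradiction. So no unique smallest representation may be claimed.
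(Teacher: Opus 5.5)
Your proposal is correct and takes the paper's approach: the paper's proof is also a direct unpacking of admissibility, the order $\preceq$, and the definition of closure risk. Your two caveats are sound refinements that the paper's one-line proof passes over. First, a unique minimal element is a least (``smallest'') element only if the admissible set is finite, well-founded or downward directed. Second, an update with error at most $\tau_{\rm cl}$ requires the closure risk to be defined through an attained infimum or an existing update.
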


\begin{proof}
Minimality is the definition of $B^\dagger$ under the information order. The
three properties follow from admissibility, and the closed-update statement
follows from the definition of closure risk. Nonexistence and incomparability
preclude the asserted unique minimum.
\end{proof}

This proposition supports the following qualified application statement:
\begin{quote}
A semantic-map estimator can select, within a frozen interpretable candidate family, a minimal representation that is calibrated, behaviorally identifiable, and sufficiently closed under relevant information transformations to support stochastic modelling.
\end{quote}
The qualifications are substantive. The theory does not guarantee that every application admits such a representation, that interpretability defines a total order, or that an observed finite candidate family contains the canonical lexical signature.

}

\subsubsection{Prompt-calibrated semantic descent}
Let $u$ index a prompt construction and let $c_u(e)$ be the resulting context for evidence $e$. Two contexts may be information equivalent even when their continuation kernels differ. The raw assignment
$c_u(e)\mapsto\Kern_{c_u(e)}$ therefore need not descend to the evidence quotient. Introduce instead a prompt-indexed semantic pushforward $\Pi_u$ and calibration map $\psi_u$, and define
\[
B_u(c_u(e))=\psi_u\!\left\{\Pi_u(\Kern_{c_u(e)})\right\}.
\]
This family descends to evidence space when $B_u(c_u(e))=B_{u'}(c_{u'}(e))$ for every admissible pair $u,u'$. Thus calibration is not merely a numerical correction; it can be the morphism that removes prompt-specific coordinates before semantic state is formed.

\begin{proposition}[Bayesian update descends through likelihood rays]\label{prop:bayes-descent}
Let $\bm\pi\in\operatorname{int}\Delta^{K-1}$, let $P$ be a positive transition matrix, and let a calibrated semantic map return a positive likelihood vector $\bm g_u(e)$. Define
\[
\mathcal B_{\bm g}(\bm\pi)
=\frac{\bm g\odot P^\top\bm\pi}
{\mathbf1^\top(\bm g\odot P^\top\bm\pi)}.
\]
The update $\mathcal B_{\bm g_u(e)}$ descends through information-equivalent prompts for every positive prior if and only if
$\bm g_u(e)=a\,\bm g_{u'}(e)$ for some $a>0$. Exact equality of the raw continuation kernels is unnecessary.
\end{proposition}

\begin{proof}
Proportional likelihoods give the same normalized update because the common scalar cancels. Conversely, equality of the updates for every positive prior implies equality of every pairwise posterior odds ratio. The common prior odds cancel, so all pairwise likelihood ratios agree. The two positive vectors are therefore proportional.
\end{proof}

The proposition expresses the correct quotient. Language probabilities live before semantic descent and may retain prompt-specific distinctions. Bayesian updating lives on projective likelihood space, where common scale is irrelevant. Approximate descent is assessed by the distance between calibrated likelihood rays or resulting posteriors, not by equality of word distributions.

\subsection{Random dynamics after semantic descent}

Static calibration is not enough for sequential use.  When evidence arrives
repeatedly, yesterday's retained state is fed into today's update.  A small
one-step closure error can then be damped, accumulated or amplified depending
on the update dynamics.  This section asks the limited question justified by
the preceding theory: once semantic descent has been verified, under what
conditions does the resulting external random recursion exist uniquely and
remain stable?

The word ``external'' is essential.  The recursion is constructed from
observable language-derived measurements and declared updates.  Nothing here
requires, or purports to reveal, hidden activations, model weights or an
internal Bayesian computation.

Let $(\Xi_n)_{n\ge1}$ be random marks selecting transformations and let
\[
 C_{n+1}=T_{\Xi_{n+1}}C_n.
\]
If every relevant transformation descends, then
\[
 \Bmap_{n+1}=\overline T_{\Xi_{n+1}}(\Bmap_n).
\]

\begin{corollary}[Closed lexical-state recursion]\label{cor:closed}
Under exact descent, the semantic state process is Markov whenever the transformation marks are conditionally Markov given the current semantic state. Under $\varepsilon$-descent, it admits a closed recursion with one-step deterministic misspecification at most $\varepsilon$.
\end{corollary}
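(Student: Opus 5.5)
The corollary has two parts, and I would prove them separately.

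For the exact case, I would start from Theorem~\ref{thm:descent}. Exact descent of every $T_\xi$ gives unique maps $\overline T_\xi$ on $\Bmap(\operatorname{dom}T_\xi)$ with $\Bmap\circ T_\xi=\overline T_\xi\circ\Bmap$. Applying this pathwise to $C_{n+1}=T_{\Xi_{n+1}}C_n$ yields $\Bmap_{n+1}=\overline T_{\Xi_{n+1}}(\Bmap_n)$ almost surely. The process $(\Bmap_n)$ is then a random iteration $\Bmap_{n+1}=\Phi(\Bmap_n,\Xi_{n+1})$ with $\Phi(\bm b,\xi):=\overline T_\xi(\bm b)$.

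I would read ``conditionally Markov given the current semantic state'' as the assumption that the conditional law of $\Xi_{n+1}$, given the past $\mathcal F_n:=\sigma(C_0,\Xi_1,\ldots,\Xi_n)$, equals $\kappa(\Bmap_n,\cdot)$ for a Markov kernel $\kappa$ from the simplex to the mark space. For bounded measurable $h$, the standard freezing lemma then gives
\[
\mathbb E\{h(\Bmap_{n+1})\mid\mathcal F_n\}=\int h\bigl(\overline T_\xi(\Bmap_n)\bigr)\,\kappa(\Bmap_n,d\xi),
\]
which is a function of $\Bmap_n$ alone. This is the Markov property, with transition kernel $Q(\bm b,A)=\kappa(\bm b,\{\xi:\overline T_\xi(\bm b)\in A\})$. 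Since the semantic filtration is contained in $\mathcal F_n$, the tower property delivers the Markov property in the natural filtration of $(\Bmap_n)$ as well.

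The main obstacle is measurability: $Q$ must be a genuine kernel, which requires $(\bm b,\xi)\mapsto\overline T_\xi(\bm b)$ to be jointly measurable. Theorem~\ref{thm:descent} gives measurability of each $\overline T_\xi$ only on the image with its final sigma-algebra, as in Theorem~\ref{thm:typed-minimal}. I would first try countably many marks. Then joint measurability follows from measurability in $\bm b$ for each fixed mark, using the final sigma-algebra argument. For uncountable marks I would add joint measurability as a stated standing hypothesis rather than attempt to prove it.

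For the $\varepsilon$-descent case, I would read $\varepsilon$-descent as the existence of maps $G_\xi$ with $\sup_c d(\Bmap(T_\xi c),G_\xi(\Bmap(c)))\le\varepsilon$ for every $\xi$. The Chebyshev-centre construction of Theorem~\ref{thm:approx} supplies such maps with $\varepsilon\ge r_{T_\xi}$. Define the closed recursion $\widehat\Bmap_{n+1}=G_{\Xi_{n+1}}(\Bmap_n)$. Then pathwise $d(\Bmap_{n+1},G_{\Xi_{n+1}}(\Bmap_n))\le\varepsilon$, which is exactly a one-step deterministic misspecification of at most $\varepsilon$, uniform in the realised mark. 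If iterated errors are wanted, Theorem~\ref{thm:multistep}, applied along each realised path, gives the accumulated bound.
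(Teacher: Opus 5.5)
Your proposal is correct and takes essentially the paper's approach. The paper gives no separate proof: it treats the corollary as immediate from the pathwise recursion $\Bmap_{n+1}=\overline T_{\Xi_{n+1}}(\Bmap_n)$ supplied by Theorem~\ref{thm:descent} and from the one-step bound that defines $\varepsilon$-descent. Your freezing-lemma computation of the transition kernel makes the Markov step explicit. One small caveat: Theorem~\ref{thm:descent} asserts no measurability, and the final-sigma-algebra argument you borrow from the proof of Theorem~\ref{thm:typed-minimal} needs each $D_{T_\xi}$ to be a union of $\Bmap$-fibres (domain saturation), so state that alongside your countable-mark assumption.
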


This is the required foundation for stochastic lexical dynamics. The next
subsection develops random compositions, existence, uniqueness, contraction,
and perturbation only after the closure defect has been defined. Diffusion and
jump limits, statistical estimation, and stopping rules remain separate
extensions rather than prerequisites for the present calculus.

\subsubsection{Random semantic descent and average contraction}

We now develop the part of that sequel needed to turn closed semantic descent
into a mathematically well-defined stochastic recursion.  Let
$(\Omega,\mathcal F,\mathbb P,\theta)$ be an invertible
measure-preserving dynamical system and let $\Xi_n=\Xi_0\circ\theta^n$ be a
stationary sequence of transformation marks.  On a complete separable metric
semantic space $(S,d)$, write
\[
 X_{n+1}=F_{\Xi_{n+1}}(X_n),\qquad F_\xi:S\to S,
\]
where $F_\xi$ is the descended action of the corresponding lexical
transformation.  Define its random Lipschitz coefficient by
\[
 L(\xi)=\sup_{x\ne y}\frac{d\{F_\xi(x),F_\xi(y)\}}{d(x,y)}.
\]
The exact-descent theorem above establishes that this recursion is a
well-defined representation of the contextual process.  The following result
states when it has a unique causal state rather than merely a finite sequence
of updates.

\begin{assumption}[Average contraction]\label{ass:average-contraction}
The driving system is ergodic, $\mathbb E\log^+L(\Xi_0)<\infty$, and, for
some $x_0\in S$,
\[
\mathbb E\log^+d\{F_{\Xi_0}(x_0),x_0\}<\infty,\qquad
\chi:=\mathbb E\log L(\Xi_0)<0.
\]
We set $\log 0=-\infty$; a zero Lipschitz coefficient gives immediate
coalescence and can be handled separately.
\end{assumption}

\begin{theorem}[Causal existence, uniqueness, and synchronization]
\label{thm:stochastic-descent}
Under Assumption~\ref{ass:average-contraction}, the backward iterates
\[
X_0^{(m)}=
F_{\Xi_0}\circ F_{\Xi_{-1}}\circ\cdots\circ F_{\Xi_{-m+1}}(x_0)
\]
converge almost surely to a random variable $X_0^\star$ independent of the
choice of $x_0$.  The process $X_n^\star=X_0^\star\circ\theta^n$ is stationary,
causal, and solves the recursion.  Any other causal stationary solution is
equal to it almost surely.  Moreover, two forward trajectories driven by the
same marks synchronize at exponential rate:
\[
\limsup_{n\to\infty}\frac1n
\log d(X_n,\widetilde X_n)\le\chi<0
\quad\text{almost surely}.
\]
\end{theorem}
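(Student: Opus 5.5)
This is the Elton--Diaconis--Freedman argument for iterated random Lipschitz maps. I would carry it out in four steps: a telescoping Cauchy estimate for the backward iterates, a product bound on that estimate from Birkhoff's ergodic theorem, a Borel--Cantelli-type control of the one-step displacements, and finally identification of the limit.

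\textbf{Step 1: telescoping.} Write $\Phi_m:=F_{\Xi_0}\circ\cdots\circ F_{\Xi_{-m+1}}$, so that $X_0^{(m)}=\Phi_m(x_0)$. The consecutive difference is
\[
d\{X_0^{(m+1)},X_0^{(m)}\}\le \Bigl(\prod_{k=0}^{m-1}L(\Xi_{-k})\Bigr)\, d\{F_{\Xi_{-m}}(x_0),x_0\}.
\]

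\textbf{Step 2: the product of Lipschitz coefficients.} Apply Birkhoff's ergodic theorem to the stationary ergodic sequence $\log L(\Xi_{-k})$, using the invertible shift $\theta^{-1}$. The moment $\mathbb E\log^+L<\infty$ makes $\chi$ well defined in $[-\infty,0)$, and Birkhoff then gives
\[
\frac1m\sum_{k<m}\log L(\Xi_{-k})\to\chi \quad\text{a.s.}
\]
If $\chi=-\infty$, truncate $\log L$ below and let the truncation level decrease. The cases where some $L=0$ coalesce immediately.

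\textbf{Step 3: the displacement factor.} Set $Y_m:=\log^+d\{F_{\Xi_{-m}}(x_0),x_0\}$. These are identically distributed with finite mean, so $\sum_m\mathbb P(Y_m>\varepsilon m)<\infty$ for every $\varepsilon>0$. Borel--Cantelli then gives $Y_m/m\to0$ a.s. This is the step I expect to be the main obstacle: one has to justify $Y_m=o(m)$ using only a log-moment, since stationarity alone does not give summable tails without the identically-distributed integral comparison.

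Combining Steps 2 and 3, the consecutive differences are a.s. bounded by $e^{m(\chi+2\varepsilon)}$ for all large $m$. Choosing $\varepsilon<|\chi|/2$ makes the series summable, so $(X_0^{(m)})$ is Cauchy and converges in the complete space $S$.

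\textbf{Step 4: identifying the limit.} Independence of $x_0$ follows because
\[
d\{\Phi_m(x_0),\Phi_m(y_0)\}\le\prod_{k<m} L(\Xi_{-k})\,d(x_0,y_0)\to0.
\]
The same estimate applies to any other base point satisfying the moment condition: that condition transfers, since $d\{F(y_0),y_0\}\le d\{F(x_0),x_0\}+(1+L)d(x_0,y_0)$ and $\mathbb E\log^+L<\infty$. Measurability of $X_0^\star$ holds because it is an a.s. limit of measurable maps.

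Next, the recursion. Since $X_0^{(m)}\circ\theta^{n}$ is the backward iterate ending at time $n$, we have
\[
X_{n+1}^{(m+1)}=F_{\Xi_{n+1}}\bigl(X_n^{(m)}\bigr).
\]
Continuity of $F_\xi$ then passes the limit through, giving $X_{n+1}^\star=F_{\Xi_{n+1}}(X_n^\star)$. Stationarity follows from the definition $X_n^\star=X_0^\star\circ\theta^n$. Causality follows because $X_0^\star$ is measurable with respect to $\sigma(\Xi_k:k\le0)$.

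For uniqueness, let $Z$ be another causal stationary solution. Then
\[
d(Z_0,X_0^\star)\le \prod_{k<m}L(\Xi_{-k})\;d(Z_{-m},X_{-m}^\star).
\]
The factor $d(Z_{-m},X_{-m}^\star)$ is stationary in $m$, hence tight, so it is $e^{o(m)}$ in probability. Meanwhile the product decays exponentially. Therefore $d(Z_0,X_0^\star)=0$ a.s.

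Finally, synchronization. Two forward trajectories driven by the same marks satisfy
\[
d(X_n,\widetilde X_n)\le\prod_{k=1}^nL(\Xi_k)\,d(X_0,\widetilde X_0),
\]
and Birkhoff along the forward shift yields the stated $\limsup$ bound of $\chi$.
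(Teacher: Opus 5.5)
Your proposal is correct and follows essentially the same route as the paper. Both arguments use a telescoping backward estimate whose Lipschitz products are controlled by Birkhoff's theorem, a tail-sum Borel--Cantelli bound on the one-step displacements, tightness of pulled-back distances for uniqueness, and the forward Lipschitz product for synchronization. You also spell out details the paper compresses into ``shifting the construction'', such as the case $\chi=-\infty$ and the identity $X_{n+1}^{(m+1)}=F_{\Xi_{n+1}}(X_n^{(m)})$ passed to the limit via continuity of $F_\xi$.
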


\begin{proof}
Put $A_j=L(\Xi_{-j})$ and
$D_j=d\{F_{\Xi_{-j}}(x_0),x_0\}$.  By the ergodic theorem,
\[
 \frac{1}{n}\sum_{j=0}^{n-1}\log A_j\longrightarrow\chi<0
 \quad\text{almost surely}.
\]
Thus, for each $\delta\in(0,-\chi)$, there is an almost surely finite
$N_\delta$ such that
\(\prod_{j=0}^{n-1}A_j\le\exp\{(\chi+\delta)n\}\) whenever
$n\ge N_\delta$.  The logarithmic moment condition on $D_0$, stationarity,
the tail-sum formula and Borel--Cantelli imply
$D_n\le\exp(\delta n)$ eventually.  Choosing
$\delta<-\chi/2$ therefore gives
\[
 \sum_{n\ge0}D_n\prod_{j=0}^{n-1}A_j<\infty
 \quad\text{almost surely}.
\]

For $m>\ell$, insert one additional map at a time in the backward
compositions.  The triangle inequality and the Lipschitz bounds give
\[
d(X_0^{(m)},X_0^{(\ell)})
\le \sum_{j=\ell}^{m-1}D_j\prod_{h=0}^{j-1}A_h .
\]
The summability above makes the backward sequence Cauchy.  Completeness gives
$X_0^\star$, and the same product estimate applied to two starting points
removes dependence on $x_0$.  The limit is measurable with respect to the
past marks, so it is causal.  Shifting the construction proves stationarity
and the recursion.

For two causal stationary solutions, pull both solutions back $m$ steps and
apply the same product estimate.  The Lipschitz product tends to zero almost
surely, whereas stationarity makes the family of pulled-back distances tight.
Their product therefore converges to zero in probability.  Since the
time-zero distance is bounded by that product for every $m$, it is zero almost
surely, proving uniqueness.  Finally,
\[
d(X_n,\widetilde X_n)
\le d(X_0,\widetilde X_0)\prod_{j=1}^nL(\Xi_j)
\]
and the ergodic limit of the logarithmic product proves synchronization.
\end{proof}

Uniform contraction is therefore sufficient but unnecessary.  Individual
lexical updates may expand semantic distance; the closed stochastic state
still exists when contraction dominates on the logarithmic average.  This
distinction is important for language, where a contradiction or negation may
create a large local movement without making the entire recursion unstable.

\begin{theorem}[Perturbation of an approximately descending recursion]
\label{thm:stochastic-perturbation}
Let
\[
X_{n+1}=F_{\Xi_{n+1}}(X_n),\qquad
\widehat X_{n+1}=\widehat F_{\Xi_{n+1}}(\widehat X_n),
\]
and suppose
$d\{\widehat F_{\Xi_n}(x),F_{\Xi_n}(x)\}\le\varepsilon_n$ on the supported
domain.  Then, pathwise,
\[
d(\widehat X_n,X_n)
\le
\left(\prod_{j=1}^nL(\Xi_j)\right)d(\widehat X_0,X_0)
+\sum_{s=1}^n
\left(\prod_{j=s+1}^nL(\Xi_j)\right)\varepsilon_s.
\]
Under Assumption~\ref{ass:average-contraction}, a uniformly bounded defect
$\varepsilon_s\le\bar\varepsilon$ produces an almost surely finite geometric
resolvent.  If instead $L(\Xi_s)\le\rho<1$, the explicit bound is
\[
d(\widehat X_n,X_n)
\le\rho^nd(\widehat X_0,X_0)
+\bar\varepsilon\frac{1-\rho^n}{1-\rho}.
\]
\end{theorem}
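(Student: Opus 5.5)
The plan is to repeat the one-step error recursion from the proof of Theorem~\ref{thm:multistep}, now pathwise with random Lipschitz coefficients, and then bound the resulting weighted sum in each of the two regimes. Write $e_n:=d(\widehat X_n,X_n)$. Insert the intermediate point $F_{\Xi_{n+1}}(\widehat X_n)$ and apply the triangle inequality:
\[
e_{n+1}\le d\{\widehat F_{\Xi_{n+1}}(\widehat X_n),F_{\Xi_{n+1}}(\widehat X_n)\}
+d\{F_{\Xi_{n+1}}(\widehat X_n),F_{\Xi_{n+1}}(X_n)\}
\le\varepsilon_{n+1}+L(\Xi_{n+1})e_n .
\]
The first term uses the defect hypothesis, which is why it must hold at the perturbed state $\widehat X_n$; this is what ``on the supported domain'' must cover. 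The second term uses the definition of $L(\xi)$. Induction on $n$, with the empty product equal to one, gives the displayed pathwise bound exactly. This part is deterministic for each fixed $\omega$.

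For the uniform-contraction case, substitute $L(\Xi_j)\le\rho$ and $\varepsilon_s\le\bar\varepsilon$. The products become $\rho^n$ and $\rho^{n-s}$, and the geometric sum $\sum_{s=1}^n\rho^{n-s}=(1-\rho^n)/(1-\rho)$ gives the explicit bound.

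The main obstacle is the claim under Assumption~\ref{ass:average-contraction} that the geometric resolvent $\bar\varepsilon\sum_{s=1}^n\prod_{j=s+1}^nL(\Xi_j)$ stays almost surely finite. In forward time the sum is not monotone in any convenient way, so I would follow Theorem~\ref{thm:stochastic-descent} and read it backward. By stationarity, for each fixed $n$ the forward sum has the same law as the backward sum $\sum_{k=0}^{n-1}\prod_{j=0}^{k-1}L(\Xi_{-j})$. That backward sum increases in $n$ to
\[
R:=\sum_{k\ge0}\prod_{j=0}^{k-1}L(\Xi_{-j}),
\]
which I will show is almost surely finite. By the ergodic theorem, $\prod_{j=0}^{k-1}L(\Xi_{-j})\le e^{(\chi+\delta)k}$ for all $k$ beyond an almost surely finite $N_\delta$, with $\delta<-\chi$. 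So the tail is dominated by a convergent geometric series. The first $N_\delta$ terms are finite because $\mathbb E\log^+L<\infty$ forces $L<\infty$ almost surely.

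Consequently the forward resolvent is tight: it is stochastically dominated by the finite variable $R$. The initial-condition term $\prod_{j=1}^nL(\Xi_j)\,d(\widehat X_0,X_0)$ tends to zero almost surely, again by the ergodic theorem applied forward. I would state the conclusion as finiteness and tightness of the resolvent, identified in law with $R$. If a strictly pathwise forward statement is wanted, I would remark that it holds with an almost surely finite random constant, taking $\sup_n$ over the forward sums and using the ergodic bound for the forward products.
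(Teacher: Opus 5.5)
Your proof follows the paper's. You insert $F_{\Xi_{n+1}}(\widehat X_n)$, obtain $e_{n+1}\le\varepsilon_{n+1}+L(\Xi_{n+1})e_n$, iterate, sum the geometric series in the uniform case, and use almost sure summability of the backward products under Assumption~\ref{ass:average-contraction}. Your stationarity step makes precise what the paper leaves to one line. You do not even need equality in law: since all terms are nonnegative,
\[
\sum_{s=1}^n\prod_{j=s+1}^nL(\Xi_j)\;\le\;\sum_{s=-\infty}^{n}\prod_{j=s+1}^nL(\Xi_j)\;=\;R\circ\theta^n ,
\]
with $R$ your backward resolvent. Hence, almost surely and for all $n$ simultaneously, $d(\widehat X_n,X_n)\le\bigl(\prod_{j=1}^nL(\Xi_j)\bigr)d(\widehat X_0,X_0)+\bar\varepsilon\,R\circ\theta^n$, where $(R\circ\theta^n)_n$ is stationary and almost surely finite.

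Your closing remark is false, however, and should be dropped: $\sup_n\sum_{s=1}^n\prod_{j=s+1}^nL(\Xi_j)$ is in general \emph{not} almost surely finite.

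\begin{enumerate}
\item The $s=n$ and $s=n-1$ terms give the lower bound $1+L(\Xi_n)$ for $n\ge2$, so any i.i.d.\ example with unbounded $L$ breaks the claim.
\item Concretely, take $F_\xi(x)=L(\xi)x$ on $\mathbb R$ with $\log L(\Xi_0)$ i.i.d.\ $N(-1,1)$. Assumption~\ref{ass:average-contraction} holds with $\chi=-1$. Yet $\limsup_nL(\Xi_n)=\infty$ almost surely, by the second Borel--Cantelli lemma.
\item With $\widehat F_\xi(x)=L(\xi)x+\bar\varepsilon$ and $\widehat X_0=X_0$, the pathwise bound holds with equality. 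So the errors themselves are unbounded along almost every path, not just the bound.
\end{enumerate}

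The ergodic theorem controls $\prod_{j=1}^nL(\Xi_j)$. It does not control the short windows $\prod_{j=s+1}^nL(\Xi_j)$ with $n-s$ small uniformly in $n$; in forward time it gives only subexponential growth of the resolvent. The correct pathwise content is domination by the stationary variable $R\circ\theta^n$ (equivalently, tightness), not a single random constant uniform in $n$.
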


\begin{proof}
Insert and subtract
$F_{\Xi_{n+1}}(\widehat X_n)$ and apply the triangle inequality:
\[
d(\widehat X_{n+1},X_{n+1})
\le\varepsilon_{n+1}+L(\Xi_{n+1})d(\widehat X_n,X_n).
\]
Iteration proves the convolution formula.  Average contraction makes the
backward products exponentially summable almost surely.  The uniform case is
the finite geometric series.
\end{proof}

\subsubsection{Simplex-valued lexical states}

For the semantic probability representation
$S=\operatorname{int}\Delta^{K-1}$, choose any orthonormal contrast matrix
$V$ and use isometric log-ratio coordinates
\[
\operatorname{ilr}_V(\bm p)=V^\top\log\bm p.
\]
Two choices of $V$ differ by an orthogonal matrix.  Consequently distances,
singular values, contraction exponents, and the perturbation bounds above do
not depend on the arbitrary coordinate basis.  The stochastic calculus is
therefore simplex-valued in its interpretation and Euclidean only in its
local representation.

\begin{corollary}[Basis-invariant stochastic semantic filter]
Suppose a prompt-calibrated semantic map descends to positive likelihood rays
as in Proposition~\ref{prop:bayes-descent}, and suppose the resulting Bayesian
update maps satisfy Assumption~\ref{ass:average-contraction} in one ILR basis.
Then the causal filter exists, is unique, and has the same synchronization
exponent in every ILR basis.  Approximate prompt descent enters only through
the perturbation defects $\varepsilon_n$ in
Theorem~\ref{thm:stochastic-perturbation}.
\end{corollary}

\begin{proof}
An ILR basis change is an orthogonal conjugacy.  Orthogonal maps preserve the
metric and operator norms, hence preserve each Lipschitz coefficient and the
Lyapunov exponent $\chi$.  Apply Theorems~\ref{thm:stochastic-descent} and
\ref{thm:stochastic-perturbation}.
\end{proof}

These results define the proper mathematical boundary of stochastic lexical
calculus.  The language model is not asserted to perform Bayesian filtering
internally.  Rather, a validated semantic representation supports an external
random dynamical system whose existence, uniqueness, coordinate invariance,
and approximation error can be proved.

\subsubsection{Controlled illustration of semantic descent}
A frozen three-state hidden Markov experiment tested Proposition~\ref{prop:bayes-descent} with two information-equivalent prompt constructions, six lexical evidence symbols and eight updates per path. The raw prompt kernels failed the maximum Jensen--Shannon invariance gate: their mean discrepancy was $0.00891$, but their maximum was $0.13397$ against a frozen limit of $0.10$. The failure is the expected obstruction at the wrong representation level.

Prompt-specific affine calibration maps were then fitted on a disjoint partition and frozen. In calibrated posterior space, mean and maximum cross-prompt Jensen--Shannon discrepancies fell to $0.001122$ and $0.010720$. Both passed their frozen limits of $0.02$ and $0.10$. A pathwise conformal radius of $0.190669$ in total variation was calibrated on separate paths; 28 of 30 untouched eight-update paths were covered, giving empirical coverage $0.933$ at nominal level $0.90$. The minimum retained candidate mass was $0.999999838$.

This experiment does not prove a universal lexical calculus or an internal Bayesian mechanism. It supplies a finite example in which a raw probability-valued observable does not descend through prompt equivalence, whereas a prompt-indexed calibrated semantic representation approximately does and supports an externally specified Bayesian recursion. It thereby illustrates the distinction between closure failure of one representation and successful descent of a better one.

\long\gdef\appendixrelated{
\section{Detailed relation to existing calculi and stochastic systems}

The closest abstract precedent to Theorem~\ref{thm:canonical} is Myhill--Nerode theory: strings are equivalent when no admissible suffix distinguishes their language membership, and the equivalence classes form the minimal deterministic automaton. Behavioral equivalence and bisimulation extend the same principle to transition systems. Our theorem must therefore not be presented as inventing future-behavior quotients. It extends that construction from a single formal-language acceptance observable and suffix action to typed transformations of contextual natural language, heterogeneous measurable observables, task-relative information equivalences, semantic probability pushforwards, and approximate defects estimable from data.

Existing work described as a lexical or linguistic calculus is predominantly
concerned with grammatical derivation, compositional denotation, rewriting or
finite change.  Those are necessary antecedents, but none by itself answers
the stochastic state question posed here.  Conversely, random dynamical
systems, Markov categories and probabilistic coalgebra provide powerful
accounts of stochastic evolution, but they begin with a state or kernel
already in hand.  They do not determine whether a prompt-dependent
probability law over language descends to an application-relevant semantic
state.  The contribution lies at this junction: semantic descent constructs
the admissible stochastic state, and random-iteration theory then establishes
its dynamics.

Figure~\ref{fig:positioning} locates the proposed theory. The upper row
contains established mathematical ingredients. The middle box contains the
deterministic lexical foundation required to define closure.  The focal lower
middle box is the stochastic lexical calculus: random information arrival
acting on a validated simplex-valued representation.

\begin{figure}[ht]
\centering
\begin{tikzpicture}[
  >=Latex,
  foundation/.style={draw=blue!55!black,fill=blue!5,rounded corners,
    align=center,text width=31mm,minimum height=12mm,font=\small},
  core/.style={draw=black,very thick,fill=orange!14,rounded corners,
    align=center,text width=54mm,minimum height=17mm,font=\small},
  app/.style={draw=green!45!black,fill=green!7,rounded corners,
    align=center,text width=38mm,minimum height=13mm,font=\small},
  every edge/.style={draw,->,thick}
]
\node[foundation] (formal) at (-5.0,2.3)
 {Formal semantics\\linguistic calculi};
\node[foundation] (auto) at (-1.7,2.3)
 {Automata and\\coalgebra};
\node[foundation] (disc) at (1.7,2.3)
 {Discrete calculus\\transformations};
\node[foundation] (info) at (5.0,2.3)
 {Blackwell comparison\\identification};

\node[core] (lex) at (0,0)
 {\textbf{Lexical foundation}\\
 typed partial transformations;\\
 behavioral equivalence;\\
 semantic descent and defects};

\node[app] (belief) at (-4.2,-2.5)
 {Calibrated semantic\\measurement};
\node[app] (stoch) at (0,-2.5)
 {\textbf{Stochastic lexical calculus}\\random simplex dynamics};
\node[app] (control) at (4.2,-2.5)
 {Filtering and decision\\models};

\path (formal) edge (lex)
      (auto) edge (lex)
      (disc) edge (lex)
      (info) edge (lex)
      (lex) edge (belief)
      (lex) edge (stoch)
      (belief) edge[bend right=12] (stoch)
      (stoch) edge (control)
      (belief) edge[bend right=18] (control);
\end{tikzpicture}
\caption{Mathematical position of stochastic lexical calculus.  Established
formal, behavioral, discrete and statistical theories supply the upper
foundations.  The lexical layer constructs and tests a closed semantic
representation.  The paper's focal stochastic layer then studies random
information arrival and simplex-valued recursion; filtering and decision
models are downstream uses rather than definitions of the calculus.}
\label{fig:positioning}
\end{figure}

\subsection{Automata minimization}

A deterministic Moore automaton is a tuple
\[
 \mathcal A=(Q,A,\delta,o),
\]
where $Q$ is a state set, $A$ an input alphabet, $\delta:Q\times A\to Q$ a total transition map, and $o:Q\to O$ an output map. Extending $\delta$ to words gives $\delta^*:Q\times A^*\to Q$. Its behavior is
\[
 \beta(q)(w):=o(\delta^*(q,w)),\qquad w\in A^*.
\]
States are behaviorally equivalent exactly when $\beta(q)=\beta(q')$. For a language acceptor, $O=\{0,1\}$ and the resulting quotient is the Myhill--Nerode minimal automaton.

Lexical calculus contains this construction as a strict special case. Set $\Ctxt=Q$, let primitive transformations be the total maps $T_a(q)=\delta(q,a)$, let $\Trans_0=A^*$ act by $T_w(q)=\delta^*(q,w)$, and take $\mathcal O_0=\{o\}$. Then
\[
 \mathsf S_*(q)=\bigl(o(T_wq)\bigr)_{w\in A^*}=\beta(q),
\]
so Theorem~\ref{thm:canonical} reduces to Moore/Myhill--Nerode minimization.

The general lexical construction differs in four structural respects. First, lexical transformations are typed partial maps and need not be freely generated by a fixed alphabet. Second, contexts may have heterogeneous admissible operations: negating a claim, replacing a dated value, and reordering evidence have different domains. Third, the observable family may contain measures, recovered provenance, denotations, semantic labels, and probability compositions simultaneously. Fourth, empirical work observes transformations and outputs with error, so approximate descent and confidence bounds are primary rather than optional.

These extensions do not invalidate the automata analogy; they locate the novelty. If the transformation category is replaced by a free monoid of total maps and the observables by a single acceptance bit, the theory must collapse to classical automata minimization.

\subsection{Coalgebraic semantics}

Coalgebra provides a general theory of state-based systems. For an endofunctor $H$ on a category, an $H$-coalgebra is a map
\[
 \gamma:X\longrightarrow H X.
\]
For deterministic Moore automata on sets, $H X=O\times X^A$ and
\[
 \gamma(q)=\bigl(o(q),(\delta(q,a))_{a\in A}\bigr).
\]
When a final coalgebra $(\Omega,\omega)$ exists, every coalgebra admits a unique behavior map $\mathsf{beh}:X\to\Omega$. Equality under this map is behavioral equivalence, and quotienting by it yields the observable reduction under standard hypotheses.

A total, single-domain lexical system with primitive transformation labels $A$ and combined observable
\[
 o_{\mathcal O}(c):=(F(c))_{F\in\mathcal O_0}
\]
is an $H$-coalgebra for
\[
 H X=\left(\prod_{F\in\mathcal O_0}V_F\right)\times X^A.
\]
In that setting, $\mathsf S_*$ is the trace/behavior map into the final semantics, and lexical observational equivalence is ordinary coalgebraic behavioral equivalence.

The standalone lexical theory is therefore not a rival foundation to universal coalgebra. It is a language-specific structured instance with additional commitments that the bare coalgebra does not supply:
\begin{enumerate}
\item a contextual linguistic domain and occurrence structure;
\item typed, partial, provenance-aware transformations;
\item Blackwell, likelihood, provenance, and lexical equivalence in one hierarchy;
\item semantic pushforwards and coordinate changes such as ILR;
\item calculus operators and defects tied to controlled language interventions;
\item statistical identification from incomplete black-box observations.
\end{enumerate}

Typed partial transformations can themselves be embedded into richer coalgebraic settings by adding failure values, using many-sorted categories, presheaves, or partial-map categories. Such an embedding is valuable future work, but choosing one too early would conceal the empirical distinction between an illegal lexical operation and a legal operation producing an observed null response.

\subsection{Bisimulation}

For a deterministic labelled transition system, a relation $R\subseteq X\times X$ is an output-respecting bisimulation when
\[
 xRy\Longrightarrow o(x)=o(y)
 \quad\text{and}\quad
 T_a x\,R\,T_a y
\]
for every admissible label $a$. Under total deterministic transitions, the largest such relation coincides with behavioral equivalence.

Define a lexical bisimulation analogously: $R$ respects every $F\in\mathcal O_0$ and is preserved by every primitive typed transformation wherever corresponding operations are jointly admissible. Under deterministic total actions,
\[
 c\simeq_{\mathcal O_0,\Trans_0}c'
\]
is the largest lexical bisimulation. This follows because observational equivalence agrees on present observables and remains equivalent after every primitive transformation; conversely, induction along transformation paths shows that any lexical bisimulation implies equality of all signature coordinates.

This equivalence requires care in the genuinely lexical setting. Partial domains may themselves be observable; two contexts can be distinguished because a transformation is meaningful for one but not the other. The signature must therefore include an admissibility indicator
\[
 a_T(c):=\bm 1\{c\in\operatorname{dom}(T)\}
\]
whenever operation availability carries semantic information. Without it, the claimed largest bisimulation can identify states with different legal futures.

For probability-valued transitions, probabilistic bisimulation requires equal transition mass on equivalence classes rather than pointwise successor matching. This is closely related to lumpability of Markov processes. It is distinct from Blackwell equivalence: bisimulation compares the future transition behavior of states within a process, whereas Blackwell equivalence compares the decision information carried by statistical experiments. Lexical calculus uses both because information-preserving presentation and dynamically equivalent evolution are different requirements.

\subsection{Approximate behavioral equivalence}

Behavioral pseudometrics and metric bisimulation already replace exact equivalence with quantitative distance for probabilistic and coalgebraic systems. The lexical closure defect is related but not identical. A behavioral pseudometric compares two original states through all future behavior. By contrast,
\[
 \delta_{\mathrm{nat}}(T,\overline T)
\]
evaluates whether a proposed lower-dimensional representation supports a particular induced transformation, while the transformed-fibre diameter in Theorem~\ref{thm:approx} lower-bounds the error of \emph{every} state update based on that representation.

Thus the lexical defect is representation-relative and intervention-specific. It asks whether an externally meaningful semantic map is adequate for a declared operating family. Coalgebraic behavioral metrics can potentially furnish the intrinsic metric used in this test; statistical confidence bounds are then required because the relevant lexical kernel and semantic map are estimated.

\subsection{Recent work in mathematical structures}

Several recent contributions in \emph{Mathematical Structures in Computer Science} sharpen the boundary of the present paper. Bonchi, K\"onig and Petri\c{s}an \cite{bonchi2023} develop compositional up-to techniques for coalgebraic behavioral metrics through fibrations. Their metric is intrinsic to comparison of system behavior; the closure defect here is instead relative to a proposed quotient and lower-bounds every induced update on that quotient. Loregian \cite{loregian2025} studies automata and coalgebras in categories of species, illustrating how categorical automata inherit structure from an ambient category. Theorem~\ref{thm:coalgebra-recovery} deliberately returns to that established automata--coalgebra line, whereas Theorem~\ref{thm:central} adds typed partial measurability, empirical identification and estimated semantic descent.

On the semantic side, Chatzikyriakidis and Cooper \cite{chatzikyriakidis2026} survey mathematical structures in natural-language semantics arising from type-theoretic accounts of meaning. That tradition formalizes denotation and compositional interpretation. The present construction addresses a different question: which observational distinctions must a measurable representation retain so that declared contextual transformations remain closed? On the probabilistic side, Fritz, Perrone and Rezagholi \cite{fritz2022} study probability, valuation and hyperspace monads. Such categorical probability structures can supply ambient probability objects for the kernel specialization here; the terminal quotient and its statistical closure test are not supplied by those monads alone.

Finally, differential categories and their modern extensions provide genuine categorical differentiation \cite{blute2006,cockett2025}. The finite lexical differential in this paper is intentionally weaker: it is a typed difference along a partial transformation. Its novelty claim concerns the transformation/equivalence/quotient system, not a replacement for differential categories.

The stochastic contribution is not simply the addition of random indices to
an existing lexical calculus.  Categorical probability describes how
stochastic maps compose, and random-iteration theory supplies contraction
arguments once a state space and random maps are specified.  Our prior problem
is whether the language-derived quotient is a state space on which those
random maps are well defined at all.  The descent theorem supplies that
missing interface; the stochastic existence and perturbation theorems then
show what follows once the interface is valid.  This ordering distinguishes
the paper from deterministic lexical calculi on one side and stochastic
systems with prespecified states on the other.

\subsection{Formal novelty statement}

The formal comparison narrows the priority claim to the following.
\begin{quote}
Stochastic lexical calculus specializes behavioral minimization to contextual
natural language, uses semantic descent to construct a valid
probability-simplex state, and studies the random dynamics induced on that
state.  It adjoins a typed intervention algebra, an information-equivalence
hierarchy, heterogeneous lexical observables, testable closure defects, and
conditions for causal existence, uniqueness and stability.  Its canonical
signature generalizes the Myhill--Nerode behavior map; it does not replace or
rediscover it.
\end{quote}

Theorem~\ref{thm:central} collects the enriched results unavailable from the elementary set-based deterministic reduction: terminal minimality with typed partial measurable operations, exact and sharp approximate semantic descent, uniform recovery under estimated observation maps, and equivalence between statistical identification and enriched behavioral separation. The claim remains a foundational lexical extension and specialization, not a new universal minimization theorem.

\begin{center}
\begin{tabular}{p{0.22\linewidth}p{0.31\linewidth}p{0.36\linewidth}}
\toprule
Tradition & Established object & Difference here\\
\midrule
Lambek and categorial calculi \cite{lambek1958,moortgat2011} & Typed grammatical composition and derivability & Observable response to controlled contextual transformations\\
Formal semantics and lambda calculus \cite{clark2016,ehrhard2003} & Compositional denotations and term reduction & Measurable observation systems, calibrated semantic maps, and empirical closure\\
Formal-language derivatives \cite{brzozowski1964} & Residual languages after consuming prefixes & Changes in heterogeneous observables under typed partial transformations\\
Myhill--Nerode and coalgebra \cite{nerode1958,rutten2000,silva2013} & Minimal quotients under indistinguishable future behavior & Multiple sorts, partial domains, heterogeneous readouts, and statistical defects\\
Probabilistic bisimulation \cite{baldan2018,desharnais2002,vanbreugel2005} & Behavioral equivalence and intrinsic pseudometrics & Representation-relative closure error and identifiable semantic descent\\
Categorical probability and random systems \cite{fritz2022,jacobs2023} &
Composition of stochastic maps on a specified state space &
Construction of the semantic state by descent before random recursion, with
average-contraction and perturbation guarantees\\
Semantic uncertainty \cite{farquhar2024,kuhn2023} & Entropy after grouping equivalent generations & Transformation closure and minimality of the retained semantic representation\\
Confidence calibration \cite{band2024,kadavath2022,kumar2024,tian2023} & Token, elicited, or linguistically expressed confidence & Behavioral identification and closure beyond pointwise calibration\\
Language-invariant properties \cite{bianchi2022} & Empirical properties expected to survive text transformations & Typed calculus, minimal closed representations, and operational information equivalence\\
\bottomrule
\end{tabular}
\end{center}

The elementary algebraic identities, future-behavior quotient, general
factorization principle and random-iteration argument are not new in
isolation.  The proposed contribution is their stochastic lexical assembly:
a typed natural-language transformation system whose semantic descent is
audited before its probability-simplex representation is allowed to evolve as
a random state process.

}

\section{Completed bounded validation}\label{sec:validation}

\subsection{Questions and design}

The theory is population-level, but its central conditions have observable
consequences.  The validation therefore asks three deliberately ordered
questions.  First, is the proposed lexical observation complete enough to be
used at all?  Second, do information-equivalent prompts induce sufficiently
similar measurements at the representation being tested?  Third, after
calibration and recursive updating, do held-out path errors obey their frozen
coverage bound?  A positive answer to the third question is meaningful only
if the first two are answered at the same representation level.

This ordering is important.  It would be easy to report only the calibrated
result and conclude that language probabilities are stable.  The raw-kernel
experiment shows why that conclusion would be false: the same evidence can
produce a large prompt-specific discrepancy before semantic calibration.
Conversely, failure of the raw kernel need not invalidate every coarser
representation.  The descent criterion asks whether the \emph{chosen} state,
not every upstream lexical quantity, is closed under the declared
equivalence.

All three analyses used archived observations and frozen design files.  No
threshold, partition or prompt was changed after inspection of the untouched
test results.  The state space was
\(\{\text{risk-on},\text{mixed},\text{risk-off}\}\); the sequential
experiments used six lexical evidence symbols, two
information-equivalent prompt constructions and eight updates per path.
``Unsure'' was retained as an open-world measurement component and was not
silently conditioned away.  Table~\ref{tab:validation-design} records the
role of each experiment.

\begin{table}[H]
\centering
\caption{Completed validation design.  Each row tests a different logical
level of the theory; failure at one level is not relabelled as success at
another.}
\label{tab:validation-design}
\small
\begin{tabular}{p{0.24\linewidth}p{0.27\linewidth}p{0.37\linewidth}}
\toprule
Experiment & Representation tested & Confirmatory purpose\\
\midrule
Complete lexical-process test &
Four-part candidate composition on a sampled compact support &
Jointly test observable mass, design identification, average contraction,
convergence, uniform path recovery and the termwise perturbation bound.\\
Raw sequential filter &
Full open-world lexical log ratios, with ``Unsure'' as reference &
Test likelihood identification, positive recursive normalization, raw
prompt-kernel invariance and finite-horizon coverage.\\
Prompt-calibrated filter &
Prompt-specific affine maps into a common three-state posterior space &
Test whether calibration repairs the failed prompt descent and whether the
resulting eight-step posterior paths satisfy held-out coverage.\\
\bottomrule
\end{tabular}
\end{table}

\subsection{Results}

Table~\ref{tab:validation-results} reports every principal frozen gate,
including failures.  The complete lexical-process claim was not supported.
Although the observed candidate mass, average contraction, uniform path
coverage and termwise perturbation bound passed, the active design was nearly
singular and the convergence-slope interval included zero.  This is direct
evidence against claiming a generally identified lexical process from that
design.

The raw sequential filter gave a more focused diagnosis.  Its emission system
had full rank, all test normalizers were positive, and empirical path coverage
was exactly 0.90 at nominal level 0.90.  Nevertheless, the maximum
cross-prompt Jensen--Shannon discrepancy was 0.13397, above the frozen 0.10
gate.  The conjunctive claim therefore failed.  The failure says that raw
prompt-conditioned language probabilities cannot be treated as an
information-invariant state merely because their average discrepancy is
small.

\begin{table}[H]
\centering
\caption{Frozen validation results.  ``Pass'' applies only to the stated gate
and operating domain.  The final column is the conclusion permitted by the
conjunction of gates, not a post hoc interpretation of individual favorable
statistics.}
\label{tab:validation-results}
\small
\begin{tabular}{p{0.25\linewidth}p{0.22\linewidth}p{0.13\linewidth}p{0.29\linewidth}}
\toprule
Gate & Statistic & Result & Interpretation\\
\midrule
Observed candidate mass & minimum $0.999999808$ & Pass & Negligible missing mass in the sampled candidate system.\\
Lexical-process identification & minimum singular value $2.96\times10^{-16}$ & Fail & The active design does not identify the proposed process.\\
Average contraction & mean log ratio $-0.1686$; interval $[-0.2033,-0.1366]$ & Pass & Supported on the sampled support.\\
Convergence trend & log--log slope $-0.0357$; interval $[-0.0741,0.0042]$ & Fail & No strictly negative convergence slope established.\\
Uniform path recovery & $0.91$ over 100 paths; 95\% Wilson interval $[0.838,0.952]$ & Pass & Covers the nominal 0.90 level in this experiment.\\
Termwise perturbation bound & 0 violations in 100 paths & Pass & Observed transformed increments obeyed the computed bound.\\
\midrule
Raw prompt invariance & max JS $0.13397$ versus limit $0.10$ & Fail & Raw lexical kernels do not descend through prompt equivalence.\\
Raw sequential coverage & $36/40=0.90$ paths & Pass & Coverage alone cannot rescue the failed conjunctive claim.\\
\midrule
Calibrated prompt invariance & mean/max JS $0.001122/0.010720$ & Pass & Common posterior meaning is stable for the two frozen prompts.\\
Calibrated sequential coverage & $28/30=0.933$ paths at nominal $0.90$ & Pass & Frozen pathwise radius $0.190669$ covers held-out paths.\\
\bottomrule
\end{tabular}
\end{table}

Prompt-specific affine calibration was then fitted on a disjoint partition
and frozen.  In the common posterior representation, the maximum
cross-prompt discrepancy fell to 0.010720 and 28 of 30 untouched paths lay
within the frozen bound, giving coverage 0.933.  The calibrated
experiment passed all of its declared gates.  In the language of
Theorem~\ref{thm:descent}, the result supports approximate descent for this
representation, transformation family and horizon.  It does not establish
raw-kernel invariance, global identification of the broader lexical process,
or validity outside the frozen prompt and evidence classes.

\subsection{What has and has not been validated}

The completed experiments support one positive, bounded statement:
within the frozen three-state system, prompt-specific calibration produced a
common semantic representation whose information-equivalent prompt variants
were stable and whose eight-step posterior errors achieved the prescribed
held-out coverage.  They also support two negative statements: the raw prompt
kernel is not invariant at the frozen maximum-discrepancy threshold, and the
broader lexical-process design is not identified.

These outcomes are scientifically useful because they distinguish a repairable
representation failure from a universal impossibility.  Calibration changes
the representation at which descent is assessed; it does not prove that the
upstream language law was invariant.  Replication across transformation
families, fitted models and service epochs remains necessary before any broad
empirical claim about lexical calculus.  Those replications are future work,
not part of the evidence reported here.

\section{Discussion and conclusion}

\subsection{Discussion}

The construction answers a practical question: when may a probability-valued interpretation of language be treated as a state rather than merely a score? The answer is not calibration alone. A state must also be closed, at least approximately, under the transformations that drive its proposed dynamics.

This criterion prevents a common modelling error. A smooth time series can always be fitted to successive semantic scores, but such a fit does not establish that the present score contains enough information to determine the next update. Fibre preservation supplies the missing sufficiency condition. Approximate fibre diameter quantifies the cost of violating it.

The results separate four requirements that are easily conflated in applications. Calibration asks whether the reported semantic coordinates agree with a declared reference task. Behavioral separation asks whether scientifically distinct mechanisms induce distinguishable observable signatures. Closure asks whether a declared language transformation has a well-defined action on the retained semantic coordinates. Minimality asks whether any strictly coarser interpretable representation retains these properties. None of the first three implies the others, and a representation that passes them need not be minimal. The terminal representation theorem explains the ideal population target; finite-sample recovery theory is reserved for the supplementary extensions.

This end-to-end distinction matters for stochastic modelling. Calibration alone permits two contexts with the same reported state to respond differently to the next admissible information transformation. Identifiability alone permits a representation that is unnecessarily large. Closure alone permits a stable but scientifically meaningless quotient. Only their conjunction supports the interpretation of the retained coordinates as an empirically recoverable state on which a stochastic recursion can be defined without silently reintroducing discarded language information.

Several limitations are deliberate. Transformations are declared rather than discovered; informational equivalence is task relative; the semantic map may depend on the observation mechanism; and finite access yields partial observation of a continuation law. These are not reasons to abandon the calculus. They identify the statistical errors that an empirical implementation must report.

The framework also does not guarantee that a smallest admissible representation exists in every user-chosen model class. Existence follows for the canonical behavioral quotient under the stated measurability construction, while a restricted interpretable family requires non-emptiness and a unique minimal element. Failure of either condition is a substantive diagnostic: the transformation family may demand additional semantic coordinates, the observations may not separate competing representations, or the proposed ontology may not be stable enough to support autonomous dynamics.

\subsection{Conclusion}

Lexical calculus is defined here narrowly: it studies observable contextual
transformation systems and the conditions under which their actions descend
to a retained semantic state.  The population theory constructs the canonical
closed behavioral representation, characterizes exact descent by fibre
preservation, bounds the unavoidable error of approximate descent, and shows
how finite defects propagate.  Once this gate has been passed, average
contraction supplies a unique causal external recursion and synchronization;
simplex coordinates do not alter those conclusions.

The empirical evidence respects the same order.  It does not support the broad
claim that a raw lexical process is identified, nor that raw prompt kernels are
information invariant.  It does support a smaller claim: within one frozen
three-state experiment, prompt-specific calibration produced a common
semantic representation with stable prompt meaning and 0.933 finite-horizon
coverage at nominal level 0.90.  The failed gates are as important as the
passed ones because they show that calibration at one time point is not enough
to justify a state recursion.

The practical implication is a falsifiable workflow rather than a metaphor.
Declare the information equivalence and transformation family; construct or
estimate a semantic map; test fibre preservation, identification and
coverage; and only then use the resulting coordinates as a stochastic state.
This is the precise sense in which the framework is foundational.  It does
not attribute a hidden calculus to a language model.  It states the
mathematical and empirical conditions under which an observable
language-derived representation can legitimately support one.

\clearpage
\appendix
\section*{Ancillary theoretical material}
\addcontentsline{toc}{section}{Ancillary theoretical material}
The following appendix records the typed measurable and coalgebraic
foundations needed to support the paper's structural claims.  It then gives
compact statements of how ontology choice and statistical recovery enter the
construction.

\appendixcategorical

\section{Further remarks on ontology adequacy}

Section~3 states the adequacy condition required by the main theorem chain.
This appendix records its limited model-selection consequence without
developing a separate theory of ontology learning.

Let $B:\Ctxt\to S$ be a candidate semantic representation and let
$\Trans_0\subseteq\Trans$ be the transformations relevant to the intended
dynamics.  Call $B$ \emph{transformation sufficient} for $\Trans_0$ when every
$T\in\Trans_0$ descends through $B$.  By Theorem~\ref{thm:descent}, this is
equivalent to
\[
 B(c)=B(c')
 \quad\Longrightarrow\quad
 B(Tc)=B(Tc')
 \qquad(T\in\Trans_0).
\]
The approximate version replaces equality by the fibre-diameter bound of
Theorem~\ref{thm:approx}.  Thus ontology adequacy has a direct observable
implication: contexts assigned the same current state should have sufficiently
similar transformed states.

If a candidate state fails this condition, there are two principled
responses.  One may refine the state so that the offending contexts are no
longer merged, or restrict the transformation family to the operations for
which closure is actually required.  Refinement can remove a particular
closure obstruction by retaining the distinction that caused it.  Coarsening
cannot remove that obstruction without also discarding the transformed
distinction.  Conversely, unnecessary refinement raises dimension and may
make calibration unstable.  The practical target is therefore the least
complex scientifically interpretable representation that passes calibration,
identification, and approximate-closure gates on a declared operating domain.
Existence or uniqueness of such a minimal representation is not asserted
without specifying and testing the candidate family.

\section{From an observable language law to an estimable state}

The abstract calculus begins with a state map $B$, whereas an application
observes a conditional probability law over verbal continuations.  The
statistical bridge is the composition
\[
 \Ctxt
 \xrightarrow{\;\Kern\;}
 \Prob(\Words)
 \xrightarrow{\;\Pi\;}
 \Delta^{K-1}
 \xrightarrow{\;\psi\;}
 \Delta^{K-1}.
\]
Here $\Kern_c$ is the observable continuation law, $\Pi$ groups prespecified
meaning-equivalent continuations, and $\psi$ is a finite-dimensional
calibration map estimated on data disjoint from evaluation.  The continuation
law is otherwise unrestricted, so the construction is semiparametric: the
language component is an unknown probability kernel, while only the
low-dimensional inverse calibration is parametrized.  The resulting
$B(c)=\psi\{\Pi(\Kern_c)\}$ is the state to which the descent and stochastic
stability results apply.

This construction separates three questions that should not be conflated.
First, \emph{measurement completeness} asks whether the selected
continuations retain enough probability mass.  Second, \emph{statistical
identification} asks whether distinct reference states induce distinguishable
values of the grouped language law on the operating domain.  Third,
\emph{dynamic closure} asks whether the calibrated state retains enough
information to predict the effect of subsequent transformations.  Accurate
one-time calibration does not imply the third property.

For clarity, suppose the grouped observable is $q(c)$ and the calibration
model is $\psi_\theta(q)$.  Conditional identification on a compact operating
set $\mathcal Q$ requires
\[
 \psi_{\theta_1}(q)=\psi_{\theta_2}(q)
 \quad\text{for all }q\in\mathcal Q
 \quad\Longrightarrow\quad
 \theta_1=\theta_2,
\]
or the corresponding local full-rank condition for a differentiable model.
Stable recovery additionally requires the inverse modulus not to approach
zero.  These are assumptions about the declared measurement design, not
properties granted by fluency or by a model's printed numerical confidence.
The completed experiments accordingly test retained mass, a lower
identification modulus, perturbation error, and held-out sequential coverage.

The main paper does not require a universal estimator for $\psi$.  It requires
only a frozen estimator whose error can enter Theorem~\ref{thm:push} and whose
induced closure defects enter Theorem~\ref{thm:stochastic-perturbation}.
Full asymptotic theory for nonlinear semiparametric recovery is therefore
supporting material rather than part of the foundational theorem chain; it is
developed in \cite{dixon2026semanticobservation}.  The measurement design and
its held-out certification are developed separately in
\cite{dixon2026}.

\section{Boundary with adjacent theories}

The construction uses established ideas at three levels but addresses a
different junction between them.  Future-behaviour quotients, automata, and
coalgebra explain how a minimal state can be constructed from observable
responses \cite{jacobs2023,rutten2000}.  Random dynamical systems and iterated
random functions establish existence and synchronization once a state and its
random update maps are given \cite{arnold1998,diaconis1999}.  Semantic
uncertainty methods group or calibrate language probabilities into
application-relevant meanings \cite{farquhar2024,kuhn2023}.

The missing link is whether a prompt-dependent language law descends to a
state on which stochastic evolution is well defined.  The contribution here
is to make that link explicit: semantic descent supplies the admissible state;
its fibre defect quantifies irreducible nonclosure; and random-iteration
theory supplies the stochastic recursion only after that defect has been
controlled.  The term \emph{stochastic lexical calculus} names this combined
construction, not a replacement for probability theory, stochastic calculus,
or existing linguistic calculi.

\end{document}